\documentclass{article}

\PassOptionsToPackage{numbers, compress}{natbib}

\usepackage[final]{neurips_2025}


\usepackage{amsmath}
\usepackage{amssymb}
\usepackage{amsthm}
\usepackage[table]{xcolor}
\usepackage{enumitem}
\usepackage[table]{xcolor}
\usepackage{graphicx}
\usepackage{wrapfig}
\usepackage{multirow}
\usepackage{adjustbox}
\usepackage{tcolorbox}

\usepackage[utf8]{inputenc} 
\usepackage[T1]{fontenc}    
\usepackage{hyperref}       
\usepackage{url}            
\usepackage{booktabs}       
\usepackage{amsfonts}       
\usepackage{nicefrac}       
\usepackage{microtype}      
\usepackage{xcolor}         
\usepackage{graphicx}
\usepackage{cleveref}
\newtheorem{theorem}{Theorem}
\newtheorem{prop}{Proposition}

\theoremstyle{definition}

\theoremstyle{remark}
\newtheorem{remark}{Remark}

\newcommand{\mbb}{\mathbb}

\newcommand{\chenyu}[1]{\textcolor{violet}{CW: #1}}

\newcommand{\iconlink}[3]{
  \href{#2}{\raisebox{-0.2ex}{#1}\,\texttt{#3}}\xspace
}
\usepackage{fontawesome5}
\usepackage{xspace}
\newcommand{\projectlinks}[2]{
  \begin{center}
    \iconlink{\faGithub}{https://github.com/ChenyuWang-Monica/REED}{#1}
    \quad
    \iconlink{\faGlobe}{https://chenyuwang-monica.github.io/REED/}{#2}
  \end{center}
}

\title{Learning Diffusion Models with Flexible Representation Guidance}

\author{%
  Chenyu Wang$^1$\thanks{Equal Contribution: \texttt{\{wangchy, caiz428\}@mit.edu}}  ~~~~Cai Zhou$^{1*}$  ~~~Sharut Gupta$^1$  ~~~Zongyu Lin$^2$\\  \textbf{Stefanie Jegelka}$^{13}$  ~~~\textbf{Stephen Bates}$^1$  ~~~\textbf{Tommi Jaakkola}$^1$\\
  $^1$MIT  ~$^2$UCLA  ~$^3$TU Munich
}

\begin{document}

\maketitle
\vspace{-1.0cm}

\projectlinks{github.com/ChenyuWang-Monica/REED}{Project Page}

\begin{abstract}
Diffusion models can be improved with additional guidance towards more effective representations of input. Indeed, prior empirical work has already shown that aligning internal representations of the diffusion model with those of pre-trained models improves generation quality. In this paper, we present a systematic framework for incorporating representation guidance into diffusion models. We provide alternative decompositions of denoising models along with their associated training criteria, where the decompositions determine when and how the auxiliary representations are incorporated. 
Guided by our theoretical insights, we introduce two new strategies for enhancing representation alignment in diffusion models. First, we pair examples with target representations either derived from themselves or arisen from different synthetic modalities, and subsequently learn a joint model over the multimodal pairs. Second, we design an optimal training curriculum that balances representation learning and data generation.
Our experiments across image, protein sequence, and molecule generation tasks demonstrate superior performance as well as accelerated training. In particular, on the class-conditional ImageNet $256\times 256$ benchmark, our guidance results in $23.3$ times faster training than the original SiT-XL as well as four times speedup over the state-of-the-art method REPA~\cite{yu2024representation}. 

\end{abstract}

\section{Introduction}
\label{sec:intro}
\begin{wrapfigure}{r}{0.7\textwidth}
    \centering
    \vspace{-9mm}
    \includegraphics[width=0.7\textwidth]{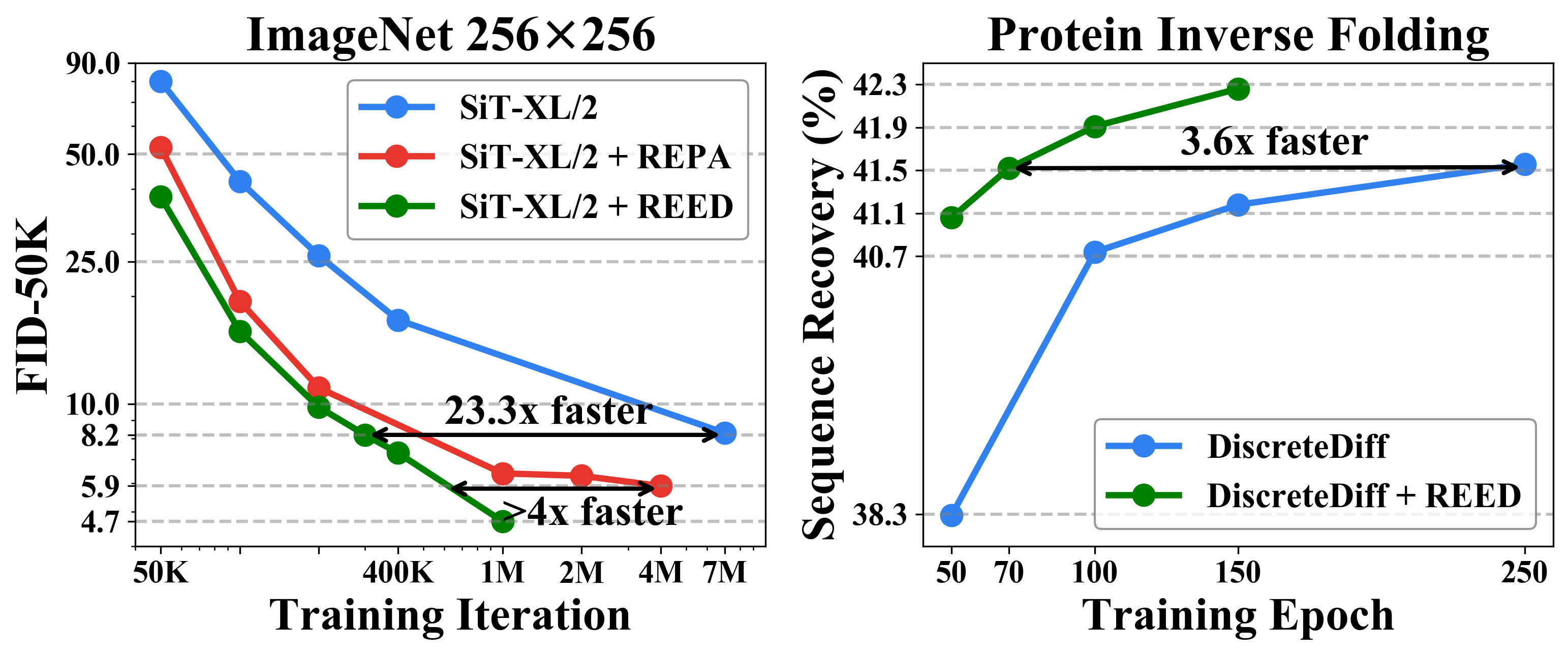}
    \vspace{-5mm}
    \caption{REED achieves superior performance and accelerated training on image generation and protein inverse folding.}
    \vspace{-2mm}
    \label{fig:nspeeed}
\end{wrapfigure}
Diffusion models~\cite{ho2020denoising,smld} have achieved significant empirical success across images, videos, audio, and biomolecules~\cite{stable_diffusion,openai_video,moviegen,lin2024stiv,af3,rfdiffusion}. While their primary objective is to model the data distribution, recent work~\cite{chen2024deconstructing,li2023your,xiang2023denoising} show that diffusion models implicitly learn some discriminative features in their latent representations. Nevertheless, the learned representations still fall behind those of pretrained representation learning models, which potentially limits the performance of generative modeling. Consequently, integrating high-quality representations obtained from other pretraining tasks such as Self-supervised Learning (SSL) unlocks additional possibilities to improve diffusion training. Representations can be integrated through flexible designs, for example, via conditioning~\citep{li2024return, georcg} or alignment~\citep{yu2024representation}.
Specifically, RCG~\cite{li2024return} trains a generative model to produce semantic representations from a self-supervised encoder and conditions image generation on these representations, while REPA~\cite{yu2024representation} aligns diffusion model hidden states with clean representations from pretrained encoders.
Despite the empirical effectiveness of these methods, there is a lack of systematic understanding of the role of representation learning in training diffusion models. 

In this paper, we present a theoretical framework to analyze how high-quality, pretrained representations can enhance diffusion model training. Building on the DDPM~\citep{ho2020denoising} framework, we derive a variational bound on the negative log-likelihood that incorporates the pretrained representations to guide the generation through a probabilistic decomposition of the joint distribution and a weighted aggregation process. This leads to two key components: \textit{representation modeling}, which promotes extracting representations from noisy input, and \textit{data generation}, which fuses representation-conditioned and unconditional generation into a hybrid model governed by a weight schedule (\Cref{subsec:decomposition}). 
The framework generalizes to multi-latent structures, enabling integration of diverse representations (\Cref{subsec:hierarchical}). Our unified perspective subsumes existing approaches like REPA~\citep{yu2024representation} and RCG~\citep{li2024return}, offering theoretical insight into representation-enhanced diffusion models. In particular, REPA emerges as a special case with a linear weight aggregation schedule and a single latent, where the output of the hybrid model is approximated by conditioning solely on representations extracted from noisy inputs to mitigate the training-inference gap (\Cref{subsec:unify}). We also provide a TV distance error bound for the sample distribution of representation-aligned diffusion models (\Cref{subsec:bound}).

Built on the theoretical insights, our general framework allows for a much broader design space for exploiting representation alignment~\citep{yu2024representation} in diffusion models. In particular, we demonstrate the effectiveness of two novel strategies, designated as \textbf{REED} (\textbf{R}epresentation-\textbf{E}nhanced \textbf{E}lucidation of \textbf{D}iffusion). First, we argue that diffusion models can profit from diverse representation sources, hence we incorporate multi-latent representations from different modalities to provide complementary information. Our new multimodal alignment scheme uses synthetic paired data and multimodal pretrained models trained on datasets from different corpora using discriminative or self-supervised tasks to generate the target representations (\Cref{subsec:multimodal}). Second, we analyze optimal training curricula for representation-aligned diffusion models, demonstrating that representation learning is crucial in the early training stage to yield substantial advantages. Consequently, we introduce a novel curriculum that applies the representation learning loss from the outset, while progressively increasing the diffusion loss coefficient from zero to its maximum value as a phase-in protocol (\Cref{subsec:schedule}).
Finally, we propose three practical instantiations of our framework that yield improvements in image, protein, and molecule generation (\Cref{subsec:domain}).

We extensively evaluate our proposed REED across image generation, protein inverse folding, and molecule generation tasks. For the class-conditional ImageNet $256\times 256$ benchmark, aligning SiT-XL/2 with DINOv2~\cite{dinov2} and Qwen2-VL~\cite{qwen2vl} representations using our improved curriculum significantly improves the generation quality measured by FID scores. REED achieves a $23.3 \times$ training speedup over the original SiT-XL, reaching FID=8.2 in only 300K training iterations (without classifier-free guidance~\cite{ho2022classifier}); and a $4 \times$ speedup over REPA~\cite{yu2024representation}, matching its classifier-free guidance performance at 800 epochs with only 200 epochs of training (FID=1.80).
For protein inverse folding, aligning discrete diffusion models trained on the PDB dataset with AlphaFold3~\cite{af3} structure and sequence representations accelerates training by $3.6\times$ and yields significantly superior performance across metrics such as sequence recovery rate, RMSD and pLDDT.
For molecule generation, aligning state-of-the-art geometric equivariant flow matching (SemlaFlow~\cite{semlaflow}) with pretrained Unimol~\cite{zhou2023unimol} representations improves metrics such as atom and molecule stability, validity, energy, and strain on the challenging Geom-Drug~\cite{geomdrug} datasets.

We summarize our main contributions as follows:
\vspace{-1mm}
\begin{itemize}[leftmargin=*]
\vspace{-1mm}
    \item We conduct rigorous analysis for representation-enhanced diffusion model training, 
    delivering theoretical insights into prior empirical methods and informing the design of new strategies.
\vspace{-1mm}
    \item We systematically investigate the design space for representation guidance in training diffusion models, proposing a novel multimodal representation alignment scheme via synthesized paired multimodal data, an effective training curriculum, and practical domain-specific instantiations.
\vspace{-1mm}
    \item Extensive experiments demonstrate that our method consistently accelerates training and improves performance across various diffusion model types and diverse tasks, including image, protein sequence, and molecule generation tasks.
\end{itemize}
\section{Theoretical Characterizations}
\label{sec:theory}

In this section, we introduce a general theoretical framework for representation-enhanced diffusion models. Building on the DDPM framework~\cite{ho2020denoising}, we explore how representations can offer additional information and enhance the generation process.
Notably, due to the known equivalence between DDPM, score matching~\cite{smld}, stochastic interpolants~\cite{stochastic_interpolant}, and flow matching~\cite{flow_matching}, our insights are applicable to a broad class of diffusion model variants. Proofs for this section are detailed in \Cref{appendixsec:proof}.

\subsection{Variational Bound with Decomposed Probabilistic Framework}\label{subsec:decomposition}
\textbf{Problem Formulation.} We consider a generative modeling setting involving discrete-time latent variables $x_{0:T}$ and an auxiliary representation $z$, structured according to the Markov chain:
{\small\begin{equation}
\label{eq:markov}
    x_T - x_{T-1} - ... - x_1 - x_0 - z
\end{equation}}

\vspace{-2mm}
The data distribution follows the factorization $q(x_{0:T},z)=q(x_0) q(x_{1:T}|x_0) q(z|x_0)$, where $q(x_0)$ denotes the empirical data distribution, $q(x_{1:T}|x_0)$ corresponds to the forward diffusion process, and $q(z|x_0)$ represents the mapping of the data to a semantic latent space through a pretrained encoder.

Our goal is to learn a generative model $p_{\theta}(x_0)$, defined implicitly through a joint model $p_{\theta}(x_{0:T},z)$, such that $p_{\theta}(x_0):=\int p_{\theta}(x_{0:T},z) d x_{1:T} dz$. Following the variational framework used in diffusion-based generative models~\citep{ho2020denoising, sohl2015deep} (see \Cref{sec:background} for details), we derive a variational bound on the negative log-likelihood of $p_{\theta}(x_0)$ that incorporates the latent representation $z$ to guide the generation process. The bound $\mathcal{L}_{\text{VB}}^z$ is formalized in~\Cref{eq:bound}:
{\small\begin{equation}\label{eq:bound}
\mathcal{L}_{\text{VB}}^z := \mathbb{E}_{q(x_{0:T},z)}\left[ \log \frac{q(z|x_0)\cdot\prod_{t=1}^Tq(x_{t-1}|x_t,x_0) \cdot q(x_T|x_0)}{p_{\theta}(x_{0:T},z)}\right] \geq -\mathbb{E}_{q(x_0)}\log p_{\theta}(x_0)
\end{equation}}

\textbf{Decomposition of Model Parameterization.} The denominator of $\mathcal{L}_{\text{VB}}^z$, $p_{\theta}(x_{0:T},z)$, defines how the latent variable model is parameterized for the reverse diffusion process. The way in which $z$ is integrated into the reverse diffusion steps--specifically the inference of $x_{t-1}$ from a noisier state $x_t$--is crucial for representation-enhanced diffusion models. When $z$ is introduced earlier in the process (at larger values of $t$), the model benefits from more accurate estimation via $p_{\theta}(x_{t-1}|x_t,z)$ rather than $p_{\theta}(x_{t-1}|x_t)$, since conditioning on $z$ leads to a better approximation of the posterior $q(x_{t-1}|x_t,x_0)$. However, this comes at the cost of a less accurate estimation with $p_{\theta}(z|x_t)$, as the dependency path between $x_t$ and $z$ becomes longer based on the Markov structure, making it harder to approximate the true posterior $q(z|x_0)$. This is formally reflected in the decomposition of joint probability $p(x_{0:T},z)$:

\vspace{-5mm}
{\small\begin{equation}
    \label{eq:decomp}
    p(x_{0:T},z) = p(z|x_t) \cdot \prod_{i=1}^t p(x_{i-1}|x_i,z) \cdot \prod_{i=t+1}^T p(x_{i-1}|x_i) \cdot p(x_T):= p^t(x_{0:T},z), \ \forall \, t=0,1,...,T \footnote{When $t=0$ or $t=T$, the corresponding products reduce to 1.}
\end{equation}}

\begin{remark}
    Each decomposition above is an exact equality for any $t$. The optimal components $p(x_{i-1}|x_i,z)$ and $p(x_{i-1}|x_i)$ matching the forward process $q(x_{0:T},z)$ are consistent across all $t$s. Thus, identical components can be used in every decomposition without loss of accuracy.
\end{remark}

\begin{remark}
    At any timestep $t$, the marginal distribution of $x_t$ remains invariant across all decompositions. Consequently, $z$ can be resampled at any step while preserving the same marginal distribution.
\end{remark}

The above remarks indicate that representation-guided decompositions can be arbitrarily combined without incurring additional loss.
To control this combination, we introduce a novel timestep weighting schedule {\small$\{\alpha_t\}_{t=1}^T$}, where {\small$\alpha_t \geq 0$} are hyperparameters with {\small$\sum_{t=1}^T \alpha_t=1$}. This produces an equivalent formula of the joint distribution, i.e., {\small$p_{\theta}(x_{0:T},z)=\sum_{t=1}^T\alpha_t p^t_{\theta}(x_{0:T},z)$}, which allows us to adjust when and how the representation $z$ is injected into the generative process. Such formula results in a hybrid conditional distribution {\small$\tilde{p}_{\theta}(x_{t-1}|x_t,z;A_t)$} (\Cref{eq:tildep}) governed by the cumulative weights {\small$A_t:=\sum_{i=t}^T \alpha_i$}. $A_t$ decreases with $t$ and serves as a schedule for gradually introducing guidance from $z$ by interpolating between guided and unguided reverse diffusion. By choosing $\alpha_t$, and thus $A_t$, we can flexibly control the influence of $z$ throughout the diffusion process. The formal expression for the resulting $\mathcal{L}_{\text{VB}}^z$ is given below.

\begin{prop}[Decomposition Structure of the Variational Bound]
\label{prop:decomp}
    Let {\small$\{\alpha_t\geq 0\}_{t=1}^T$} be a set of weights summing to one,
    and define {\small$A_t \in [0,1]:=\sum_{i=t}^T \alpha_i$}. Then, the variational bound $\mathcal{L}_{\text{VB}}^z$ in~\Cref{eq:bound} can be written as:
    {\small\begin{align}
        \label{eq:decompbound}
        \mathcal{L}_{\text{VB}}^z &= \sum_{t=1}^T \mathbb{E}_{q(x_{0:T},z)} \Bigg[\log \frac{q(x_{t-1}|x_t,x_0)}{\tilde{p}_{\theta}(x_{t-1}|x_t,z;A_t)}\Bigg] - \sum_{t=1}^T \mathbb{E}_{q(x_{0:T},z)}\Big[ \log Z_t(x_t,z;A_t) \Big] \nonumber\\&+ \sum_{t=1}^T  \alpha_t\mathbb{E}_{q(x_{0:T})} \Big[ D_{\text{KL}} (q(z|x_0)||p_{\theta}(z|x_t)) \Big] 
    \end{align}}
    where {\small$\tilde{p}_{\theta}(x_{t-1}|x_t,z;A_t)$} and its normalization {\small$Z_t(x_t,z;A_t)$} are defined as:
    {\small\begin{align}
        \label{eq:tildep}\tilde{p}_{\theta}(x_{t-1}|x_t,z;A_t)&=\frac{1}{Z(x_t,z)} p_{\theta}^{A_t}(x_{t-1}|x_t,z) p_{\theta}^{1-A_t}(x_{t-1}|x_t)\\
        \label{eq:zt}Z_t(x_t,z;A_t)&=\int p_{\theta}^{A_t}(x_{t-1}|x_t,z) p_{\theta}^{1-A_t}(x_{t-1}|x_t) d x_{t-1}
    \end{align}}
\end{prop}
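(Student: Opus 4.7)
The plan is to unfold the logarithm in \Cref{eq:bound} via the weighted-sum representation $p_{\theta}(x_{0:T},z)=\sum_{s=1}^T \alpha_s p_{\theta}^s(x_{0:T},z)$ and then reorganize the resulting terms so that they are grouped by diffusion timestep $t$ rather than by decomposition index $s$.

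First, I would rewrite $-\log p_{\theta}(x_{0:T},z) = -\sum_{s=1}^T \alpha_s \log p_{\theta}^s(x_{0:T},z)$ (by Remark~1 the components agree on the optimal parameterization so the decompositions are consistent; otherwise Jensen's inequality converts the statement into a valid upper bound). Substituting the factorization from \Cref{eq:decomp}, namely $\log p_{\theta}^s = \log p_\theta(z|x_s) + \sum_{t=1}^s \log p_\theta(x_{t-1}|x_t,z) + \sum_{t=s+1}^T \log p_\theta(x_{t-1}|x_t) + \log p(x_T)$, and pairing each factor with its matching piece in the numerator of \Cref{eq:bound} yields, for each $s$:
\begin{equation*}
    \mathbb{E}_{q}\log\frac{q(z|x_0)}{p_\theta(z|x_s)} + \sum_{t=1}^s \mathbb{E}_{q}\log\frac{q(x_{t-1}|x_t,x_0)}{p_\theta(x_{t-1}|x_t,z)} + \sum_{t=s+1}^T \mathbb{E}_{q}\log\frac{q(x_{t-1}|x_t,x_0)}{p_\theta(x_{t-1}|x_t)}
\end{equation*}
up to the constant boundary $\mathbb{E}_q\log[q(x_T|x_0)/p(x_T)]$. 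The Markov property $q(z|x_0,x_s)=q(z|x_0)$ collapses the first piece into $\mathbb{E}_{q(x_{0:T})}D_{\text{KL}}(q(z|x_0)||p_\theta(z|x_s))$; weighting by $\alpha_s$ and summing over $s$ (renamed to $t$) produces the third sum of the proposition.

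Second, I would swap summation orders in the two inner sums. The factor $p_\theta(x_{t-1}|x_t,z)$ appears in decomposition $s$ exactly when $t \leq s$, so its aggregate weight is $\sum_{s \geq t}\alpha_s = A_t$; symmetrically $p_\theta(x_{t-1}|x_t)$ picks up weight $1 - A_t$. For each fixed $t$, the two log-ratios then fuse into
\begin{equation*}
    \mathbb{E}_q \log\frac{q(x_{t-1}|x_t,x_0)}{p_\theta^{A_t}(x_{t-1}|x_t,z)\, p_\theta^{1-A_t}(x_{t-1}|x_t)}.
\end{equation*}
Multiplying numerator and denominator by $Z_t(x_t,z;A_t)$ from \Cref{eq:zt} separates this into $\mathbb{E}_q\log\frac{q(x_{t-1}|x_t,x_0)}{\tilde{p}_\theta(x_{t-1}|x_t,z;A_t)} - \mathbb{E}_q\log Z_t(x_t,z;A_t)$, matching the first two sums in the proposition.

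The main obstacle I anticipate is the bookkeeping when swapping $\sum_s \alpha_s \sum_{t\leq s}$ into $\sum_t A_t \sum_s$: making sure the powers $A_t$ and $1-A_t$ end up on the correct factors, and that the $-\log Z_t$ term cleanly emerges after introducing the normalizer. A subtler issue is the equality-versus-inequality status of the first step; the stated equality should be read as an exact identity under the consistency of decompositions (Remark~1), while in the non-consistent regime Jensen's inequality still yields a valid upper bound that remains a legitimate training objective.
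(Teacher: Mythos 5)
Your proposal is correct and follows essentially the same route as the paper's proof: aggregate the $T$ exact decompositions of $p_\theta(x_{0:T},z)$ with weights $\alpha_s$, swap the order of summation so the conditional and unconditional factors pick up exponents $A_t$ and $1-A_t$, insert the normalizer $Z_t$ to split off the $-\log Z_t$ term, collapse the $z$-ratio into the weighted KL, and discard the prior-matching boundary term $\mathbb{E}_q\log[q(x_T|x_0)/p_\theta(x_T)]$. Your added remark on the Jensen/consistency status of the first step is a sound clarification of what the paper handles via Remark~1.
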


The first term in~\Cref{eq:decompbound} corresponds to a KL divergence between the data distribution $q(x_{t-1}|x_t,x_0)$ and the hybrid model estimation $\tilde{p}_{\theta}(x_{t-1}|x_t,z;A_t)$, serving as the primary data generation objective.
The second term, involving the expectation of log-normalization constants, can be calculated as the discrepancy between conditional and unconditional model predictions, multiplied by the coefficient $A_t(1-A_t)$ (see \Cref{subsec:theory:lognorm} for details). 
The third term is the KL divergence between the predicted $p_{\theta}(z|x_t)$ and the reference distribution $q(z|x_0)$. Optimizing this term drives $p_{\theta}(z|x_t)$ to better approximate the true (but intractable) $q(z|x_t)$, complementing the second term. Together, these terms facilitate the representation learning procedure where the model uses $x_t$ to recover meaningful semantic information about $z$.

\subsection{Multi-Latent Representation Structure}
\label{subsec:hierarchical}
The expressiveness of latent representations can be significantly enhanced through a multi-latent structure, as shown in prior work such as NVAE~\citep{vahdat2020nvae}. Instead of modeling a single representation $z$, we extend our formulation to a series of representations {\small$\{z_l\}_{l=1}^L$}, each indicating a different level of abstraction. This leads to a similar Markov chain: {\small$x_T - x_{T-1} - ... - x_1 - x_0 - \mathbf{z}^L$}, with {\small$\mathbf{z}^L=\{z_l\}_{l=1}^L$} representing the collection of representations from pretrained models, {\small$\mathbf{z}^L \sim q(\mathbf{z}^L|x_0)$}. Thus, the variational bound {\small$\mathcal{L}_{\text{VB}}^{z}$} extends naturally to {\small$\mathcal{L}_{\text{VB}}^{\mathbf{z}^L}$}, where all instances of $z$ are replaced with {\small$\mathbf{z}^L$}.
Notably, the generation process of {\small$\mathbf{z}^L$} in the third KL divergence term of {\small$\mathcal{L}_{\text{VB}}^{\mathbf{z}^L}$} can be realized with a sequential structure, where each representation level $z_l$ depends on both the noisy input $x_t$ and all previous latents $z_{<l}$\footnote{\Cref{eq:multi} assumes conditional independence of $z_l$'s given $x_0$, i.e., $q(\mathbf{z}^L|x_0) = \prod_l q_l(z_l|x_0)$, though $p_{\theta}$ must still account for their interactions when inferring from noisy $x_t$. For dependent $z_l$'s given $x_0$, a similar equation applies
by replacing $q(z_l|x_0)$ with an appropriate factorization of $q(\mathbf{z}^L|x_0)$ that reflects the specific dependency structure.}:
\vspace{-3.5mm}
{\small\begin{equation}
\label{eq:multi}
    D_{\text{KL}} (q(\mathbf{z}^L|x_0)||p_{\theta}(\mathbf{z}^L|x_t)) = \sum_{l=1}^L \mathbb{E}_{q(z_{<l}|x_0)}\Big[ D_{\text{KL}} (q(z_l|x_0)||p_{\theta}(z_l|x_t, z_{<l}))\Big] 
\end{equation}}

\vspace{-2mm}

This multi-latent formulation enables the model to leverage semantic features across different granularity. Previous hierarchical VAE literature~\citep{sonderby2016ladder,vahdat2020nvae} found that deeper layers tend to encode coarse-grained global structure while shallower layers capture local details.
Our experimental results (\Cref{tab:image_layers}) confirm this pattern, showing that different hierarchical layers contribute complementary information at varying resolutions, improving both expressiveness and controllability in generation.

\subsection{Connecting Existing Approaches to Representation-Enhanced Generation}\label{subsec:unify}

Our theoretical formulation provides a general and flexible framework for representation-enhanced generation by introducing two key components: a customizable weighting schedule $\{\alpha_t\}_{t=1}^T$ and a hierarchical latent structure $\{z_l\}_{l=1}^L$.
This framework recovers several existing approaches as special cases, in particular RCG~\cite{li2024return} and REPA~\cite{yu2024representation}.

When $L=1$, and $\alpha_t=\delta_{t,1}$
, i.e. the representations come in fully at the beginning, we have $A_t=1, \ \forall t$ as a constant. Thus, the hybrid distribution $\tilde{p}_{\theta}^{\text{constant}}(x_{t-1}|x_t,z;A_t)=p_{\theta}(x_{t-1}|x_t,z)$, and the second log-normalization term in \Cref{eq:decompbound} equal to zero. In this case, generation follows a two-stage procedure: first, a latent representation $z$ is sampled from the generative model $p_{\theta}(z)$, approximating the true marginal $q(z)$; second, the sequence $x_{T-1}, ..., x_0$ is generated conditionally on the sampled $\hat{z}\sim p_{\theta}(z)$. This formulation corresponds to the two-stage design in RCG~\citep{li2024return}, where representation modeling and data generation are decoupled. However, such decoupling poses challenges for learning $p_{\theta}(z)$, particularly when $z$ is high-dimensional or encodes rich semantic information. Since $z$ must be inferred independently of $x_t$, the model is unable to leverage context from the current diffusion state during inference, limiting its expressiveness and sample efficiency.

In contrast, when $L=1$ and $\alpha_t=1/T, \ \forall \ t$, the cumulative weights decrease linearly over time, yielding $A_t=1-t/T$. In this setting, the hybrid distribution $\tilde{p}_{\theta}^{\text{linear}}(x_{t-1}|x_t,z;A_t)\propto p_{\theta}(x_{t-1}|x_t,z)^{1-t/T} p_{\theta}(x_{t-1}|x_t)^{t/T}$, reflecting that the influence of the representation on $x$-generation gradually increases as the noise level decreases (i.e., as $t$ becomes smaller). However, this setup leads to a training-inference mismatch. During training, the conditional model $p_{\theta}(x_{t-1}|x_t,z)$ receives the ground truth representation $z \sim q(z|x_0)$ as expectations in \Cref{eq:decompbound} are taken over the data distribution $q(\cdot)$, while during inference this representation is not available. Instead, the model must rely on an estimate based on the current available $x_t$, i.e., $\hat{\mu}_t^z$ as the mean of $p_{\theta}(z|x_t)$. Notably, this estimation becomes more accurate at smaller $t$, when $x_t$ contains less noise, which naturally matches the increased reliance on $z$ in the hybrid distribution $\tilde{p}_{\theta}^{\text{linear}}$ as $t$ decreases.

To address this mismatch, we approximate the hybrid distribution $\tilde{p}_{\theta}^{\text{linear}}(x_{t-1}|x_t,z;A_t)$ using $p_{\theta}(x_{t-1}|x_t, \hat{\mu}_t^z)$ throughout both training and inference. This allows the model to progressively infer $z$ from $x_t$ through $p_{\theta}(z|x_t)$, ensuring that the representation guidance becomes more prominent as the process denoises, in line with the linear schedule (see \Cref{appendixsec:proof} for details). The second log-normalization term in the objective, which empirically is much smaller than the main denoising loss as supported by our experiments (\Cref{appendixsec:results_image}), can be omitted from training--it also suffers from training-inference discrepancies.
Further, under the von Mises–Fisher distribution assumption, the KL divergence in the third term simplifies to the cosine similarity between mean vectors of the respective distributions (see \Cref{appendixsec:proof} for details). Altogether, this results in the REPA framework~\citep{yu2024representation}, which unifies representation extraction and data generation within a single coherent architecture via a representation alignment loss, and different parts of the network specialize at different stages of the generation process. 
Our experiments (\Cref{appendixsec:results_image}) validate the above-mentioned approximations, showing that training with the exact {\small$\mathcal{L}_{\text{VB}}^z$} (\Cref{eq:decomp}) achieves similar performance to REPA, while the latter offers a concise objective, simpler architectures, and lower computational cost.

Our method in \Cref{sec:method} builds upon the REPA setting, adopting the linear weighting schedule and
ensuring end-to-end coherence between training and inference. Importantly, we move beyond the original $L=1$ setting to the more general case of $L>1$, enabling the novel use of richer multimodal representations through synthetic data. Additionally, inspired by our theoretical framework, we propose a better training curriculum that dynamically balances the data generation and representation modeling objectives, further enhancing model performance and flexibility.

\section{REED: Representation-Enhanced Elucidation of Diffusion}
\label{sec:method}

Building on the insights provided by our generic analytic framework, we present two novel strategies: integrating multimodal representations with synthetic data, and designing an improved training curriculum. We further showcase the versatility of our approach by introducing three practical instantiations across diverse domains, spanning image, protein, and molecule generation.

\subsection{Multimodal Representations with Synthetic Data}\label{subsec:multimodal}

As discussed in \Cref{subsec:hierarchical}, the effectiveness of representation-enhanced generation depends on both the quality of pretrained representations and the interplay between hierarchical levels {\small$\{z_l\}_{l=1}^L$}. 
To effectively harness the information from different sources, we propose integrating representation $z^y$ in a secondary modality $y$ alongside the representation $z^x$ of the primary modality $x$. 
Representations $z^y$ that are both helpful for score estimation and complementary to $z^x$ yield the greatest improvement. 

This cross-modal approach enhances generation quality by leveraging distinct yet related information encoded in different perceptual channels. It also allows integration of different embedding spaces (e.g., VAE space for latent diffusion and DINO embedding space for images), leading to an optimal shared solution space. 
Such advantage also aligns with the Platonic representation hypothesis~\citep{huh2024platonic}, which suggests that representations learned on different data and modalities tend to converge toward a unified model of reality. Consequently, with finite available data, incorporating data from other modalities should enhance generation or comprehension in the original modality of interest.

Utilizing the representations $z^x(x)$ and $z^y(y)$ of the data pair $(x,y)$,
the resulting representation generation loss, corresponding to the third term in {\small$\mathcal{L}_{\text{VB}}^{\mathbf{z}^L}$}, is:
\begin{equation}
\label{eq:repgen}
    \mathcal{L}_{\text{repgen}}(\theta,{\psi_x},{\psi_y}) := -\mathbb{E}_{x,y,t,\epsilon} \Big[\lambda_x \text{sim} (z^x(x),\text{proj}^x_{\psi_x}(h_t^{l_x})) + \lambda_y \text{sim} (z^y(y),\text{proj}^y_{\psi_y}(h_t^{l_y})) \Big]
\end{equation}
where {\small$h_t^{l_x}$} and {\small$h_t^{l_y}$} represent the features extracted from the $l_x$-th and $l_y$-th layers of the diffusion transformer encoder {\small$h_t:=f_{\theta}(x_t)$} ($x_t$ denotes the noisy input\footnote{Noisy VAE features for images, or directly noisy data for proteins and molecules.} to the diffusion model at time step $t$), $\text{proj}^x_{\psi_x}$ and $\text{proj}^y_{\psi_y}$ are trainable projection heads parameterized by $\psi_x$ and $\psi_y$, $\text{sim}(\cdot,\cdot)$ is a predefined similarity function (we use cosine similarity in practice), and hyperparameters {\small$\lambda_x,\lambda_y>0$} controls the relative importance of aligning each representation. 

However, directly using multimodal data requires paired samples across modalities, which are often scarce and costly to obtain. In the context of conditional generation, where the model generates outputs based on a specific conditioning variable (e.g., class label or input sequence), it may also introduce shortcuts between the conditioning variable and the representations. For instance, aligning the representation of the input condition as an additional modality may cause the model to simply replicate the condition, rather than extract meaningful features.
To address this issue, we generate synthetic multimodal data $y:=f_{\phi}(x)$ using an auxiliary multimodal model $f_{\phi}$ that bridges modalities $x$ and $y$ (see \Cref{fig:main}). This introduces stochasticity and diversity into the data pairs, enabling efficient data augmentation that fosters model generalization and mitigates shortcut risks.

In practice, we use multimodal models (between $x$ and $y$) instead of single-modality models to obtain the representations $z^y$ for other auxiliary modalities because their representations are inherently aligned across modalities $x$ and $y$. For images, we use text as an auxiliary modality, generate captions with Vision Language Models (VLMs), and extract multimodal features from intermediate layers of the VLMs. For protein inverse folding, we generate auxiliary structures with folding models and use their single, pair, and structure embeddings as representations.
For 3D molecules, we treat 2D graph structure as an auxiliary modality, using external tookits like RDKit to extract atom and bond features. We precompute and store these representations before diffusion training, so REED incurs minimal additional computational overhead compared to REPA. Further analysis of computational cost is provided in \Cref{appendixsec:exp_image}.

\begin{figure}
    \centering
    \includegraphics[width=\linewidth]{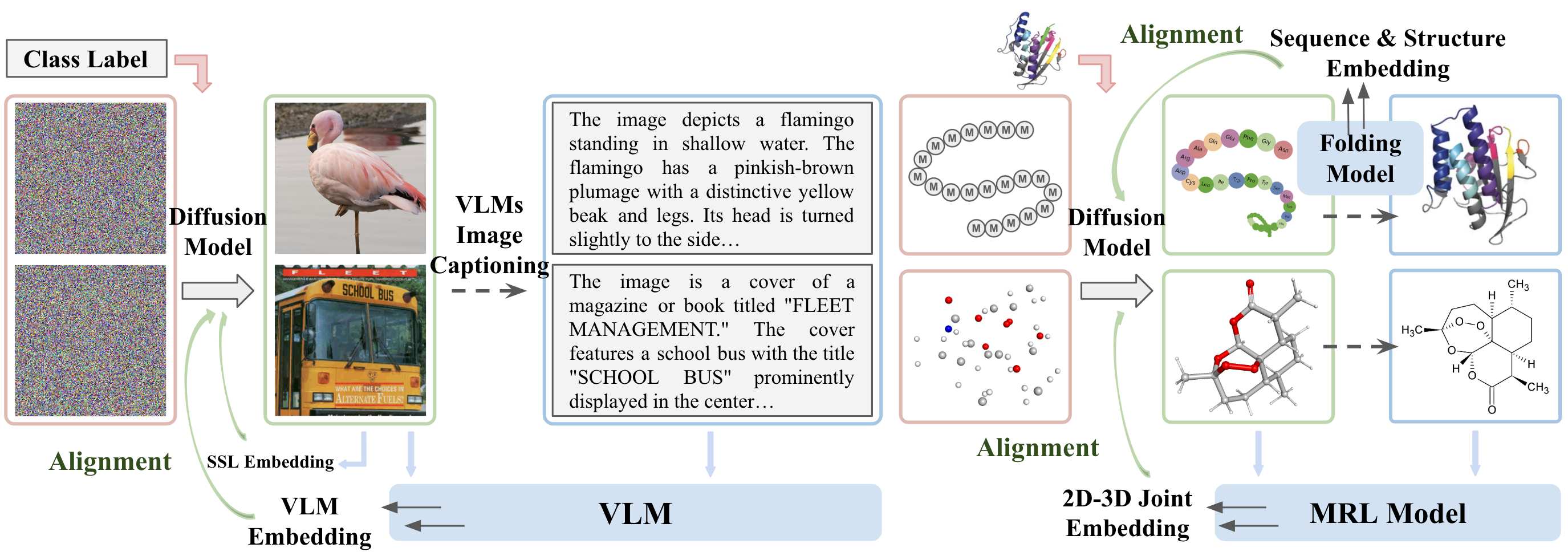}
    \caption{We use synthetic auxiliary data modalities and multimodal representations to enhance representation alignment in diffusion model training. Shown are examples from class-conditional image generation (\textit{left}), protein inverse folding (\textit{top tight}), and molecule generation (\textit{bottom right}).}
    \label{fig:main}
    \vspace{-3mm}
\end{figure}

\subsection{Curriculum of Representation Learning and Diffusion Training}\label{subsec:schedule}

We analyze optimal curricula between the two components, representation modeling and data generation, as introduced in \Cref{subsec:decomposition}. We contend that in REPA~\citep{yu2024representation} type of representation-aligned diffusion model training, integrating pretrained representations is particularly useful in the early stage of diffusion training. This is also supported by our experimental observation that decaying or dropping REPA loss in the late plateau of training phase yields comparable or even superior performance. 
Our theoretical analysis in \Cref{sec:theory} provides key insights into the phenomenon. Recall the training loss bound in \Cref{eq:decompbound}, where the first term is for the approximate conditional distribution based on estimated representations $\hat{\mu}^z_t$, and the last term forces the model to predict ground truth representations. Intuitively, better representation estimations $\hat{\mu}^z_t$ naturally benefit representation-conditioned distribution modeling in the first term. 

To leverage these insights, we introduce a joint training curriculum for {\small$\mathcal{L}_{\text{VLB}}^{\mathbf{z}^L}$} with special emphasis on representation generation in the initial training phase--rather than adopting a rigid two-stage approach which may suffer from disjoint optimization dynamics and misaligned representation spaces (see more discussions in \Cref{appendixsec:method}).
Specifically, the resulting REED loss at epoch $n$ is given by:
\begin{equation}
\label{eq:curriculum}
\mathcal{L}_{\text{REED}}^n= \alpha(n) \cdot\mathcal{L}_{\text{diffusion}} + \beta(n) \cdot \mathcal{L}_{\text{repgen}}
\end{equation}
Here, the diffusion loss weight $\alpha(n)$ progressively increases from zero following a phase-in protocol, while the representation alignment weight $\beta(n)$ remains fixed or decays. This curriculum enables a coherent training flow. Moreover, by exposing the model to imperfect $\hat{\mu}^z_t$ early on, we encourage it to learn under noisy representation conditions, which can improve its generalization and robustness. The reasonability of our arguments and the effectiveness of the proposed schedule are well supported by experimental evidence.

\subsection{Instantiations of the Framework for Domain Representations}\label{subsec:domain}
In this section, we consider several application areas and incorporate additional inductive biases into the design of representation learning techniques. Our designs demonstrate the effectiveness of representation learning methods for Euclidean, sequential, and graph-structured/3D point-cloud data. 
To the best of our knowledge, we are the first to apply representation alignment to diffusion models in the context of protein inverse folding and molecule generation.

\vspace{-2mm}
\paragraph{Images.} As discussed in \Cref{subsec:multimodal}, we adopt multimodal representations--combining image and text embeddings--to enhance single-modal diffusion model training. Image features are drawn from pretrained self-supervised models to offer low-level details, while VLMs generate synthetic captions and corresponding cross-modal embeddings for high-level semantic guidance. In particular, for better alignment between different representation spaces, we use pooled image and text embeddings from VLMs, rather than relying solely on pure text embeddings in single-modal LLMs.

Our analysis in \Cref{subsec:hierarchical} suggests that shallow latent layers capture fine details, whereas deeper layers represent global semantics. Accordingly, we align shallow layers in the diffusion transformer with SSL image patch embeddings, and deeper layers with pooled VLM embeddings. This aligns with empirical observations: due to the discrepancy between the latent space of external models and the diffusion output (VAE) space, local regularization should be performed in shallow layers as observed in \citep{yu2024representation}; however, global alignment via pooled embeddings is relatively light and can be applied to deeper layers for improved semantic control.

\vspace{-2mm}
\paragraph{Proteins.} 
We focus on protein sequence design conditioned on
the protein 3D backbone structure -- protein inverse folding -- which can be framed as a conditional generation task using discrete diffusion models. To expand structural variability and avoid potential shortcuts mentioned in \Cref{subsec:multimodal}, we employ a folding model, AlphaFold3 (AF3)~\cite{af3}, to generate auxiliary structures from target sequences. 
Since diffusion sampling with AF3 introduces detailed variability into synthetic structures, using the structure representation in the AF3 diffusion head can improve generalization and controllability of our inverse folding model. We obtain sequence representations from both single token embeddings and pair embeddings from AF3's Pairformer, where the latter models residue interactions. 

Since protein inverse folding can be regarded as the reverse of the folding process, we align layers of our discrete diffusion model with those of folding models in the reverse order. Specifically, early encoder layers in our discrete diffusion model align with representations from folding decoders (guided by structure), while deeper decoder layers of our denoising network are trained to match single and pair sequence encoder representations from folding models.

\vspace{-2mm}
\paragraph{Molecules.}
Generative models for molecules must respect key symmetries, such as permutation symmetry ($S(N)$) for 2D graphs and Euclidean symmetries ($E(3)$ or $SE(3)$) for 3D structures. Following \citep{georcg}, we use graph-level invariant representations from pretrained molecular representation learning (MRL) model to improve generation quality with minimal computation overhead. We adopt diffusion and flow matching models with equivariant GNN backbones to maintain these symmetries, and incorporate lightweight graph-level invariant representations from pretrained $SE(3)$-invariant molecular encoders. Instead of the strictly two-stage representation-conditioned generation in \citep{georcg}, we show that directly aligning diffusion models with pretrained global representations significantly accelerates training and enhances generation quality.

\section{Experiments}
\label{sec:exp}

We evaluate the effectiveness of our proposed REED through extensive experiments. The superior performance across image, protein and molecule domains validates the universal benefit of our framework for both continuous and discrete diffusion or flow-matching models.

\vspace{-0.5mm}
\subsection{Image Generation}
\label{subsec:exp_image}
\vspace{-0.5mm}
\begin{table*}[t]
\centering
\small
\begin{minipage}[t]{0.41\linewidth}
\centering
\caption{FID comparisons with vanilla SiTs and REPA of class-conditional generation on ImageNet $256\times256$. No classifier-free guidance (CFG) is used. Iter. denotes training iteration.}
\vspace{0.3mm}
\resizebox{\textwidth}{!}{
\begin{tabular}{lccc}
\toprule
\textbf{Model} & \textbf{\#Params} & \textbf{Iter.} & \textbf{FID$\downarrow$} \\
\midrule
SiT-B/2 & 130M & 400K & 33.0 \\
+ REPA & 130M & 400K & 24.4 \\\rowcolor{gray!20}
+ REED (ours) & 130M & 400K & \textbf{19.5} \\
\midrule
SiT-L/2 & 458M & 400K & 18.8 \\
+ REPA & 458M & 400K & 9.7 \\
\rowcolor{gray!20} + REED (ours) & 458M & 400K & \textbf{8.9}  \\
\midrule
SiT-XL/2 & 675M & 400K & 17.2 \\
+ REPA & 675M & 150K & 13.6 \\\rowcolor{gray!20}
+ REED (ours) & 675M & 150K & \textbf{11.6} \\
\midrule
SiT-XL/2 & 675M & 7M & 8.3 \\
+ REPA  & 675M & 400K & 7.9 \\
+ REPA  & 675M & 1M & 6.4 \\
+ REPA  & 675M & 4M & 5.9 \\\rowcolor{gray!20}
+ REED (ours) & 675M & 300K & 8.2 \\\rowcolor{gray!20}
+ REED (ours) & 675M & 400K & 7.3 \\\rowcolor{gray!20}
+ REED (ours) & 675M & 1M & \textbf{4.7} \\
\bottomrule
\vspace{-8mm}
\end{tabular}
}
\label{tab:fid_comparison}
\end{minipage}
\hfill
\begin{minipage}[t]{0.55\linewidth}
\centering
\caption{System-level comparison on ImageNet $256\times256$ with CFG. Models with additional CFG scheduling~\cite{guidance_interval} are marked with an asterisk (*).}
\vspace{0.5mm}
\resizebox{\textwidth}{!}{
\begin{tabular}{lccccc}
\toprule
\textbf{Model} & \textbf{Epochs} & \textbf{FID$\downarrow$} 
& \textbf{IS$\uparrow$} & \textbf{Pre.$\uparrow$} & \textbf{Rec.$\uparrow$} \\
\midrule
\multicolumn{6}{l}{\textit{Pixel diffusion}} \\
ADM-U & 400 & 3.94 
& 186.7 & 0.82 & 0.52 \\
VDM++ & 560 & 2.40 
& 225.3 & - & - \\
Simple diffusion & 800 & 2.77 
& 211.8 & - & - \\
CDM & 2160 & 4.88 
& 158.7 & - & - \\
\midrule
\multicolumn{6}{l}{\textit{Latent diffusion, U-Net}} \\
LDM-4 & 200 & 3.60 
& 247.7 & 0.87 & 0.48 \\
\midrule
\multicolumn{6}{l}{\textit{Latent diffusion, Transformer + U-Net hybrid}} \\
U-ViT-H/2 & 240 & 2.29 
& 263.9 & 0.82 & 0.57 \\
DiffiT* & - & 1.73 
& 276.5 & 0.80 & 0.62 \\
MDTv2-XL/2* & 1080 & 1.58 
& 314.7 & 0.79 & 0.65 \\
\midrule
\multicolumn{6}{l}{\textit{Latent diffusion, Transformer}} \\
MaskDiT & 1600 & 2.28 
& 276.6 & 0.80 & 0.61 \\
SD-DiT & 480 & 3.23 
& - & - & - \\
DiT-XL/2 & 1400 & 2.27 
& 278.2 & \textbf{0.83} & 0.57 \\
SiT-XL/2 & 1400 & 2.06 
& 270.3 & 0.82 & 0.59 \\
\quad + REPA  & 200 & 1.96 
& 264.0 & 0.82 & 0.60 \\
\quad + REPA  & 800 & 1.80 
& \textbf{284.0} & 0.81 & 0.61 \\
\rowcolor{gray!20}\quad + REED (ours) & \textbf{200} & \textbf{1.80} 
& 267.5 & 0.81 & \textbf{0.61} \\
\bottomrule
\vspace{-8mm}
\end{tabular}
}
\label{tab:system_level}
\end{minipage}
\end{table*}

\textbf{Experimental Setup.}
For continuous diffusion on Euclidean data, we test REED on class-conditional image generation following the setup in REPA~\cite{yu2024representation}. We conduct experiments on ImageNet~\cite{imagenet} at $256 \times 256$ resolution, adopting ADM~\cite{adm} data preprocessing and encoding each image into a latent vector with the Stable Diffusion VAE~\cite{stable_diffusion}. We utilize the B/2, L/2, and XL/2 architectures from SiT~\cite{sit}. For sampling, consistent with SiT and REPA, we employ the SDE Euler-Maruyama sampler and fix the diffusion timesteps to 250 for all runs. We report Fr\'{e}chet inception distance (FID~\cite{fid}), inception score (IS~\cite{is}), precision (Pre.) and recall (Rec.)~\cite{precision_recall} using 50,000 samples for evaluation. 

We use Qwen2-VL~\cite{qwen2vl} 2B to generate a synthetic caption for each image and compute joint image-text representations with Qwen2-VL 7B, using both the image and caption as input. We extract the 15th-layer representations from Qwen2-VL 7B, averaging across all tokens for a high-level joint embedding, and align this with the averaged SiT image latents using a cosine similarity loss and MLP projector. We choose the 15th layer for its strong semantic representation; an ablation of different layers is provided in \Cref{appendixsec:results_image}. For training curriculum (\Cref{eq:curriculum}), we keep $\beta(n)$ constant and linearly increase $\alpha(n)$ over the first 50K training iterations, after which it remains fixed for the rest of training. Detailed experimental settings are provided in \ref{appendixsec:exp_image}.

\textbf{Baselines.} We compare against recent diffusion-based generation baselines with varying inputs and architectures (details in \Cref{appendixsec:exp_image}): (a) pixel diffusion models (ADM~\cite{adm}, VDM++~\cite{vdmpp}, Simple diffusion~\cite{simplediff}, CDM~\cite{cdm}), (b) latent diffusion with U-Net (LDM~\cite{stable_diffusion}), (c) latent diffusion using transformer+U-Net hybrids (U-ViT-H/2~\cite{uvit}, DiffiT~\cite{diffit}, MDTv2-XL/2~\cite{mdtv2}), and (d) latent diffusion with transformers (MaskDiT~\cite{maskdit}, SD-DiT~\cite{sddit}, DiT~\cite{dit}, SiT~\cite{sit}, and SiT with REPA~\cite{yu2024representation}).

\textbf{Overall Results.} We provide FID comparisons in \Cref{tab:fid_comparison}. REED consistently outperforms vanilla SiT and REPA at the same training iterations across all SiT model scales. In particular, REED achieves FID=8.2 on SiT-XL/2 at 300K steps, surpassing vanilla SiT-XL at 7M steps, and reaches FID=4.7 at 1M steps, outperforming REPA at 4M. 
Furthermore, we provide comparison between REED and other recent diffusion model for images using classifier-free guidance (CFG~\cite{ho2022classifier}).
As shown in \Cref{tab:system_level}, REED matches REPA performance with 4$\times$ fewer epochs, achieving FID=1.80 at just 200 epochs. Further details and additional metrics are provided in \Cref{appendixsec:results_image}.

\textbf{Ablation Study on Each Algorithm Component.} 
To examine the effect of each component of REED, including leveraging multimodal representations and applying the curriculum training schedule, we experiment on the SiT-B/2 architecture and report the results for models trained for 200K iterations and not use classifier-free guidance for generation. As shown in \Cref{tab:abl_component}, removing each component leads to inferior performance than the full REED model, with FID increasing from 24.6 to 29.4 when removing the curriculum schedule and to 27.7 when removing multimodal representations. This demonstrates the effectiveness of each component in our proposed REED. Further ablations on different types of curriculum schedule and multimodal representation are provided in \Cref{appendixsec:results_image}.

\vspace{-1.5mm}
\begin{table}[ht]
\centering
\small
\caption{Ablation study on each component of REED. We report results on SiT-B/2 training for 200K iterations. No classifier-free guidance is used.}
\vspace{-1.5mm}
\begin{adjustbox}{max width=0.7\textwidth}
\begin{tabular}{lcccc}
\toprule
\textbf{Model} (SiT-B/2, 200K Iter.) & \textbf{FID$\downarrow$} 
& \textbf{IS$\uparrow$} & \textbf{Prec.$\uparrow$} & \textbf{Rec.$\uparrow$} \\
\midrule
+ REPA  & 33.2 
& 43.7 & 0.54 & 0.63  \\
+ REED  & \textbf{24.6} & \textbf{62.2} & \textbf{0.58} & \textbf{0.64} \\
+ REED w/o Multimodal Representations & 27.7 & 55.5 & 0.56 & 0.64 \\
+ REED w/o Curriculum Schedule & 29.4 & 50.8 & 0.56 & 0.64 \\
\bottomrule
\end{tabular}
\end{adjustbox}
\label{tab:abl_component}
\end{table}

\begin{wraptable}{r}{0.35\textwidth}
\centering
\vspace{-6.5mm}
\caption{FID comparison at different alignment depth on SiT-B/2.}
\resizebox{0.33\textwidth}{!}{
\vspace{1.5mm}
\begin{tabular}{lcc|cccc}
\toprule
\textbf{Iter.} & \textbf{Dep.-I} & \textbf{Dep.-VL} & \textbf{FID$\downarrow$} \\
\midrule
200K & 4 & 2 & 25.9 \\
200K & 4 & 4 & 25.9  \\
200K & 4 & 6 & 24.8  \\
200K & 4 & 8 & \textbf{24.6} \\
200K & 4 & 10 & 26.4 \\
\midrule
200K & - & 4 & \textbf{45.5} \\
200K & - & 8 & 45.6  \\
\bottomrule
\end{tabular}
}
\label{tab:image_layers}
\end{wraptable}
\textbf{Alignment Depth for Multimodal Representations.} We investigate the optimal alignment depth for image (i.e., Dep.-I) and VLM (i.e., Dep.-VL) representations in \Cref{tab:image_layers}. Aligning image features at a shallow layer (i.e., 4) and text at a deeper layer (i.e., 8) yields the best results, consistent with our theoretical analysis (\Cref{subsec:hierarchical}) that shallow layers capture fine details while deeper layers encode high-level semantics. In contrast, for VLM-only alignment, the best performance comes from aligning at shallow layer 4, indicating the above effect is specific to interactions between multi-latent representations.

\subsection{Protein Sequence Generation}
\textbf{Experimental Setup.}
For discrete diffusion on sequence data, we experiment on protein inverse folding--generating protein sequences conditioned on backbone 3D structure. Following~\cite{drakes}, we train an inverse folding model using the Multiflow~\cite{multiflow} objective and the ProteinMPNN~\cite{proteinmpnn} architecture, with the PDB training set from~\cite{proteinmpnn}. We filter for single-chain proteins with sequences under 256 residues, yielding 13,753 training and 811 test proteins.

We use AlphaFold3~\cite{af3} to generate auxiliary structures from target sequences, extracting latents from its diffusion head as structure representations, as well as both single and pair embeddings of Pairformer as amino acid representations. We align the ProteinMPNN encoder output with the structure representation, and the decoder's first-layer output with the single and pair token representations, reflecting the reverse nature between inverse folding and folding. Regarding training curriculum, $\beta(n)$ is kept constant, while $\alpha(n)$ linearly increases over the first 50 epochs, then decays with a cosine schedule.

\textbf{Evaluations.} We assess inverse folding performance using sequence recovery rate (Seq. Recovery), self-consistency RMSD (scRMSD), and pLDDT from ESMFold~\cite{esmfold} on generated sequences. For each test structure, we generate a sequence and report the median scRMSD and pLDDT across the test set. We also compute the proportion of cases with scRMSD below $2 \text{\AA}$ and pLDDT above 80, standard thresholds for successful inverse folding~\cite{multiflow,unlocking}. Further details of the metrics are in \Cref{appendixsec:exp_protein}.

\begin{table}[htbp]
\centering
\caption{
Model performance on protein inverse folding. REED achieves high sequence recovery rate and pLDDT and low scRMSD across various epochs and significantly accelerates training. We report the mean across 3 random seeds, with standard deviations in parentheses.}
\label{tab:protein}
\vspace{-1.5mm}
\resizebox{1\columnwidth}{!}{
\begin{tabular}{lcccccc}
\toprule
Methods    & Epochs & Seq. Recovery(\%) $\uparrow$ & scRMSD$\downarrow$ & \%(scRMSD<2) $\uparrow$ & pLDDT $\uparrow$ & \%(pLDDT>80)(\%) $\uparrow$   \\
\midrule
ProteinMPNN~\citep{proteinmpnn} & 200 & 41.73 (0.08) & 1.801 (0.035) & 54.45 (0.68) & 82.68 (0.25) & 65.66 (1.39)        \\
DiscreteDiff & 50 & 38.29 (0.03) & 2.047 (0.018) & 49.11 (0.29) & 81.52 (0.13) & 58.80 (0.21) \\
\rowcolor{gray!20}\quad + REED (ours) & 50 & 41.06 (0.03) & 1.788 (0.033) & 54.37 (1.09) & 82.74 (0.08) & 67.41 (0.38) \\
DiscreteDiff & 100 & 40.74 (0.04) & 1.789 (0.015) & 54.17 (0.61) & 82.80 (0.08) & 67.57 (0.62)  \\           
\rowcolor{gray!20}\quad + REED (ours) & 100 & 41.91 (0.07) & \textbf{1.780 (0.017)} & \textbf{56.23 (0.62)} & 83.09 (0.14) & 68.41 (0.38) \\
DiscreteDiff & 150 & 41.18 (0.08) & 1.828 (0.046) & 54.87 (1.02) & 82.92 (0.08) & 67.25 (1.02)  \\           
\rowcolor{gray!20}\quad + REED (ours) & 150 & \textbf{42.26 (0.12)} & 1.799 (0.016) & 54.66 (0.53) & \textbf{83.17 (0.12)} & \textbf{69.07 (0.37)} \\
\bottomrule
\end{tabular}
}
\vspace{-3mm}
\end{table}

\textbf{Results.}
We compare REED against the de facto inverse folding method ProteinMPNN~\cite{proteinmpnn} and and discrete diffusion models without REED (DiscreteDiff), with results in \Cref{tab:protein}. REED consistently outperform other methods in all metrics, demonstrating the effectiveness of incorporating powerful representations in the protein domain. Meanwhile, REED accelerates training, reaching a 41.5\% sequence recovery rate in just 70 epochs, whereas DiscreteDiff requires 250 epochs.
Ablation studies on different combinations among single, pair and structure representations are in \Cref{appendixsec:results_protein}.

\subsection{Molecule Generation}
\textbf{Experimental Setup.}
For structured and geometric graph data, we choose the 3D molecule generation task with equivariant point-cloud diffusion models. Leveraging symmetry-preserving representations and equivariant backbones, our approach effectively models molecular graph distributions while respecting their inherent symmetries.
We adopt the challenging Geom-Drug~\citep{geomdrug} dataset, which contains large drug-like molecules (average 44 atoms). Following SemlaFlow~\citep{semlaflow}, we filter out molecules with more than 72 atoms in the training set and use standard splits. 
Our base model is SemlaFlow~\citep{semlaflow} with an $E(3)$-equivariant backbone and the ODE flow matching framework, and use the Unimol~\citep{zhou2023unimol} further finetuned on GEOM-DRUG as the pretrained encoder. All other training and evaluation settings follow \citep{semlaflow}. 

\textbf{Evaluations.} We adopt standard benchmark evaluation metrics: \textit{atom stability}, \textit{molecule stability}, \textit{validity}, as well as additional metrics \textit{energy}, \textit{strain}, and sampling efficiency (NFE). Full metric details are in \Cref{appendixsec:exp_molecule}. We compare against state-of-the-art 3D generators that jointly generate atoms and bonds, including MiDi~\citep{midi}, EQGAT-diff~\citep{eqgatdiff}, and SemlaFlow~\citep{semlaflow}, since those approaches only generating atoms fall short in producing high-quality molecules as is explained in~\citep{semlaflow}. Notably, we train SemlaFlow+REED for just 200 epochs, compared to 800 epochs for EQGAT-diff.

\begin{table}[htbp]
\centering
\caption{
Performance of REED with SemlaFlow as the base model on GEOM-DRUG dataset. Atom stability, molecule stability, and validity are reported as percentages, while energy and strain energy are expressed in kcal·mol$^{-1}$. Results marked with $^*$ were reproduced in our own experiments. All results are calculated based on 5k randomly sampled molecules.}
\vspace{-1mm}
\label{tab:unconditional_drug}
\resizebox{0.8\columnwidth}{!}{
\begin{tabular}{l|ccc|cc|c}
\toprule
Methods              & Atom Stab $\uparrow$        & Mol Stab $\uparrow$            & Valid $\uparrow$        & Energy $\downarrow$       & Strain $\downarrow$     & NFE $\downarrow$     \\
\midrule
MiDi               & 99.8                 & 91.6                 & 77.8              & -               & -            & 500        \\
EQGAT-diff         & 99.8           & 93.4           & \textbf{94.6}       & 148.8         & 140.2    & 500      \\
SemlaFlow$^*$           & 99.88          & 97.4           &  
93.4     & 114.44         & 74.18       & \textbf{100}  \\           
\rowcolor{gray!20}\quad + REED (ours)      & \textbf{99.93}         & \textbf{98.1}             &   94.5            & \textbf{105.63}     & \textbf{65.33}       & \textbf{100}   \\
\bottomrule
\end{tabular}
}
\end{table}

\textbf{Results.}
As shown in \Cref{tab:unconditional_drug}, SemlaFlow trained with our REED significantly improves almost all metrics compared to the baseline and outperforms other state-of-the-art models. Remarkably, the superior performance is achieved within much less training computation and sampling NFE compared to EQGAT-diff, verifying the effectiveness of REED.

\section{Conclusion}
\label{sec:conclusion}
In this paper, we introduced a general theoretical framework for incorporating representation guidance into diffusion model training. Our proposed design strategies consistently achieve superior performance and accelerated training. across a range of domains. While we adopt a linear representation weighting schedule in our experiments, exploring adaptive schedules remains an intriguing direction for future research. Broader impacts include data efficiency in machine learning and biology.

\newpage
\section*{Acknowledgement}
CW, CZ and TJ acknowledge support from the Machine Learning for Pharmaceutical Discovery and Synthesis (MLPDS) consortium, the NSF Expeditions grant (award 1918839) Understanding the World Through Code, and the DSO Singapore grant on next generation techniques for protein ligand binding. CZ and SB were partly supported by the ARPA-H ADAPT program. SG is supported by the MathWorks Engineering Fellowship. SG and SJ acknowledge the support of the NSF Award CCF-2112665 (TILOS AI Institute), an Alexander von Humboldt fellowship, and Office of Naval Research grant N00014-20-1-2023 (MURI ML-SCOPE).

\bibliographystyle{plainnat}
\bibliography{references}

\begin{thebibliography}{65}
\providecommand{\natexlab}[1]{#1}
\providecommand{\url}[1]{\texttt{#1}}
\expandafter\ifx\csname urlstyle\endcsname\relax
  \providecommand{\doi}[1]{doi: #1}\else
  \providecommand{\doi}{doi: \begingroup \urlstyle{rm}\Url}\fi

\bibitem[Abramson et~al.(2024)Abramson, Adler, Dunger, Evans, Green, Pritzel, Ronneberger, Willmore, Ballard, Bambrick, et~al.]{af3}
Josh Abramson, Jonas Adler, Jack Dunger, Richard Evans, Tim Green, Alexander Pritzel, Olaf Ronneberger, Lindsay Willmore, Andrew~J Ballard, Joshua Bambrick, et~al.
\newblock Accurate structure prediction of biomolecular interactions with alphafold 3.
\newblock \emph{Nature}, 630\penalty0 (8016):\penalty0 493--500, 2024.

\bibitem[Albergo et~al.(2023)Albergo, Boffi, and Vanden-Eijnden]{stochastic_interpolant}
Michael~S Albergo, Nicholas~M Boffi, and Eric Vanden-Eijnden.
\newblock Stochastic interpolants: A unifying framework for flows and diffusions.
\newblock \emph{arXiv preprint arXiv:2303.08797}, 2023.

\bibitem[Axelrod and Gomez-Bombarelli(2022)]{geomdrug}
Simon Axelrod and Rafael Gomez-Bombarelli.
\newblock Geom, energy-annotated molecular conformations for property prediction and molecular generation.
\newblock \emph{Scientific Data}, 9\penalty0 (1):\penalty0 185, 2022.

\bibitem[Bao et~al.(2023)Bao, Nie, Xue, Cao, Li, Su, and Zhu]{uvit}
Fan Bao, Shen Nie, Kaiwen Xue, Yue Cao, Chongxuan Li, Hang Su, and Jun Zhu.
\newblock All are worth words: A vit backbone for diffusion models.
\newblock In \emph{Proceedings of the IEEE/CVF conference on computer vision and pattern recognition}, pages 22669--22679, 2023.

\bibitem[Beaudry and Renner(2011)]{beaudry2011intuitive}
Normand~J Beaudry and Renato Renner.
\newblock An intuitive proof of the data processing inequality.
\newblock \emph{arXiv preprint arXiv:1107.0740}, 2011.

\bibitem[Brooks et~al.(2024)Brooks, Peebles, Holmes, DePue, Guo, Jing, Schnurr, Taylor, Luhman, Luhman, et~al.]{openai_video}
Tim Brooks, Bill Peebles, Connor Holmes, Will DePue, Yufei Guo, Li~Jing, David Schnurr, Joe Taylor, Troy Luhman, Eric Luhman, et~al.
\newblock Video generation models as world simulators.
\newblock \emph{OpenAI Blog}, 1:\penalty0 8, 2024.

\bibitem[Campbell et~al.(2024)Campbell, Yim, Barzilay, Rainforth, and Jaakkola]{multiflow}
Andrew Campbell, Jason Yim, Regina Barzilay, Tom Rainforth, and Tommi Jaakkola.
\newblock Generative flows on discrete state-spaces: Enabling multimodal flows with applications to protein co-design.
\newblock \emph{arXiv preprint arXiv:2402.04997}, 2024.

\bibitem[Chen et~al.(2023{\natexlab{a}})Chen, Chewi, Li, Li, Salim, and Zhang]{SamplingScoreTheory}
Sitan Chen, Sinho Chewi, Jerry Li, Yuanzhi Li, Adil Salim, and Anru Zhang.
\newblock Sampling is as easy as learning the score: theory for diffusion models with minimal data assumptions.
\newblock In \emph{The Eleventh International Conference on Learning Representations}, 2023{\natexlab{a}}.
\newblock URL \url{https://openreview.net/forum?id=zyLVMgsZ0U_}.

\bibitem[Chen et~al.(2023{\natexlab{b}})Chen, Chewi, Li, Li, Salim, and Zhang]{chen2023sampling}
Sitan Chen, Sinho Chewi, Jerry Li, Yuanzhi Li, Adil Salim, and Anru Zhang.
\newblock Sampling is as easy as learning the score: theory for diffusion models with minimal data assumptions.
\newblock In \emph{The Eleventh International Conference on Learning Representations}, 2023{\natexlab{b}}.
\newblock URL \url{https://openreview.net/forum?id=zyLVMgsZ0U_}.

\bibitem[Chen et~al.(2024)Chen, Liu, Xie, and He]{chen2024deconstructing}
Xinlei Chen, Zhuang Liu, Saining Xie, and Kaiming He.
\newblock Deconstructing denoising diffusion models for self-supervised learning.
\newblock \emph{arXiv preprint arXiv:2401.14404}, 2024.

\bibitem[Dauparas et~al.(2022)Dauparas, Anishchenko, Bennett, Bai, Ragotte, Milles, Wicky, Courbet, de~Haas, Bethel, et~al.]{proteinmpnn}
Justas Dauparas, Ivan Anishchenko, Nathaniel Bennett, Hua Bai, Robert~J Ragotte, Lukas~F Milles, Basile~IM Wicky, Alexis Courbet, Rob~J de~Haas, Neville Bethel, et~al.
\newblock Robust deep learning--based protein sequence design using proteinmpnn.
\newblock \emph{Science}, 378\penalty0 (6615):\penalty0 49--56, 2022.

\bibitem[Deng et~al.(2009)Deng, Dong, Socher, Li, Li, and Fei-Fei]{imagenet}
Jia Deng, Wei Dong, Richard Socher, Li-Jia Li, Kai Li, and Li~Fei-Fei.
\newblock Imagenet: A large-scale hierarchical image database.
\newblock In \emph{2009 IEEE conference on computer vision and pattern recognition}, pages 248--255. Ieee, 2009.

\bibitem[Dhariwal and Nichol(2021)]{adm}
Prafulla Dhariwal and Alexander Nichol.
\newblock Diffusion models beat gans on image synthesis.
\newblock \emph{Advances in neural information processing systems}, 34:\penalty0 8780--8794, 2021.

\bibitem[Elfwing et~al.(2018)Elfwing, Uchibe, and Doya]{silu}
Stefan Elfwing, Eiji Uchibe, and Kenji Doya.
\newblock Sigmoid-weighted linear units for neural network function approximation in reinforcement learning.
\newblock \emph{Neural networks}, 107:\penalty0 3--11, 2018.

\bibitem[Gao et~al.(2023)Gao, Zhou, Cheng, and Yan]{mdtv2}
Shanghua Gao, Pan Zhou, Ming-Ming Cheng, and Shuicheng Yan.
\newblock Mdtv2: Masked diffusion transformer is a strong image synthesizer.
\newblock \emph{arXiv preprint arXiv:2303.14389}, 2023.

\bibitem[Hatamizadeh et~al.(2024)Hatamizadeh, Song, Liu, Kautz, and Vahdat]{diffit}
Ali Hatamizadeh, Jiaming Song, Guilin Liu, Jan Kautz, and Arash Vahdat.
\newblock Diffit: Diffusion vision transformers for image generation.
\newblock In \emph{European Conference on Computer Vision}, pages 37--55. Springer, 2024.

\bibitem[He et~al.(2022)He, Chen, Xie, Li, Doll{\'a}r, and Girshick]{mae}
Kaiming He, Xinlei Chen, Saining Xie, Yanghao Li, Piotr Doll{\'a}r, and Ross Girshick.
\newblock Masked autoencoders are scalable vision learners.
\newblock In \emph{Proceedings of the IEEE/CVF conference on computer vision and pattern recognition}, pages 16000--16009, 2022.

\bibitem[Heusel et~al.(2017)Heusel, Ramsauer, Unterthiner, Nessler, and Hochreiter]{fid}
Martin Heusel, Hubert Ramsauer, Thomas Unterthiner, Bernhard Nessler, and Sepp Hochreiter.
\newblock Gans trained by a two time-scale update rule converge to a local nash equilibrium.
\newblock \emph{Advances in neural information processing systems}, 30, 2017.

\bibitem[Ho and Salimans(2022)]{ho2022classifier}
Jonathan Ho and Tim Salimans.
\newblock Classifier-free diffusion guidance.
\newblock \emph{arXiv preprint arXiv:2207.12598}, 2022.

\bibitem[Ho et~al.(2020)Ho, Jain, and Abbeel]{ho2020denoising}
Jonathan Ho, Ajay Jain, and Pieter Abbeel.
\newblock Denoising diffusion probabilistic models.
\newblock \emph{Advances in neural information processing systems}, 33:\penalty0 6840--6851, 2020.

\bibitem[Ho et~al.(2022)Ho, Saharia, Chan, Fleet, Norouzi, and Salimans]{cdm}
Jonathan Ho, Chitwan Saharia, William Chan, David~J Fleet, Mohammad Norouzi, and Tim Salimans.
\newblock Cascaded diffusion models for high fidelity image generation.
\newblock \emph{Journal of Machine Learning Research}, 23\penalty0 (47):\penalty0 1--33, 2022.

\bibitem[Hoogeboom et~al.(2023)Hoogeboom, Heek, and Salimans]{simplediff}
Emiel Hoogeboom, Jonathan Heek, and Tim Salimans.
\newblock simple diffusion: End-to-end diffusion for high resolution images.
\newblock In \emph{International Conference on Machine Learning}, pages 13213--13232. PMLR, 2023.

\bibitem[Huh et~al.(2024)Huh, Cheung, Wang, and Isola]{huh2024platonic}
Minyoung Huh, Brian Cheung, Tongzhou Wang, and Phillip Isola.
\newblock The platonic representation hypothesis.
\newblock \emph{arXiv preprint arXiv:2405.07987}, 2024.

\bibitem[Irwin et~al.(2024{\natexlab{a}})Irwin, Tibo, Janet, and Olsson]{irwin2024efficient}
Ross Irwin, Alessandro Tibo, Jon~Paul Janet, and Simon Olsson.
\newblock Efficient 3d molecular generation with flow matching and scale optimal transport.
\newblock In \emph{ICML 2024 AI for Science Workshop}, 2024{\natexlab{a}}.

\bibitem[Irwin et~al.(2024{\natexlab{b}})Irwin, Tibo, Janet, and Olsson]{semlaflow}
Ross Irwin, Alessandro Tibo, Jon-Paul Janet, and Simon Olsson.
\newblock Efficient 3d molecular generation with flow matching and scale optimal transport.
\newblock \emph{arXiv preprint arXiv:2406.07266}, 2024{\natexlab{b}}.

\bibitem[Karras et~al.(2017)Karras, Aila, Laine, and Lehtinen]{progressivegan}
Tero Karras, Timo Aila, Samuli Laine, and Jaakko Lehtinen.
\newblock Progressive growing of gans for improved quality, stability, and variation.
\newblock \emph{arXiv preprint arXiv:1710.10196}, 2017.

\bibitem[Kingma and Gao(2023)]{vdmpp}
Diederik Kingma and Ruiqi Gao.
\newblock Understanding diffusion objectives as the elbo with simple data augmentation.
\newblock \emph{Advances in Neural Information Processing Systems}, 36:\penalty0 65484--65516, 2023.

\bibitem[Kingma(2014)]{adam}
Diederik~P Kingma.
\newblock Adam: A method for stochastic optimization.
\newblock \emph{arXiv preprint arXiv:1412.6980}, 2014.

\bibitem[Kynk{\"a}{\"a}nniemi et~al.(2019)Kynk{\"a}{\"a}nniemi, Karras, Laine, Lehtinen, and Aila]{precision_recall}
Tuomas Kynk{\"a}{\"a}nniemi, Tero Karras, Samuli Laine, Jaakko Lehtinen, and Timo Aila.
\newblock Improved precision and recall metric for assessing generative models.
\newblock \emph{Advances in neural information processing systems}, 32, 2019.

\bibitem[Kynkäänniemi et~al.(2024)Kynkäänniemi, Aittala, Karras, Laine, Aila, and Lehtinen]{guidance_interval}
Tuomas Kynkäänniemi, Miika Aittala, Tero Karras, Samuli Laine, Timo Aila, and Jaakko Lehtinen.
\newblock Applying guidance in a limited interval improves sample and distribution quality in diffusion models.
\newblock In \emph{Proc. NeurIPS}, 2024.

\bibitem[Le et~al.(2023)Le, Cremer, No{\'e}, Clevert, and Sch{\"u}tt]{eqgatdiff}
Tuan Le, Julian Cremer, Frank No{\'e}, Djork-Arn{\'e} Clevert, and Kristof Sch{\"u}tt.
\newblock Navigating the design space of equivariant diffusion-based generative models for de novo 3d molecule generation.
\newblock \emph{arXiv preprint arXiv:2309.17296}, 2023.

\bibitem[Li et~al.(2023{\natexlab{a}})Li, Prabhudesai, Duggal, Brown, and Pathak]{li2023your}
Alexander~C Li, Mihir Prabhudesai, Shivam Duggal, Ellis Brown, and Deepak Pathak.
\newblock Your diffusion model is secretly a zero-shot classifier.
\newblock In \emph{Proceedings of the IEEE/CVF International Conference on Computer Vision}, pages 2206--2217, 2023{\natexlab{a}}.

\bibitem[Li et~al.(2023{\natexlab{b}})Li, Ling, Kar, Acuna, Kim, Kreis, Torralba, and Fidler]{li2023dreamteacher}
Daiqing Li, Huan Ling, Amlan Kar, David Acuna, Seung~Wook Kim, Karsten Kreis, Antonio Torralba, and Sanja Fidler.
\newblock Dreamteacher: Pretraining image backbones with deep generative models.
\newblock In \emph{Proceedings of the IEEE/CVF International Conference on Computer Vision}, pages 16698--16708, 2023{\natexlab{b}}.

\bibitem[Li et~al.(2024{\natexlab{a}})Li, Katabi, and He]{li2024return}
Tianhong Li, Dina Katabi, and Kaiming He.
\newblock Return of unconditional generation: A self-supervised representation generation method.
\newblock \emph{Advances in Neural Information Processing Systems}, 37:\penalty0 125441--125468, 2024{\natexlab{a}}.

\bibitem[Li et~al.(2024{\natexlab{b}})Li, Zhou, Wang, Peng, and Zhang]{georcg}
Zian Li, Cai Zhou, Xiyuan Wang, Xingang Peng, and Muhan Zhang.
\newblock Geometric representation condition improves equivariant molecule generation.
\newblock \emph{arXiv preprint arXiv:2410.03655}, 2024{\natexlab{b}}.

\bibitem[Lin et~al.(2023)Lin, Akin, Rao, Hie, Zhu, Lu, Smetanin, Verkuil, Kabeli, Shmueli, et~al.]{esmfold}
Zeming Lin, Halil Akin, Roshan Rao, Brian Hie, Zhongkai Zhu, Wenting Lu, Nikita Smetanin, Robert Verkuil, Ori Kabeli, Yaniv Shmueli, et~al.
\newblock Evolutionary-scale prediction of atomic-level protein structure with a language model.
\newblock \emph{Science}, 379\penalty0 (6637):\penalty0 1123--1130, 2023.

\bibitem[Lin et~al.(2024)Lin, Liu, Chen, Lu, Hu, Fu, Allardice, Lai, Song, Zhang, et~al.]{lin2024stiv}
Zongyu Lin, Wei Liu, Chen Chen, Jiasen Lu, Wenze Hu, Tsu-Jui Fu, Jesse Allardice, Zhengfeng Lai, Liangchen Song, Bowen Zhang, et~al.
\newblock Stiv: Scalable text and image conditioned video generation.
\newblock \emph{arXiv preprint arXiv:2412.07730}, 2024.

\bibitem[Lipman et~al.(2022)Lipman, Chen, Ben-Hamu, Nickel, and Le]{flow_matching}
Yaron Lipman, Ricky~TQ Chen, Heli Ben-Hamu, Maximilian Nickel, and Matt Le.
\newblock Flow matching for generative modeling.
\newblock \emph{arXiv preprint arXiv:2210.02747}, 2022.

\bibitem[Loshchilov and Hutter(2017)]{adamw}
Ilya Loshchilov and Frank Hutter.
\newblock Decoupled weight decay regularization.
\newblock \emph{arXiv preprint arXiv:1711.05101}, 2017.

\bibitem[Ma et~al.(2024)Ma, Goldstein, Albergo, Boffi, Vanden-Eijnden, and Xie]{sit}
Nanye Ma, Mark Goldstein, Michael~S Albergo, Nicholas~M Boffi, Eric Vanden-Eijnden, and Saining Xie.
\newblock Sit: Exploring flow and diffusion-based generative models with scalable interpolant transformers.
\newblock In \emph{European Conference on Computer Vision}, pages 23--40. Springer, 2024.

\bibitem[Nisonoff et~al.(2024)Nisonoff, Xiong, Allenspach, and Listgarten]{unlocking}
Hunter Nisonoff, Junhao Xiong, Stephan Allenspach, and Jennifer Listgarten.
\newblock Unlocking guidance for discrete state-space diffusion and flow models.
\newblock \emph{arXiv preprint arXiv:2406.01572}, 2024.

\bibitem[Oquab et~al.(2023)Oquab, Darcet, Moutakanni, Vo, Szafraniec, Khalidov, Fernandez, Haziza, Massa, El-Nouby, et~al.]{dinov2}
Maxime Oquab, Timoth{\'e}e Darcet, Th{\'e}o Moutakanni, Huy Vo, Marc Szafraniec, Vasil Khalidov, Pierre Fernandez, Daniel Haziza, Francisco Massa, Alaaeldin El-Nouby, et~al.
\newblock Dinov2: Learning robust visual features without supervision.
\newblock \emph{arXiv preprint arXiv:2304.07193}, 2023.

\bibitem[Peebles and Xie(2023)]{dit}
William Peebles and Saining Xie.
\newblock Scalable diffusion models with transformers.
\newblock In \emph{Proceedings of the IEEE/CVF international conference on computer vision}, pages 4195--4205, 2023.

\bibitem[Polyak et~al.(2024)Polyak, Zohar, Brown, Tjandra, Sinha, Lee, Vyas, Shi, Ma, Chuang, et~al.]{moviegen}
A~Polyak, A~Zohar, A~Brown, A~Tjandra, A~Sinha, A~Lee, A~Vyas, B~Shi, CY~Ma, CY~Chuang, et~al.
\newblock Movie gen: A cast of media foundation models.
\newblock \emph{Meta AI Blog Post, 2024}, 273403698, 2024.

\bibitem[Ramakrishnan et~al.(2014)Ramakrishnan, Dral, Rupp, and Von~Lilienfeld]{qm9}
Raghunathan Ramakrishnan, Pavlo~O Dral, Matthias Rupp, and O~Anatole Von~Lilienfeld.
\newblock Quantum chemistry structures and properties of 134 kilo molecules.
\newblock \emph{Scientific data}, 1\penalty0 (1):\penalty0 1--7, 2014.

\bibitem[Rombach et~al.(2022)Rombach, Blattmann, Lorenz, Esser, and Ommer]{stable_diffusion}
Robin Rombach, Andreas Blattmann, Dominik Lorenz, Patrick Esser, and Bj{\"o}rn Ommer.
\newblock High-resolution image synthesis with latent diffusion models.
\newblock In \emph{Proceedings of the IEEE/CVF conference on computer vision and pattern recognition}, pages 10684--10695, 2022.

\bibitem[Salimans et~al.(2016)Salimans, Goodfellow, Zaremba, Cheung, Radford, and Chen]{is}
Tim Salimans, Ian Goodfellow, Wojciech Zaremba, Vicki Cheung, Alec Radford, and Xi~Chen.
\newblock Improved techniques for training gans.
\newblock \emph{Advances in neural information processing systems}, 29, 2016.

\bibitem[Sohl-Dickstein et~al.(2015)Sohl-Dickstein, Weiss, Maheswaranathan, and Ganguli]{sohl2015deep}
Jascha Sohl-Dickstein, Eric Weiss, Niru Maheswaranathan, and Surya Ganguli.
\newblock Deep unsupervised learning using nonequilibrium thermodynamics.
\newblock In \emph{International conference on machine learning}, pages 2256--2265. pmlr, 2015.

\bibitem[S{\o}nderby et~al.(2016)S{\o}nderby, Raiko, Maal{\o}e, S{\o}nderby, and Winther]{sonderby2016ladder}
Casper~Kaae S{\o}nderby, Tapani Raiko, Lars Maal{\o}e, S{\o}ren~Kaae S{\o}nderby, and Ole Winther.
\newblock Ladder variational autoencoders.
\newblock \emph{Advances in neural information processing systems}, 29, 2016.

\bibitem[Song et~al.(2020)Song, Sohl-Dickstein, Kingma, Kumar, Ermon, and Poole]{smld}
Yang Song, Jascha Sohl-Dickstein, Diederik~P Kingma, Abhishek Kumar, Stefano Ermon, and Ben Poole.
\newblock Score-based generative modeling through stochastic differential equations.
\newblock \emph{arXiv preprint arXiv:2011.13456}, 2020.

\bibitem[Szegedy et~al.(2016)Szegedy, Vanhoucke, Ioffe, Shlens, and Wojna]{inceptionv3}
Christian Szegedy, Vincent Vanhoucke, Sergey Ioffe, Jon Shlens, and Zbigniew Wojna.
\newblock Rethinking the inception architecture for computer vision.
\newblock In \emph{Proceedings of the IEEE conference on computer vision and pattern recognition}, pages 2818--2826, 2016.

\bibitem[Vahdat and Kautz(2020)]{vahdat2020nvae}
Arash Vahdat and Jan Kautz.
\newblock Nvae: A deep hierarchical variational autoencoder.
\newblock \emph{Advances in neural information processing systems}, 33:\penalty0 19667--19679, 2020.

\bibitem[Vignac et~al.(2023)Vignac, Osman, Toni, and Frossard]{midi}
Clement Vignac, Nagham Osman, Laura Toni, and Pascal Frossard.
\newblock Midi: Mixed graph and 3d denoising diffusion for molecule generation.
\newblock In \emph{Joint European Conference on Machine Learning and Knowledge Discovery in Databases}, pages 560--576. Springer, 2023.

\bibitem[Wang et~al.(2024{\natexlab{a}})Wang, Uehara, He, Wang, Biancalani, Lal, Jaakkola, Levine, Wang, and Regev]{drakes}
Chenyu Wang, Masatoshi Uehara, Yichun He, Amy Wang, Tommaso Biancalani, Avantika Lal, Tommi Jaakkola, Sergey Levine, Hanchen Wang, and Aviv Regev.
\newblock Fine-tuning discrete diffusion models via reward optimization with applications to dna and protein design.
\newblock \emph{arXiv preprint arXiv:2410.13643}, 2024{\natexlab{a}}.

\bibitem[Wang et~al.(2024{\natexlab{b}})Wang, Bai, Tan, Wang, Fan, Bai, Chen, Liu, Wang, Ge, et~al.]{qwen2vl}
Peng Wang, Shuai Bai, Sinan Tan, Shijie Wang, Zhihao Fan, Jinze Bai, Keqin Chen, Xuejing Liu, Jialin Wang, Wenbin Ge, et~al.
\newblock Qwen2-vl: Enhancing vision-language model's perception of the world at any resolution.
\newblock \emph{arXiv preprint arXiv:2409.12191}, 2024{\natexlab{b}}.

\bibitem[Watson et~al.(2023)Watson, Juergens, Bennett, Trippe, Yim, Eisenach, Ahern, Borst, Ragotte, Milles, et~al.]{rfdiffusion}
Joseph~L Watson, David Juergens, Nathaniel~R Bennett, Brian~L Trippe, Jason Yim, Helen~E Eisenach, Woody Ahern, Andrew~J Borst, Robert~J Ragotte, Lukas~F Milles, et~al.
\newblock De novo design of protein structure and function with rfdiffusion.
\newblock \emph{Nature}, 620\penalty0 (7976):\penalty0 1089--1100, 2023.

\bibitem[Widatalla et~al.(2024)Widatalla, Rafailov, and Hie]{proteindpo}
Talal Widatalla, Rafael Rafailov, and Brian Hie.
\newblock Aligning protein generative models with experimental fitness via direct preference optimization.
\newblock \emph{bioRxiv}, pages 2024--05, 2024.

\bibitem[Xiang et~al.(2023)Xiang, Yang, Huang, and Wang]{xiang2023denoising}
Weilai Xiang, Hongyu Yang, Di~Huang, and Yunhong Wang.
\newblock Denoising diffusion autoencoders are unified self-supervised learners.
\newblock In \emph{Proceedings of the IEEE/CVF International Conference on Computer Vision}, pages 15802--15812, 2023.

\bibitem[Yang and Wang(2023)]{yang2023diffusion}
Xingyi Yang and Xinchao Wang.
\newblock Diffusion model as representation learner.
\newblock In \emph{Proceedings of the IEEE/CVF International Conference on Computer Vision}, pages 18938--18949, 2023.

\bibitem[Yang et~al.(2022)Yang, Shih, Fu, Zhao, and Ji]{yang2022your}
Xiulong Yang, Sheng-Min Shih, Yinlin Fu, Xiaoting Zhao, and Shihao Ji.
\newblock Your vit is secretly a hybrid discriminative-generative diffusion model.
\newblock \emph{arXiv preprint arXiv:2208.07791}, 2022.

\bibitem[Yu et~al.(2024)Yu, Kwak, Jang, Jeong, Huang, Shin, and Xie]{yu2024representation}
Sihyun Yu, Sangkyung Kwak, Huiwon Jang, Jongheon Jeong, Jonathan Huang, Jinwoo Shin, and Saining Xie.
\newblock Representation alignment for generation: Training diffusion transformers is easier than you think.
\newblock \emph{arXiv preprint arXiv:2410.06940}, 2024.

\bibitem[Zhai et~al.(2023)Zhai, Mustafa, Kolesnikov, and Beyer]{siglip}
Xiaohua Zhai, Basil Mustafa, Alexander Kolesnikov, and Lucas Beyer.
\newblock Sigmoid loss for language image pre-training.
\newblock In \emph{Proceedings of the IEEE/CVF international conference on computer vision}, pages 11975--11986, 2023.

\bibitem[Zheng et~al.(2023)Zheng, Nie, Vahdat, and Anandkumar]{maskdit}
Hongkai Zheng, Weili Nie, Arash Vahdat, and Anima Anandkumar.
\newblock Fast training of diffusion models with masked transformers.
\newblock \emph{arXiv preprint arXiv:2306.09305}, 2023.

\bibitem[Zhou et~al.(2023)Zhou, Gao, Ding, Zheng, Xu, Wei, Zhang, and Ke]{zhou2023unimol}
Gengmo Zhou, Zhifeng Gao, Qiankun Ding, Hang Zheng, Hongteng Xu, Zhewei Wei, Linfeng Zhang, and Guolin Ke.
\newblock Uni-mol: A universal 3d molecular representation learning framework.
\newblock In \emph{The Eleventh International Conference on Learning Representations}, 2023.
\newblock URL \url{https://openreview.net/forum?id=6K2RM6wVqKu}.

\bibitem[Zhu et~al.(2024)Zhu, Pan, Li, Yao, Sun, Mei, and Chen]{sddit}
Rui Zhu, Yingwei Pan, Yehao Li, Ting Yao, Zhenglong Sun, Tao Mei, and Chang~Wen Chen.
\newblock Sd-dit: Unleashing the power of self-supervised discrimination in diffusion transformer.
\newblock In \emph{Proceedings of the IEEE/CVF Conference on Computer Vision and Pattern Recognition}, pages 8435--8445, 2024.

\end{thebibliography}

\newpage
\appendix
\section*{Appendix}
\section{Related Work}
\label{sec:related}

\textbf{Enhancing Diffusion Models with Pretrained Representations.} Recent empirical studies have explored various strategies for incorporating pretrained representations into diffusion model training. Representation-Conditioned Generation (RCG)~\cite{li2024return} addresses unconditional image generation by first training a diffusion model to generate semantic features based on a pretrained self-supervised encoder, and subsequently conditioning a second image diffusion model on these features to achieve high-quality outputs. Representation Alignment (REPA)~\cite{yu2024representation} introduces a regularization technique that aligns intermediate noisy states within the diffusion process to clean image representations extracted from external pretrained visual encoders, thereby improving both representation learning and generation performance.
Despite their empirical effectiveness, there remains a gap in the theoretical understanding of how pretrained representations can enhance diffusion model training. Additionally, they focus primarily on image generation, leveraging visual encoder representations, without considering a diverse array of modalities or a wide range of application domains.

\textbf{Unifying Representation Learning and Diffusion Models.} Many recent works seek to bridge representation learning and diffusion-based generation. Several works leverage or refine representations obtained from diffusion models. For example, \cite{yang2022your} proposes a hybrid framework capable of performing both discriminative tasks and diffusion-based generation, while \cite{xiang2023denoising} uncovers the discriminative nature of intermediate representations within diffusion models. Subsequent studies, such as \cite{yang2023diffusion,li2023dreamteacher}, further distill these learned representations for various downstream applications. Additionally, denoising objectives have been harnessed to enhance representation learning; for instance, \cite{chen2024deconstructing} deconstructs diffusion models to advance denoising-based representation learning. Diffusion models have also benefited from auxiliary self-supervised learning signals--\cite{sddit} demonstrates that incorporating an auxiliary discriminative self-supervised loss can strengthen diffusion model training. In contrast to these approaches, our focus is on improving diffusion model training by utilizing high-quality pretrained representations from a variety of modalities.

\textbf{Generative Models for Different Domains.} Diffusion-based generative models have demonstrated remarkable performance across a diverse range of data types and application areas. In image generation--where data is continuous and resides in Euclidean space--denoising transformers such as DiT~\cite{dit} (Diffusion Transformer) and SiT~\cite{sit} (which incorporates continuous stochastic interpolants~\cite{stochastic_interpolant} to enhance DiT) have been widely adopted. For protein sequence generation, which involves discrete sequential data, discrete diffusion and flow models have been successfully applied, as seen in works such as \cite{multiflow,unlocking,drakes}. In the domain of molecular generation, where data is represented as structured and geometric graphs, $E(3)$-equivariant generative approaches have been developed to model 3D molecular structures. Notable examples include the MiDi model~\cite{midi} for joint 2D and 3D denoising diffusion, EQGAT-diff~\cite{eqgatdiff} for 3D diffusion, and SemlaFlow~\cite{semlaflow}, an ODE flow matching model for 3D molecule data. Despite their widespread use, training diffusion-based generative models remains computationally intensive, often requiring significant resources and time. This poses challenges for broader adoption, especially in domains with complex and high-dimensional data.

\section{Preliminaries}
\label{sec:background}

Our theoretical analysis in \Cref{subsec:bound} builds on the variational framework of denoising diffusion probabilistic models (DDPM)~\cite{ho2020denoising,sohl2015deep}. Below, we briefly summarize the key formulas and concepts; for further details, we refer readers to the original papers.

Diffusion models can be formulated as latent variable models, $p_{\theta}(x_0):=\int p_{\theta}(x_{0:T})dx_{1:T}$, where latent variables $x_{1:T}$ are generated by a forward diffusion process $q(x_{1:T}|x_0)$ that forms a Markov chain $x_T - x_{T-1} - ... - x_1 - x_0$, with a predefined variance schedule $\beta_1, ..., \beta_T$:
\begin{equation}
    q(x_{1:T}|x_0) := \prod_{t=1}^T q(x_t|x_{t-1}), \quad q(x_t|x_{t-1}):= \mathcal{N}(x_t;\sqrt{1-\beta_t}x_{t-1},\beta_t \mathbf{I})
\end{equation}

A variational upper bound for $-\mathbb{E}[\log p_{\theta}(x_0)]$ is then given by

\begin{align}
    -\mathbb{E}_{q(x_0)}\log p_{\theta}(x_0) \leq\mathbb{E}_{q(x_{0:T})}\log \frac{q(x_{1:T}|x_0)}{p_{\theta}(x_{0:T})}:= \mathcal{L}_{\text{VB}}
\end{align}
where $\mathcal{L}_{\text{VB}}$ can be decomposed as 
\begin{align}
    &\mathcal{L}_{\text{VB}} = \mathbb{E}_{q(x_{0:T})} \Bigg[ \log \frac{q(x_T|x_0)\prod_{t=2}^T q(x_{t-1}|x_t,x_0)}{p_{\theta}(x_T) \prod_{t=1}^T p_{\theta}(x_{t-1}|x_t)} \Bigg]\\
    &= \mathbb{E}_{q(x_{0:T})} \Big[ D_{\text{KL}}(q(x_T|x_0)||p_{\theta}(x_T))+\sum_{t=2}^T D_{\text{KL}}(q(x_{t-1}|x_t,x_0)||p_{\theta}(x_{t-1}|x_t)) - \log p_{\theta}(x_0|x_1)\Big]
\end{align}

The reverse process $p_{\theta}(x_{t-1}|x_t)$ is typically parameterized as a Gaussian $\mathcal{N}(x_{t-1};\mu_{\theta}(x_t,t),\sigma_t^2 \mathbf{I})$ as in \cite{ho2020denoising}, enabling tractable computation of the KL divergences. 

In \Cref{subsec:bound}, we extend this framework by introducing an additional latent variable $z$, representing a high-quality pretrained representation of $x_0$. We show how incorporating such semantic representations can enhance diffusion model training.

Note that there are other variants of diffusion models from a different theoretical perspective, such as score matching~\cite{smld}, stochastic interpolants~\cite{stochastic_interpolant}, and flow matching~\cite{flow_matching}. Due to their equivalence with the denoising diffusion models framework, our theoretical insights in \Cref{sec:theory} are broadly applicable to these approaches. In our experiments, we adopt stochastic interpolants (as in SiT~\cite{sit}) for image generation, discrete flow models (as in Multiflow~\cite{multiflow}) for protein sequence generation, and $E(3)$-equivariant flow matching (as in SemlaFlow~\cite{semlaflow}) for molecule generation.

\section{Additional Theoretical Results} \label{appendixsec:proof}
\setcounter{prop}{0}
\setcounter{theorem}{0}

\subsection{Additional Theoretical Characterizations and Proofs for Section~\ref{sec:theory}}\label{appendixsubsec:decom}

\subsubsection{Derivation of \Cref{eq:bound}}

Similar to the variational upper bound of DDPM, by incorporating additional latent variable $z$ following the Markov chain $x_T - x_{T-1} - ... - x_1 - x_0 - z$, we have
\begin{align}
    -\log p_{\theta}(x_0) &\leq -\log p_{\theta}(x_0) + D_{\text{KL}}(q(x_{1:T},z|x_0)||p_{\theta}(x_{1:T},z|x_0)) \\&=-\log p_{\theta}(x_0) + \mathbb{E}_{q(x_{1:T},z|x_0)} \Big[ \log\frac{q(x_{1:T},z|x_0)}{p_{\theta}(x_{0:T},z)} + \log p_{\theta}(x_0)\Big] \\&=\mathbb{E}_{q(x_{1:T},z|x_0)} \Bigg[ \log\frac{q(x_{1:T},z|x_0)}{p_{\theta}(x_{0:T},z)} \Bigg] 
\end{align}

Therefore, 
\begin{equation}
    -\mathbb{E}_{q(x_0)}\log p_{\theta}(x_0) \leq \mathbb{E}_{q(x_{0:T},z)}\left[ \log \frac{q(x_{1:T},z|x_0)}{p_{\theta}(x_{0:T},z)}\right]:=\mathcal{L}_{\text{VB}}^z
\end{equation}

Further decomposing the numerator of $\mathcal{L}_{\text{VB}}^z$ leads to:
\begin{align}
    \mathcal{L}_{\text{VB}}^z 
    &= \mathbb{E}_{q(x_{0:T},z)}\left[ \log \frac{q(z|x_0)q(x_{1:T}|x_0)}{p_{\theta}(x_{0:T},z)}\right] \\&
    = \mathbb{E}_{q(x_{0:T},z)}\left[ \log \frac{q(z|x_0)\cdot\prod_{t=1}^Tq(x_{t-1}|x_t,x_0) \cdot q(x_T|x_0)}{p_{\theta}(x_{0:T},z)}\right] 
\end{align}

Note that when $t=1$, $q(x_{t-1}|x_t,x_0)=q(x_0|x_1,x_0)=1$.

\subsubsection{Derivation of \Cref{eq:decomp}}

The marginal distribution $p(x_{0:T},z)$ can be decomposed as
\begin{equation}
\label{eq:decomp_initial}
    p(x_{0:T},z) = p(x_T) \cdot \prod_{i=t+1}^T p(x_{i-1}|x_{i:T}) \cdot p(z|x_{t:T}) \cdot \prod_{i=1}^t p(x_{i-1}|x_{i:T},z)
\end{equation}
following the order $x_T, ..., x_t,z,x_{t-1},...,x_0$ for any $t=0,1,...,T$.\footnote{When $t=0$ or $t=T$, the corresponding products in \Cref{eq:decomp_initial} reduce to 1.}

\vspace{0.5mm}
Based on the Markov chain $x_T - x_{T-1} - ... - x_1 - x_0 - z$, the following conditional independencies hold: $p(x_{i-1}|x_{i:T}) = p(x_{i-1}|x_i)$, $p(x_{i-1}|x_{i:T},z) = p(x_{i-1}|x_i, z)$, and $p(z|x_{t:T}) = p(z|x_t)$. Thus, for any $t=0,1,...,T$, we can exactly decompose $p(x_{0:T},z)$ as follows:
\begin{align}
    p(x_{0:T},z) &= p(z|x_0) \cdot p(x_0|x_1) \cdot p(x_1|x_2) \cdot ... \cdot p(x_{T-1}|x_T) \cdot p(x_T) \\&= p(z|x_1) \cdot p(x_0|x_1,z) \cdot p(x_1|x_2) \cdot ... \cdot p(x_{T-1}|x_T) \cdot p(x_T) \\&= ... \nonumber\\&= p(z|x_t) \cdot \prod_{i=1}^t p(x_{i-1}|x_i,z) \cdot \prod_{i=t+1}^T p(x_{i-1}|x_i) \cdot p(x_T) \\&= ... \nonumber\\&= p(z|x_{T-1}) \cdot p(x_0|x_1,z) \cdot ... \cdot p(x_{T-2}|x_{T-1},z) \cdot p(x_{T-1}|x_T) \cdot p(x_T) \\&= p(z|x_T) \cdot p(x_0|x_1,z) \cdot p(x_1|x_2,z) \cdot ... \cdot p(x_{T-1}|x_T,z) \cdot p(x_T)
\end{align}
\vspace{2mm}

\subsubsection{Proof for \Cref{prop:decomp}}

\begin{prop}[Decomposition Structure of the Variational Bound]
    Let $\{\alpha_t\geq 0\}_{t=1}^T$ be a set of weights summing to one, and define $A_t \in [0,1]:=\sum_{i=t}^T \alpha_i$. Then, the variational bound $\mathcal{L}_{\text{VB}}^z$ in~\Cref{eq:bound} can be written as:
    \allowdisplaybreaks
\begin{align}
        \mathcal{L}_{\text{VB}}^z &= \sum_{t=1}^T \mathbb{E}_{q(x_{0:T},z)} \Bigg[\log \frac{q(x_{t-1}|x_t,x_0)}{\tilde{p}_{\theta}(x_{t-1}|x_t,z;A_t)}\Bigg] - \sum_{t=1}^T \mathbb{E}_{q(x_{0:T},z)}\Big[ \log Z_t(x_t,z;A_t) \Big] \nonumber\\&+ \sum_{t=1}^T  \alpha_t\mathbb{E}_{q(x_{0:T})} \Big[ D_{\text{KL}} (q(z|x_0)||p_{\theta}(z|x_t)) \Big] 
    \end{align}
    where $\tilde{p}_{\theta}(x_{t-1}|x_t,z;A_t)$ and its normalization $Z_t(x_t,z;A_t)$ are defined as:
\begin{align}\tilde{p}_{\theta}(x_{t-1}|x_t,z;A_t)&=\frac{1}{Z(x_t,z)} p_{\theta}^{A_t}(x_{t-1}|x_t,z) p_{\theta}^{1-A_t}(x_{t-1}|x_t)\\
    Z_t(x_t,z;A_t)&=\int p_{\theta}^{A_t}(x_{t-1}|x_t,z) p_{\theta}^{1-A_t}(x_{t-1}|x_t) d x_{t-1}
    \end{align}
    
\end{prop}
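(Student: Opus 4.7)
The plan is to reorganize the variational bound using the two ingredients the paper has already set up: the exact decomposition in \Cref{eq:decomp}, which certifies $p_\theta(x_{0:T},z) = p_\theta^t(x_{0:T},z)$ for every $t$, and the partition of unity $\sum_{t=1}^T \alpha_t = 1$. Together they let me write
\[
-\log p_\theta(x_{0:T},z) \;=\; -\sum_{t=1}^T \alpha_t \log p_\theta^t(x_{0:T},z),
\]
so the scalar denominator inside $\mathcal{L}_{\text{VB}}^z$ becomes a convex combination over the $T$ equivalent factorizations indexed by the timestep at which $z$ is injected. Everything else is bookkeeping.

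\textbf{Reindexing the double sum.} I would expand each $-\log p_\theta^t$ as the four-piece sum $-\log p_\theta(x_T) - \log p_\theta(z|x_t) - \sum_{i=1}^t \log p_\theta(x_{i-1}|x_i,z) - \sum_{i=t+1}^T \log p_\theta(x_{i-1}|x_i)$ and exchange the order of summation $\sum_t \alpha_t \sum_i$. For a fixed $i$, the conditional $\log p_\theta(x_{i-1}|x_i,z)$ is active precisely when $t \geq i$, accumulating weight $\sum_{t=i}^T \alpha_t = A_i$, while $\log p_\theta(x_{i-1}|x_i)$ is active when $t < i$, accumulating weight $1 - A_i$. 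The numerator of $\mathcal{L}_{\text{VB}}^z$ contributes $\sum_{i=1}^T \log q(x_{i-1}|x_i,x_0)$ at weight one, so each pair assembles into
\[
\log \frac{q(x_{i-1}|x_i,x_0)}{p_\theta^{A_i}(x_{i-1}|x_i,z)\, p_\theta^{1-A_i}(x_{i-1}|x_i)}.
\]
Multiplying and dividing by the normalizer $Z_i(x_i,z;A_i)$ from \Cref{eq:zt} splits this into $\log \frac{q(x_{i-1}|x_i,x_0)}{\tilde{p}_\theta(x_{i-1}|x_i,z;A_i)} - \log Z_i(x_i,z;A_i)$, reproducing the first two sums of \Cref{eq:decompbound}.

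\textbf{Latent and residual terms.} For the $z$-dependent pieces I rewrite $\log q(z|x_0) = \sum_t \alpha_t \log q(z|x_0)$ and pair each summand with the corresponding $-\alpha_t \log p_\theta(z|x_t)$. Taking the inner expectation under $q(z|x_0)$ for each fixed $(x_0,x_t)$ yields $\alpha_t D_{\text{KL}}(q(z|x_0)\|p_\theta(z|x_t))$, and summing over $t$ recovers the third term. The only leftover is $\mathbb{E}_q[\log q(x_T|x_0)/p_\theta(x_T)]$, the usual DDPM prior-matching term, which is constant in $\theta$ under the standard fixed-terminal convention and is implicitly absorbed in the stated equality.

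\textbf{Main obstacle.} The only genuine obstacle is the weight-tracking in the reindexed double sum. The statement looks surprising because of the geometric-mean factor $p_\theta^{A_i}(x_{i-1}|x_i,z)\,p_\theta^{1-A_i}(x_{i-1}|x_i)$ and the extra $-\log Z_i$ penalty, but both are forced by the same step: the cumulative weights $A_i$ and $1-A_i$ appear in the exponents once the order of summation is swapped, and renormalizing the resulting non-probability product into a proper conditional density is exactly what produces $\tilde{p}_\theta$ together with its log-partition correction. After that move is identified, the remaining manipulation is purely mechanical.
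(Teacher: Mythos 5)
Your proposal is correct and follows essentially the same route as the paper's proof: aggregating the exact decompositions $p_\theta^t$ with weights $\alpha_t$, swapping the order of summation so the conditional and unconditional reverse kernels pick up exponents $A_t$ and $1-A_t$, introducing $Z_t$ to form $\tilde{p}_\theta$, and identifying the $z$-terms as $\alpha_t$-weighted KL divergences. Your handling of the leftover prior term $\mathbb{E}[\log q(x_T|x_0)/p_\theta(x_T)]$ matches the paper's, which likewise drops it as (approximately) negligible and independent of $\theta$.
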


\begin{proof}

The marginal distribution $p_{\theta}(x_{0:T},z)$, which appears in the denominator of $\mathcal{L}_{\text{VB}}^z$ (\Cref{eq:bound}), can be decomposed as shown in \Cref{eq:decomp} for any $t = 1, \ldots, T$. By aggregating these decompositions according to the weight schedule $\{\alpha_t\}_{t=1}^T$, we obtain an equivalent form of $\mathcal{L}_{\text{VB}}^z$:

\begin{align}
    \mathcal{L}_{\text{VB}}^z &= \sum_{t=1}^T \alpha_t  \mathbb{E}_{q(x_{0:T},z)}\left[ \log \frac{q(z|x_0)\cdot\prod_{t=1}^Tq(x_{t-1}|x_t,x_0) \cdot q(x_T|x_0)}{p(z|x_t) \cdot \prod_{i=1}^t p(x_{i-1}|x_i,z) \cdot \prod_{i=t+1}^T p(x_{i-1}|x_i) \cdot p(x_T) }\right] \\
    &= \sum_{t=1}^T \alpha_t \mathbb{E}_{q(x_{0:T},z)}\Bigg[ \log \frac{q(z|x_0)}{p_{\theta}(z|x_t)} + \sum_{i=1}^t\log \frac{q(x_{i-1}|x_i,x_0)}{p_{\theta}(x_{i-1}|x_i,z)} \\&\quad \quad \quad \quad \quad \quad \quad \quad+ \sum_{i=t+1}^T\log \frac{q(x_{i-1}|x_i,x_0)}{p_{\theta}(x_{i-1}|x_i)} + \log \frac{q(x_T|x_0)}{p_{\theta}(x_T)} \Bigg] \\&= \sum_{t=1}^T \mathbb{E}_{q(x_{0:T},z)} \Bigg[ \Big( \sum_{i=t}^T \alpha_i\Big) \log \frac{q(x_{t-1}|x_t,x_0)}{p_{\theta}(x_{t-1}|x_t,z)} + \Big( \sum_{i=1}^{t-1} \alpha_i\Big) \log \frac{q(x_{t-1}|x_t,x_0)}{p_{\theta}(x_{t-1}|x_t)} \Bigg] \\&\quad + \sum_{t=1}^T \alpha_t \mathbb{E}_{q(x_{0:T},z)} \Bigg[ \log \frac{q(z|x_0)}{p_{\theta}(z|x_t)} \Bigg] + \mathbb{E}_{q(x_{0:T},z)} \Bigg[ \log \frac{q(x_T|x_0)}{p_{\theta}(x_T)} \Bigg] \\&= \sum_{t=1}^T \mathbb{E}_{q(x_{0:T},z)} \Bigg[ A_t \log \frac{q(x_{t-1}|x_t,x_0)}{p_{\theta}(x_{t-1}|x_t,z)} + (1-A_t) \log \frac{q(x_{t-1}|x_t,x_0)}{p_{\theta}(x_{t-1}|x_t)} \Bigg] \\&\quad+ \sum_{t=1}^T \alpha_t \mathbb{E}_{q(x_{0:T},z)} \Bigg[ \log \frac{q(z|x_0)}{p_{\theta}(z|x_t)} \Bigg] + \mathbb{E}_{q(x_{0:T},z)} \Bigg[ \log \frac{q(x_T|x_0)}{p_{\theta}(x_T)} \Bigg] \\&= \sum_{t=1}^T \mathbb{E}_{q(x_{0:T},z)} \Bigg[ \log \frac{q(x_{t-1}|x_t,x_0)}{p_{\theta}^{A_t}(x_{t-1}|x_t,z) p_{\theta}^{1-A_t}(x_{t-1}|x_t)} \Bigg] \\&\quad + \sum_{t=1}^T \alpha_t \mathbb{E}_{q(x_{0:T},z)} \Bigg[ \log \frac{q(z|x_0)}{p_{\theta}(z|x_t)} \Bigg] + \mathbb{E}_{q(x_{0:T},z)} \Bigg[ \log \frac{q(x_T|x_0)}{p_{\theta}(x_T)} \Bigg] 
\end{align}

Since $p_{\theta}(x_T)$ is a pre-specified fixed distribution (typically standard Gaussian) that is easy to sample from, and $q(x_T|x_0)$ converges to this Gaussian as $T$ is large, the last term is approximately zero. Additionally, it is not involved in the optimization of model parameters $\theta$, thus it can be omitted from $\mathcal{L}_{\text{VB}}^z$. Following the definitions of $\tilde{p}_{\theta}(x_{t-1}|x_t,z;A_t)$ and $Z_t(x_t,z;A_t)$, the variational bound $\mathcal{L}_{\text{VB}}^z$ can then be written as:

\begin{align}
    \mathcal{L}_{\text{VB}}^z &= \sum_{t=1}^T \mathbb{E}_{q(x_{0:T},z)} \Bigg[\log \frac{q(x_{t-1}|x_t,x_0)}{\frac{1}{Z_t(x_t,z;A_t)}p_{\theta}^{A_t}(x_{t-1}|x_t,z) p_{\theta}^{1-A_t}(x_{t-1}|x_t)}\Bigg] \\&\quad- \sum_{t=1}^T \mathbb{E}_{q(x_{0:T},z)}\Big[ \log Z_t(x_t,z;A_t) \Big] + \mathbb{E}_{q(x_{0:T})} \mathbb{E}_{q(z|x_0)} \Bigg[ \log \frac{q(z|x_0)}{p_{\theta}(z|x_t)} \Bigg]\\&= \sum_{t=1}^T \mathbb{E}_{q(x_{0:T},z)} \Bigg[\log \frac{q(x_{t-1}|x_t,x_0)}{\tilde{p}_{\theta}(x_{t-1}|x_t,z;A_t)}\Bigg] - \sum_{t=1}^T \mathbb{E}_{q(x_{0:T},z)}\Big[ \log Z_t(x_t,z;A_t) \Big] \nonumber\\&+ \sum_{t=1}^T  \alpha_t\mathbb{E}_{q(x_{0:T})} \Big[ D_{\text{KL}} (q(z|x_0)||p_{\theta}(z|x_t)) \Big] 
\end{align}

\end{proof}

\subsubsection{Bounds on the Log-Normalization Term}
\label{subsec:theory:lognorm}
To clarify the role of the second log-normalization term in the expression for $\mathcal{L}_{\text{VB}}^z$ (\Cref{eq:decompbound}), we present the following proposition, which provides both a bound and a closed-form solution under the Gaussian assumption for the reverse process:

\begin{prop}[Bounds on the Log-Normalization Term]
\label{prop:normbound}
    The log-normalization term in~\Cref{eq:decompbound} satisfies:
    \begin{align*}
    0 \leq - \sum_{t=1}^T \mathbb{E}_{q(x_{0:T},z)}\Big[ \log Z_t(x_t,z;A_t) \Big] &\leq \min\Bigg\{ \sum_{t=1}^T \mathbb{E}_{q(x_{0:T},z)} A_t D_{\text{KL}}(p_{\theta}(x_{t-1}|x_t)||p_{\theta}(x_{t-1}|x_t,z)),\\&\sum_{t=1}^T \mathbb{E}_{q(x_{0:T},z)} (1-A_t) D_{\text{KL}}(p_{\theta}(x_{t-1}|x_t,z)||p_{\theta}(x_{t-1}|x_t)) \Bigg\}
    \end{align*}
    Specifically, under the common Gaussian assumption for the reverse process~\citep{ho2020denoising},
    i.e., $p_{\theta}(x_{t-1}|x_t)\sim \mathcal{N}(\mu_{\theta}^u(x_t,t),\sigma_t^2)$ and $p_{\theta}(x_{t-1}|x_t,z)\sim \mathcal{N}(\mu_{\theta}^c(x_t,z,t),\sigma_t^2)$, the term has the following closed-form expression:
    \begin{equation}
    \label{eq:normgauss}
        - \sum_{t=1}^T \mathbb{E}_{q(x_{0:T},z)}\Big[ \log Z_t(x_t,z;A_t) \Big] = \sum_{t=1}^T \frac{A_t(1-A_t)}{2\sigma_t^2}\mathbb{E}_{q(x_{0:T},z)} 
        |\mu_{\theta}^c(x_t,z,t) - \mu_{\theta}^u(x_t,t)|^2
    \end{equation}
\end{prop}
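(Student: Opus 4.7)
The plan is to establish the three bounds by recognizing $Z_t(x_t,z;A_t)$ as a R\'enyi-type integral and applying Jensen's inequality, with the Gaussian closed form obtained by direct integration. Throughout I will abbreviate $p := p_\theta(x_{t-1}|x_t,z)$ and $q := p_\theta(x_{t-1}|x_t)$, so that $Z_t = \int p^{A_t} q^{1-A_t}\, dx_{t-1}$, and note that all bounds will be established pointwise in $t$; passing from pointwise minima to the $\min$ of sums is automatic since $\sum_t \min(a_t,b_t) \leq \min(\sum_t a_t, \sum_t b_t)$.

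For non-negativity, I would rewrite $Z_t = \mathbb{E}_q[(p/q)^{A_t}]$ and use the concavity of $u \mapsto u^{A_t}$ on $[0,\infty)$ (since $A_t \in [0,1]$). Jensen's inequality then gives $Z_t \leq (\mathbb{E}_q[p/q])^{A_t} = 1$, hence $-\log Z_t \geq 0$. For the two competing KL upper bounds, I would apply Jensen's inequality to $\log$: writing $Z_t = \mathbb{E}_q[\exp(A_t \log(p/q))] \geq \exp(A_t \mathbb{E}_q[\log(p/q)])$ yields $-\log Z_t \leq A_t D_{\text{KL}}(q\|p)$, and factoring the integrand as $p \cdot (q/p)^{1-A_t}$ instead gives the symmetric bound $-\log Z_t \leq (1-A_t) D_{\text{KL}}(p\|q)$. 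Taking the smaller of the two per $t$, summing, and integrating under $q(x_{0:T},z)$ completes the inequality as stated.

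For the Gaussian closed form, with $p = \mathcal{N}(\mu^c_\theta(x_t,z,t), \sigma_t^2 I)$ and $q = \mathcal{N}(\mu^u_\theta(x_t,t), \sigma_t^2 I)$, I would combine the two quadratic exponents and complete the square. The key algebraic identity is the two-point variance decomposition $A_t|x-\mu^c|^2 + (1-A_t)|x-\mu^u|^2 = |x - \mu^*|^2 + A_t(1-A_t)|\mu^c-\mu^u|^2$ with $\mu^* := A_t \mu^c + (1-A_t)\mu^u$. The integrand then factors as a $\mathcal{N}(\mu^*, \sigma_t^2 I)$ density times a constant, the Gaussian integrates to one, and one is left with $Z_t = \exp\bigl(-\tfrac{A_t(1-A_t)}{2\sigma_t^2}|\mu^c - \mu^u|^2\bigr)$. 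Taking $-\log$, summing over $t$, and pushing in the expectation under $q(x_{0:T},z)$ recovers \Cref{eq:normgauss}.

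The main obstacle is the bookkeeping in the Gaussian step: one must verify that the prefactors $(2\pi\sigma_t^2)^{-dA_t/2}$ and $(2\pi\sigma_t^2)^{-d(1-A_t)/2}$ coming from the two Gaussian densities combine to exactly $(2\pi\sigma_t^2)^{-d/2}$, so that after completing the square they merge into a single properly-normalized Gaussian in $x_{t-1}$ and no spurious prefactor survives the integration. The Jensen-based bounds, in contrast, are essentially one-line applications of the concavity of $u^{A_t}$ and of $\log$, and require no further machinery.
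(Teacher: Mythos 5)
Your proposal is correct and follows essentially the same route as the paper's proof: Jensen applied to the logarithm for the two KL upper bounds, and the same completion-of-the-square identity $A_t|x-\mu^c|^2+(1-A_t)|x-\mu^u|^2=|x-\mu^*|^2+A_t(1-A_t)|\mu^c-\mu^u|^2$ for the Gaussian closed form, including the prefactor bookkeeping. The only cosmetic difference is the non-negativity step, where you invoke Jensen via concavity of $u\mapsto u^{A_t}$ while the paper uses H\"older's inequality with exponents $1/A_t$ and $1/(1-A_t)$ — these are interchangeable arguments for the same bound $Z_t\leq 1$.
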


\begin{proof}
    First, we consider the lower bound of the log-normalization term. Utilizing H\"older's inequality with exponents $1/A_t$ and $1/(1-A_t)$, we have 
    \begin{align}
        Z_t(x_t,z;A_t) &= \int p_{\theta}^{A_t}(x_{t-1}|x_t,z) p_{\theta}^{1-A_t}(x_{t-1}|x_t) d x_{t-1} \\&\leq \left(\int p_{\theta}(x_{t-1}|x_t,z) dx_{t-1}\right)^{A_t} \left(\int p_{\theta}(x_{t-1}|x_t) dx_{t-1}\right)^{1-A_t} = 1^{A_t} \cdot 1^{1-A_t} = 1
    \end{align}
    Thus, this establishes the lower bound:
    \begin{equation}
        - \sum_{t=1}^T \mathbb{E}_{q(x_{0:T},z)}\Big[ \log Z_t(x_t,z;A_t) \Big] \geq - \sum_{t=1}^T \mathbb{E}_{q(x_{0:T},z)}\Big[ \log 1 \Big] = 0
    \end{equation}

    Second, we investigate its upper bound. Note that $Z_t(x_t,z;A_t)$ can be written as:
    \begin{equation}
        Z_t(x_t,z;A_t) = \int p_{\theta}(x_{t-1}|x_t) \left( \frac{p_{\theta}(x_{t-1}|x_t,z)}{p_{\theta}(x_{t-1}|x_t)} \right)^{A_t} d x_{t-1} = \mathbb{E}_{p_{\theta}(x_{t-1}|x_t)} \left[ \left( \frac{p_{\theta}(x_{t-1}|x_t,z)}{p_{\theta}(x_{t-1}|x_t)} \right)^{A_t}\right]
    \end{equation}

    Leveraging the concavity of the logarithm and Jensen's inequality, we have
    \begin{align}
    \log  Z_t&(x_t,z;A_t) \geq \mathbb{E}_{p_{\theta}(x_{t-1}|x_t)} \left[ \log \left( \frac{p_{\theta}(x_{t-1}|x_t,z)}{p_{\theta}(x_{t-1}|x_t)} \right)^{A_t}\right] \\&= A_t\mathbb{E}_{p_{\theta}(x_{t-1}|x_t)} \left[ \log  \frac{p_{\theta}(x_{t-1}|x_t,z)}{p_{\theta}(x_{t-1}|x_t)} \right] = -A_t D_{\text{KL}}(p_{\theta}(x_{t-1}|x_t)||p_{\theta}(x_{t-1}|x_t,z))
    \end{align}

    Therefore, 
    \begin{equation}
         - \sum_{t=1}^T \mathbb{E}_{q(x_{0:T},z)}\Big[ \log Z_t(x_t,z;A_t) \Big] \leq \sum_{t=1}^T \mathbb{E}_{q(x_{0:T},z)} A_t D_{\text{KL}}(p_{\theta}(x_{t-1}|x_t)||p_{\theta}(x_{t-1}|x_t,z))
    \end{equation}

    Similarly, using the alternative expression of $Z_t(x_t,z;A_t)$ as follows:
    \begin{align}
        Z_t(x_t,z;A_t) &= \int p_{\theta}(x_{t-1}|x_t,z) \left( \frac{p_{\theta}(x_{t-1}|x_t)}{p_{\theta}(x_{t-1}|x_t,z)} \right)^{1-A_t} d x_{t-1} \\&= \mathbb{E}_{p_{\theta}(x_{t-1}|x_t,z)} \left[ \left( \frac{p_{\theta}(x_{t-1}|x_t)}{p_{\theta}(x_{t-1}|x_t,z)} \right)^{1-A_t}\right]
    \end{align}
    we have
    \begin{align}
    &\log  Z_t(x_t,z;A_t) \geq \mathbb{E}_{p_{\theta}(x_{t-1}|x_t,z)} \left[ \log \left( \frac{p_{\theta}(x_{t-1}|x_t)}{p_{\theta}(x_{t-1}|x_t,z)} \right)^{1-A_t}\right] \\&= (1-A_t)\mathbb{E}_{p_{\theta}(x_{t-1}|x_t,z)} \left[ \log  \frac{p_{\theta}(x_{t-1}|x_t)}{p_{\theta}(x_{t-1}|x_t,z)} \right] = -(1-A_t) D_{\text{KL}}(p_{\theta}(x_{t-1}|x_t,z)||p_{\theta}(x_{t-1}|x_t))
    \end{align}
    and subsequently
    \begin{equation}
         - \sum_{t=1}^T \mathbb{E}_{q(x_{0:T},z)}\Big[ \log Z_t(x_t,z;A_t) \Big] \leq \sum_{t=1}^T \mathbb{E}_{q(x_{0:T},z)} (1-A_t) D_{\text{KL}}(p_{\theta}(x_{t-1}|x_t,z)||p_{\theta}(x_{t-1}|x_t))
    \end{equation}

    Putting both together, we derive the upper bound of the log-normalization term as
    \begin{align}
         \min\Bigg\{ &\sum_{t=1}^T \mathbb{E}_{q(x_{0:T},z)} A_t D_{\text{KL}}(p_{\theta}(x_{t-1}|x_t)||p_{\theta}(x_{t-1}|x_t,z)),\\&\sum_{t=1}^T \mathbb{E}_{q(x_{0:T},z)} (1-A_t) D_{\text{KL}}(p_{\theta}(x_{t-1}|x_t,z)||p_{\theta}(x_{t-1}|x_t)) \Bigg\}
    \end{align}

    Finally, we calculate the closed-form expression under the Gaussian assumption for the reverse process $p_{\theta}(x_{t-1}|x_t)$ and $p_{\theta}(x_{t-1}|x_t,z)$ and denoting $d$ as the dimensionality of $x_t$, i.e.,
    \begin{align}
        &p_{\theta}(x_{t-1}|x_t,z) = \frac{1}{(2\pi\sigma_t^2)^{d/2}}\exp\left(-\frac{|x_{t-1} - \mu_{\theta}^c(x_t,z,t)|^2}{2\sigma_t^2}\right)\\
&p_{\theta}(x_{t-1}|x_t) = \frac{1}{(2\pi\sigma_t^2)^{d/2}}\exp\left(-\frac{|x_{t-1} - \mu_{\theta}^u(x_t,t)|^2}{2\sigma_t^2}\right)
    \end{align}

Multiplying them together yields:
\begin{equation}
    p_{\theta}^{A_t}(x_{t-1}|x_t,z) p_{\theta}^{1-A_t}(x_{t-1}|x_t) = \frac{1}{(2\pi\sigma_t^2)^{d/2}}\exp\left(-\frac{f_{\theta}(x_{t-1};x_t,z,A_t)}{2\sigma_t^2}\right)
\end{equation}
where $f_{\theta}(x_{t-1};x_t,z,A_t):=A_t|x_{t-1} - \mu_{\theta}^c(x_t,z,t)|^2 + (1-A_t)|x_{t-1} - \mu_{\theta}^u(x_t,t)|^2$ is the exponent term. For simplicity of notation, use $\mu_c$ to represent $\mu_{\theta}^c(x_t,z,t)$ and $\mu_u$ to represent $\mu_{\theta}^u(x_t,t)$. The exponent term $f_{\theta}(x_{t-1};x_t,z,A_t)$ can be expanded as:
\begin{align}
    f_{\theta}(x_{t-1};&x_t,z,A_t) = A_t|x_{t-1} - \mu_c|^2 + (1-A_t)|x_{t-1} - \mu_u|^2 \\&= A_t(x_{t-1} - \mu_c)^T(x_{t-1} - \mu_c) + (1-A_t)(x_{t-1} - \mu_u)^T(x_{t-1} - \mu_u) \\&= A_t(x_{t-1}^Tx_{t-1} - 2\mu_c^Tx_{t-1} + \mu_c^T\mu_c) + (1-A_t)(x_{t-1}^Tx_{t-1} - 2\mu_u^Tx_{t-1} + \mu_u^T\mu_u) \\&= x_{t-1}^Tx_{t-1} - 2(A_t\mu_c + (1-A_t)\mu_u)^Tx_{t-1} + A_t\mu_c^T\mu_c + (1-A_t)\mu_u^T\mu_u \\&= |x_{t-1}-\mu_{\text{weighted}}|^2+A_t\mu_c^T\mu_c + (1-A_t)\mu_u^T\mu_u - \mu_{\text{weighted}}^T \mu_{\text{weighted}}
\end{align}
where $\mu_{\text{weighted}} = A_t\mu_c + (1-A_t)\mu_u$. Denote 
\begin{equation}
    C:=A_t\mu_c^T\mu_c + (1-A_t)\mu_u^T\mu_u - \mu_{\text{weighted}}^T \mu_{\text{weighted}}
\end{equation} 
Then,
\begin{equation}
    p_{\theta}^{A_t}(x_{t-1}|x_t,z) p_{\theta}^{1-A_t}(x_{t-1}|x_t) = \frac{1}{(2\pi\sigma_t^2)^{d/2}}\exp\left(-\frac{|x_{t-1}-\mu_{\text{weighted}}|^2}{2\sigma_t^2}\right) \exp\left(-\frac{C}{2\sigma_t^2}\right) 
\end{equation}

By integrating over $x_{t-1}$ and using the fact that the integral of a normalized Gaussian density is 1, we obtain the expression for $Z_t(x_t, z; A_t)$:
\begin{align}
   Z_t(x_t,z;A_t) = \exp\left(-\frac{C}{2\sigma_t^2}\right) \int \frac{1}{(2\pi\sigma_t^2)^{d/2}}\exp\left(-\frac{|x_{t-1}-\mu_{\text{weighted}}|^2}{2\sigma_t^2}\right) dx_{t-1} = \exp\left(-\frac{C}{2\sigma_t^2}\right)
\end{align}

Further, $C$ can be simplified as
\begin{align}
    C &= A_t\mu_c^T\mu_c + (1-A_t)\mu_u^T\mu_u - (A_t\mu_c + (1-A_t)\mu_u)^T (A_t\mu_c + (1-A_t)\mu_u) \\&= A_t (1-A_t)\mu_c^T\mu_c + A_t (1-A_t)\mu_u^T\mu_u - 2 A_t (1-A_t)\mu_c^T\mu_u \\&= A_t (1-A_t)|\mu_c-\mu_u|^2
\end{align}

This gives us the final closed-form expression:
\begin{align}
    - \sum_{t=1}^T \mathbb{E}_{q(x_{0:T},z)}\Big[ \log Z_t(x_t,z;A_t) \Big] &= \sum_{t=1}^T \mathbb{E}_{q(x_{0:T},z)} \frac{A_t(1-A_t)}{2\sigma_t^2}|\mu_c-\mu_u|^2\\&=\sum_{t=1}^T \frac{A_t(1-A_t)}{2\sigma_t^2}\mathbb{E}_{q(x_{0:T},z)} |\mu_{\theta}^c(x_t,z,t)-\mu_{\theta}^u(x_t,t)|^2
\end{align}

\end{proof}
According to~\Cref{eq:normgauss}, the log-normalization term quantifies the discrepancy between the conditional and unconditional model predictions, scaled by the variance and the product of $A_t$ and $(1-A_t)$. Since the first KL divergence term already encourages extracting useful information from $z$ to aid estimation, we can interpret the log-normalization term as a stop-gradient applied to the first conditional term $\mu_{\theta}^c(x_t,z,t)$. As such, it acts as an \textit{inherent regularization mechanism} that encourages alignment between the two. This implicitly promotes the model to extract from $x_t$ the aspects most informative about $z$, thereby improving the unconditional estimation.

\subsubsection{Derivation of \Cref{eq:multi}}
In the multi-latent representation setting, the single latent variable $z$ in \Cref{prop:decomp} and \Cref{eq:decompbound} can be replaced by the collection of latent variables $\mathbf{z}^L$. In particular, the third KL divergence term—which corresponds to the generation of $\mathbf{z}^L$—can be structured sequentially in the order $1, 2, \ldots, L$. Formally, this is expressed as:
\begin{align}
    D_{\text{KL}} (q(\mathbf{z}^L|x_0)||p_{\theta}(\mathbf{z}^L|x_t))&=\mathbb{E}_{q(\mathbf{z}^L|x_0)}\Bigg[\log \frac{q(\mathbf{z}^L|x_0)}{p_{\theta}(\mathbf{z}^L|x_t)}\Bigg] \\&=\mathbb{E}_{q(\mathbf{z}^L|x_0)}\Bigg[\log \frac{\prod_{l=1}^L q(z_l|x_0)}{\prod_{l=1}^L p_{\theta}(z_l|x_t, z_{<l})}\Bigg] \\&= \mathbb{E}_{q(\mathbf{z}^L|x_0)}\Bigg[\sum_{l=1}^L\log \frac{q(z_l|x_0)}{p_{\theta}(z_l|x_t, z_{<l})}\Bigg]\\&=\sum_{l=1}^L\mathbb{E}_{q(z_{<l}|x_0)}\mathbb{E}_{q(z_l|x_0)}\Bigg[\log \frac{q(z_l|x_0)}{p_{\theta}(z_l|x_t, z_{<l})}\Bigg] \\&= \sum_{l=1}^L \mathbb{E}_{q(z_{<l}|x_0)}\Big[ D_{\text{KL}} (q(z_l|x_0)||p_{\theta}(z_l|x_t, z_{<l}))\Big] 
\end{align}
\vspace{1mm}

\subsubsection{Further Justifications of \Cref{subsec:unify}}
\paragraph{Justification of the Distribution Approximation.} In \Cref{subsec:unify}, we address the training-inference discrepancy arising from the lack of ground-truth representation $z$ during inference by approximating the hybrid distribution $\tilde{p}_{\theta}^{\text{linear}}(x_{t-1}|x_t,z;A_t)$—associated with linear cumulative weights—with $p_{\theta}(x_{t-1}|x_t,\hat{\mu}_t^z)$ for both training and inference. Here, $\hat{\mu}_t^z$ denotes the mean of the current best estimate $p_{\theta}(z|x_t)$. We contend that this approximation enhances representation guidance as the process denoises, which is consistent with the linear weight schedule. We provide addition justifications to such approximation in this subsection.

We first examine the behavior of the hybrid model as described in \Cref{prop:hybridscore}.

\begin{prop}[Score Function of the Hybrid Distribution]
\label{prop:hybridscore}
    The score function of the marginal distribution $\tilde{p}_{\theta}(x_{t-1}|z;A_t)$ implied by the hybrid reverse process $\tilde{p}_{\theta}(x_{t-1}|x_t,z;A_t)$ defined in \Cref{eq:tildep} satisfies:
    \begin{equation}
        \nabla_{x_{t-1}} \log \tilde{p}_{\theta}(x_{t-1}|z;A_t)= A_t\nabla_{x_{t-1}} \log p_{\theta}(x_{t-1}|z) + (1-A_t)\nabla_{x_{t-1}} \log p_{\theta}(x_{t-1})
    \end{equation}
    where $p_{\theta}(x_{t-1}|z)$ and $p_{\theta}(x_{t-1})$ are the marginal distributions implied by the conditional and unconditional reverse processes, $p_\theta(x_{t-1}|x_t,z)$ and $p_\theta(x_{t-1}|x_t)$, respectively. 
\end{prop}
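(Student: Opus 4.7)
The plan is to show that the hybrid transition kernel $\tilde{p}_\theta(x_{t-1}|x_t,z;A_t)$ is the Bayesian reverse kernel associated with a specific hybrid marginal obtained as a geometric interpolation of $p_\theta(x_{t-1}|z)$ and $p_\theta(x_{t-1})$, and then differentiate the log of that marginal. The key observation is that once the marginal is identified as such an interpolation, the claimed identity follows from direct differentiation.

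First, I would use the (idealized) Bayes-consistency between each learned reverse transition and the shared forward kernel $q(x_t|x_{t-1})$, namely $p_\theta(x_{t-1}|x_t,z)\propto q(x_t|x_{t-1})\, p_\theta(x_{t-1}|z)$ and $p_\theta(x_{t-1}|x_t)\propto q(x_t|x_{t-1})\, p_\theta(x_{t-1})$, where the constants of proportionality depend only on $x_t$ (and $z$). Substituting these into the geometric product $p_\theta^{A_t}(x_{t-1}|x_t,z)\,p_\theta^{1-A_t}(x_{t-1}|x_t)$ and absorbing every factor that does not depend on $x_{t-1}$ into the normalization, I would arrive at
\[
\tilde{p}_\theta(x_{t-1}|x_t,z;A_t)\;\propto\;q(x_t|x_{t-1})\cdot p_\theta(x_{t-1}|z)^{A_t}\, p_\theta(x_{t-1})^{1-A_t},
\]
which is precisely the Bayesian reverse kernel for the marginal $\tilde{p}_\theta(x_{t-1}|z;A_t)\propto p_\theta(x_{t-1}|z)^{A_t}\, p_\theta(x_{t-1})^{1-A_t}$. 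Hence the marginal implied by the hybrid reverse process is this geometric interpolation, and taking $\nabla_{x_{t-1}}\log(\cdot)$ of the right-hand side (the $x_{t-1}$-free log-normalizer drops out) immediately gives $A_t\nabla_{x_{t-1}}\log p_\theta(x_{t-1}|z)+(1-A_t)\nabla_{x_{t-1}}\log p_\theta(x_{t-1})$.

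The main obstacle is justifying the Bayes-consistency step, since in a trained discrete-time model it holds only at the Bayes-optimal reverse kernel that matches the forward posterior. A cleaner alternative that leverages machinery already used in the paper is to invoke the Gaussian parameterization with shared variance $\sigma_t^2$ employed in the proof of \Cref{prop:normbound}: the hybrid kernel is then exactly Gaussian with mean $A_t\mu_\theta^c(x_t,z,t)+(1-A_t)\mu_\theta^u(x_t,t)$, and the standard DDPM/score-matching identity that expresses a Gaussian reverse mean as an affine function of the marginal score converts this weighted mean into the weighted score of the marginal at $x_{t-1}$. Via the equivalence between DDPM and continuous-time score-based SDEs noted in \Cref{sec:theory}, one can equivalently phrase the argument through Anderson's reverse-SDE theorem, where the reverse drift built from the weighted score $A_t\nabla\log p_t(x|z)+(1-A_t)\nabla\log p_t(x)$ has marginals whose score is, by construction, exactly that weighted combination. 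Either route ends with the routine differentiation above.
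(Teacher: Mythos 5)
Your primary route is essentially the paper's own argument: both rest on the Bayes-consistency relation $p_{\theta}(x_{t-1}|x_t,\cdot)\propto q(x_t|x_{t-1})\,p_{\theta}(x_{t-1}|\cdot)$ with the fixed forward kernel (an idealization the paper also makes implicitly), followed by a direct differentiation, with your version merely reorganizing the algebra to exhibit the hybrid marginal explicitly as the geometric interpolation $\tilde{p}_{\theta}(x_{t-1}|z;A_t)\propto p_{\theta}^{A_t}(x_{t-1}|z)\,p_{\theta}^{1-A_t}(x_{t-1})$, which the paper leaves implicit. The proposal is correct; the Gaussian and reverse-SDE fallbacks you mention are unnecessary but consistent with the result.
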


\begin{proof}

    For any reverse process $p(x_{t-1}|x_t)$ and its implied marginal $p(x_{t-1})$, the following relation holds under the fixed forward process $q(x_t|x_{t-1})$ based on the Bayes' rule:
    \begin{equation}
        p(x_{t-1}|x_t)=q(x_t|x_{t-1}) p(x_{t-1})/p(x_t)
    \end{equation}

    By taking the logarithm and the gradient with respect to $x_{t-1}$ and rearranging, we have:
    \begin{equation}
        \nabla_{x_{t-1}} \log p(x_{t-1}) = -\nabla_{x_{t-1}} \log q(x_t|x_{t-1}) + \nabla_{x_{t-1}} \log p(x_{t-1}|x_t)
    \end{equation}

    Applying this to different reverse processes, including the unconditional (i.e., $p_{\theta}(x_{t-1}|x_t)$), conditional (i.e., $p_{\theta}(x_{t-1}|x_t,z)$), and the hybrid  (i.e., $\tilde{p}_{\theta}(x_{t-1}|x_t,z;A_t)$) ones, we have
    \begin{align}
        \nabla_{x_{t-1}} \log p_{\theta}(x_{t-1}) &= -\nabla_{x_{t-1}} \log q(x_t|x_{t-1}) + \nabla_{x_{t-1}} \log p_{\theta}(x_{t-1}|x_t)\\
        \nabla_{x_{t-1}} \log p_{\theta}(x_{t-1}|z) &= -\nabla_{x_{t-1}} \log q(x_t|x_{t-1}) + \nabla_{x_{t-1}} \log p_{\theta}(x_{t-1}|x_t,z)\\
        \nabla_{x_{t-1}} \log \tilde{p}_{\theta}(x_{t-1}|z;A_t) &= -\nabla_{x_{t-1}} \log q(x_t|x_{t-1}) + \nabla_{x_{t-1}} \log \tilde{p}_{\theta}(x_{t-1}|x_t,z;A_t)
    \end{align}
    
    Further, the gradient of the log-hybrid reverse process can be written as
    \begin{align}
        \nabla_{x_{t-1}} \log \tilde{p}_{\theta}(x_{t-1}|x_t,z;A_t) &= \nabla_{x_{t-1}} \log \Bigg(\frac{1}{Z_t(x_t,z)} p_{\theta}^{A_t}(x_{t-1}|x_t,z) p_{\theta}^{1-A_t}(x_{t-1}|x_t)\Bigg)\\&=A_t\nabla_{x_{t-1}} \log p_{\theta}(x_{t-1}|x_t,z) + (1-A_t)\nabla_{x_{t-1}} \log p_{\theta}(x_{t-1}|x_t)
    \end{align}
Therefore, 
\begin{align}
    \nabla_{x_{t-1}} \log \tilde{p}_{\theta}(x_{t-1}|z;A_t) &= A_t(-\nabla_{x_{t-1}} \log q(x_t|x_{t-1}) + \nabla_{x_{t-1}} \log p_{\theta}(x_{t-1}|x_t,z)) \\&\quad+ (1-A_t)(-\nabla_{x_{t-1}} \log q(x_t|x_{t-1}) + \nabla_{x_{t-1}} \log p_{\theta}(x_{t-1}|x_t)) \\&=A_t\nabla_{x_{t-1}} \log p_{\theta}(x_{t-1}|z) + (1-A_t)\nabla_{x_{t-1}} \log p_{\theta}(x_{t-1})
\end{align}

\end{proof}

\Cref{prop:hybridscore} shows that the hybrid distribution's score function interpolates between those of the conditional and unconditional models according to the weight $A_t$, which decreases over time. Intuitively, at earlier (less noisy) timesteps, the model places more emphasis on the conditional path (leveraging $z$), while at later timesteps, it increasingly relies on the unconditional model. In practice, by leveraging $p_{\theta}(x_{t-1}|x_t,\hat{\mu}_t^z)$, we approximate the hybrid score $\nabla_{x_{t-1}} \log \tilde{p}_{\theta}(x_{t-1}|z;A_t)$ using the tractable score $\nabla_{x_{t-1}} \log p_{\theta}(x_{t-1}|\hat{\mu}_t^z)$, where $\hat{\mu}_t^z$ is predicted during inference. 

Observe that, by Bayes’ theorem, the marginal distributions admit the following factorizations:
\begin{align}
    p_{\theta}^{A_t}(x_{t-1}|z) p_{\theta}^{1-A_t}(x_{t-1})&=p_{\theta}(x_{t-1})p_{\theta}^{A_t}(z|x_{t-1})/p_{\theta}^{A_t}(z)\\
    p_{\theta}(x_{t-1}|\hat{\mu}_t^z)&=p_{\theta}(x_{t-1})p_{\theta}(\hat{\mu}_t^z|x_{t-1})/p_{\theta}(\hat{\mu}_t^z)
\end{align}

Taking the gradients of their logarithms, we obtain the following decompositions:
\begin{align}
    \nabla_{x_{t-1}} \log \tilde{p}_{\theta}(x_{t-1}|z;A_t) &= \nabla_{x_{t-1}} \log p_{\theta}(x_{t-1}) + A_t \nabla_{x_{t-1}} \log p_{\theta}(z|x_{t-1})\\
    \nabla_{x_{t-1}} \log p_{\theta}(x_{t-1}|\hat{\mu}_t^z) &= \nabla_{x_{t-1}} \log p_{\theta}(x_{t-1}) + \nabla_{x_{t-1}} \log p_{\theta}(\hat{\mu}_t^z|x_{t-1})
\end{align}

These expressions highlight that $\nabla_{x_{t-1}} \log p_{\theta}(\hat{\mu}_t^z|x_{t-1})$ serves as a practical surrogate for the intractable term $A_t \nabla_{x_{t-1}} \log p_{\theta}(z|x_{t-1})$. Note that the property of the approximation naturally varies with $t$. With a smaller $t$, the estimation $\hat{\mu}^z_t = \mathbb{E}_{p_{\theta}(z|x_t)}[z] \approx \mathbb{E}_{q(z|x_t)}[z]$ is closer to the true latent $z \sim q(z|x_0)$, since less noise has been injected into the input following the Markov structure $z - x_0 - x_t$. As a result, the information content in $\hat{\mu}^z_t$ is more aligned with $z$ when $t$ is small, which matches the increasing influence of the conditional component via $A_t$.

This time-dependent behavior facilitates a smooth interpolation between conditional and unconditional estimation, closely resembling the mechanism of classifier-free guidance~\citep{ho2022classifier}. Instead of explicitly sampling from both modes, our approach achieves this interpolation implicitly, using the predicted latent representation $\hat{\mu}^z_t$. This built-in mechanism ensures coherence between the training objective and the inference procedure.

\paragraph{KL Divergence under the von Mises-Fisher (vMF) Distribution.}
The third term in $\mathcal{L}_{\text{VB}}^z$ (\Cref{eq:decompbound}) represents the KL divergence between the predicted distribution $p_{\theta}(z|x_t)$ and the reference distribution $q(z|x_0)$. This measure of discrepancy can be expressed in a tractable form by making suitable assumptions about the distributions. In particular, if both $p_{\theta}(z|x_t)$ and $q(z|x_0)$ follow von Mises-Fisher (vMF) distributions--i.e., $p_{\theta}(z|x_t) \sim \mathrm{vMF}(\hat{\mu}_t^z, \kappa)$ and $q(z|x_0) \sim \mathrm{vMF}(\mu_z, \kappa)$, where $\kappa$ is a concentration parameter controlling uncertainty--the KL divergence between these distributions is:
\begin{equation}
    D_{\text{KL}} (q(z|x_0)||p_{\theta}(z|x_t)) = \kappa A_p(\kappa)-\kappa A_p(\kappa) \mu_z^T \hat{\mu}_t^z
\end{equation}
where $A_p(\kappa) :=I_{p/2}(\kappa)/I_{p/2-1}(\kappa)$, and $I_v$ denotes the modified Bessel function of the first kind at order $v$. 

Thus, minimizing the KL divergence amounts to maximizing the cosine similarity between the model's predicted mean direction $\hat{\mu}_t^z$ and the ground truth mean $\mu_z$, up to a scaling factor determined by $\kappa$. In practice, this objective is modulated by the representation alignment weight $\beta(n)$ (\Cref{eq:curriculum}), which controls the relative emphasis placed on the representation generation component.

\subsection{Provable Distribution Approximation with Representation Aligned Diffusion Models}\label{subsec:bound}

In this subsection, we quantitatively analyze how representation alignment can help to improve the sampling quality of diffusion models. We provide the result for representation-aligned DDPM in \Cref{theorem_tv_bound_appendix}.

\begin{theorem}\label{theorem_tv_bound_appendix}
Consider the random variable $x \in \mathbb{R}^{d}$ with ground truth distribution $x \sim q(x):=q(x_0)$. Assume that the second moment $m$ of $x$ is bounded as $m^2 := \mathbb{E}_{q(x)}[\|x - \bar{x}\|^2] < \infty$, where $\bar{x} := \mathbb{E}_{q(x)}[x]$, and that the score $\nabla \log q(x_t)$ is $L$-Lipschitz for all $t$. For a practically trained diffusion model parameterized by $\theta$, the score estimation error at timestep $t$ is bounded by $\epsilon_{t, \theta}$, i.e., $\mathbb{E}_{q_t(x_t)}[\|s_\theta(x_t, t, \hat z_{t,\theta}) - \nabla \log q_t(x_t)\|^2] \leq \epsilon_{t, \theta}^2$. Denote the step size as $h := T/N$, where $T$ is the total diffusion time and $N$ is the number of discretization steps, and assume that $h \preceq 1/L$. Denote $k$-dim isotropic Gaussian distribution as $\gamma^k$. Then the following holds,
\begin{align}
{\rm TV}(p_{\theta}, q)
\preceq
\underbrace{\sqrt{{\rm KL}(q||\gamma^{d})}\exp(-T)}_{\text{convergence of forward process}}
+ \underbrace{(L\sqrt{dh}+Lm h)\sqrt{T}}_{\text{discretization error}}
+ \underbrace{\sum_{k=1}^{N}\epsilon_{kh, \theta}\sqrt h }_{\text{score estimation error}}
\end{align}
\end{theorem}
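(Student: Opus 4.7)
The plan is to follow the Girsanov-based TV-distance analysis for diffusion models (in the style of Chen et al.) and adapt it to the representation-conditioned score. The three terms in the bound will arise as three additive contributions to a KL-divergence bound between the model's path measure and the true reverse-time path measure, which I then convert to TV via Pinsker's inequality.

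First I would observe that $\hat z_{t,\theta}$ is itself a function of $x_t$ inferred through $p_\theta(z \mid x_t)$, so $s_\theta(x_t,t,\hat z_{t,\theta})$ is just a particular state-dependent parameterization of a score field. Any gap between it and $\nabla \log q_t(x_t)$ is already absorbed into the per-step score estimation bound $\epsilon_{t,\theta}$. This reduces the representation-conditioned case to the standard setting and lets me apply Girsanov between (a) the time-discretized, $\gamma^d$-initialized reverse SDE driven by $s_\theta$ and (b) the true reverse-time SDE driven by $\nabla \log q_t$. The resulting KL decomposes into (i) an initialization gap, (ii) a drift-discretization gap, and (iii) a score-approximation gap.

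Next I would bound each piece separately. (i) For initialization, exponential convergence of the OU semigroup yields ${\rm KL}(q_T \| \gamma^d) \preceq {\rm KL}(q \| \gamma^d)\exp(-2T)$, and Pinsker produces the first term. (ii) For the Euler--Maruyama discretization with step $h \preceq 1/L$, the $L$-Lipschitz assumption on the score combined with the bounded second moment $m^2$ controls the expected local drift error at step $kh$ by $O\bigl(L^2(dh + m^2 h^2)\bigr)$; summing over the $N = T/h$ steps yields an aggregate KL of order $L^2(dh+m^2 h^2)\,T$, and Pinsker yields the second term. (iii) For score estimation, Girsanov contributes $\tfrac{1}{2}\sum_k \epsilon_{kh,\theta}^2 h$ to the KL; applying the triangle inequality for TV across successive discretization intervals (equivalently, subadditivity of $\sqrt{\cdot}$ applied per step) converts this to the stated $\sum_k \epsilon_{kh,\theta}\sqrt h$ contribution.

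The principal obstacle is step (ii): carrying out the discretization argument requires a Gr\"onwall-type estimate showing that $\mathbb{E}\|x_{kh}\|^2$ remains uniformly $O(d+m^2)$ along the entire discretized reverse trajectory, so that the per-step drift error does not compound and so that the constants hidden in $\preceq$ stay tame under $h \preceq 1/L$. Once this is in place, the remainder is essentially bookkeeping: assembling the three contributions, invoking Pinsker on the initialization and discretization parts, and using the triangle inequality for TV to split and recombine the score-estimation part into the final additive form.
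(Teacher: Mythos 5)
Your proposal follows essentially the same route as the paper's proof: a Girsanov/KL analysis in the style of Chen et al., with the representation-conditioned score absorbed into the per-step error $\epsilon_{t,\theta}$, OU convergence for the initialization term, a Lipschitz-plus-second-moment bound for the discretization term, and Pinsker together with subadditivity of the square root (the paper invokes Cauchy--Schwarz) to reach the stated additive form. The only substantive difference is that the obstacle you flag in step (ii) does not arise in the paper's argument: the Girsanov expectations are taken under the true reverse path measure $Q_T^\leftarrow$, whose time-$kh$ marginals are the forward-process marginals, so $\mathbb{E}\|x_{kh}\|^2 \preceq d+m^2$ holds immediately and no Gr\"onwall estimate along the model's discretized trajectory is needed.
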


\begin{proof}
    Recall the notation that $p_{\theta}(x):=p_0$ is the distribution predicted by the denoising network $\theta$ starting from Gaussian noise $\gamma^{d}$. Consider the two measures over the path space: (i) $Q_T^\leftarrow$, where $(X_t)_{t\in[0,T]}$ follows the law of the reverse process; (ii) $P_T^{q_T}$, under which $(X_t)_{t\in[0,T]}$ has the law of the score matching generative model algorithm initialized at $q_T$ instead of $\gamma^{d}$, where $q_T$ is the distribution at timestep $T$ in the forward process starting from $q_0:=q$. Denote the end distribution of $P_T^{q_T}$ as $p_0^{q_T}$, we have the following inequality: 
    \begin{equation}
        {\rm TV}(p_0,q)\leq {\rm TV} (p_0, p_{0}^{q_T})+{\rm TV}(p_{0}^{q_T}, q_0)
    \end{equation}
    The convergence of the OU process in KL divergence~\citep{SamplingScoreTheory} bounds the first term,
    \begin{equation}
        {\rm TV} (p_{0}, p_{0}^{q_T})\leq {\rm TV}(\gamma^{d}, q_T)\leq \sqrt{{\rm KL}(q||\gamma^{d})}\exp (-T)
    \end{equation}
    The second term essentially consists of score estimation error and discretization error. We next show the following holds,
    \begin{equation}\label{bound_discretization}
        {\rm TV}(p_{0}^{q_T}, q_0)^2\leq {\rm KL}(q_0||p_{0}^{q_T}) \preceq h(\sum_{k=1}^{T/h} \epsilon_{kh, \theta}^2)+(L^2dh+L^2m^2h^2) T
    \end{equation}
    We start proving \Cref{bound_discretization} by proving
    \begin{equation}
        \sum_{k=1}^{N}\mbb E_{Q_T^\leftarrow}\int_{(k-1)h}^{kh}||s_\theta^{(kh)}(x_{kh}, kh, \hat z_{t, \theta}) -\nabla \ln q_t(x_t)||^2 {\rm d}t\preceq(\sum_{k=1}^N\epsilon_{kh, \theta}^2)h+(L^2dh+L^2m^2h^2) T
    \end{equation}

    For $t\in [(k-1)h, kh]$, we decompose
    \begin{align}\label{decom_discrete}
        &\mbb E_{Q_T^\leftarrow}[||s_\theta^{(kh)}(x_{kh}, kh, \hat z_{kh, \theta}) -\nabla \ln q_t(x_t)||^2]\\ \preceq & \mbb E_{Q_T^\leftarrow}[||s_\theta^{(kh)}(x_{kh}, kh, \hat z_{kh, \theta})-\nabla q_{kh}(x_{kh})||^2]+\mbb E_{Q_T^\leftarrow}[||\nabla q_{kh}(x_{kh})-\nabla q_{t}(x_{kh}) ||^2]\\ & +\mbb E_{Q_T^\leftarrow}[||\nabla q_{t}(x_{kh})-\nabla q_{t}(x_{t}) ||^2]\\ \preceq & \epsilon_{kh, \theta}^2+\mbb E_{Q_T^\leftarrow}\Big[\big|\big|\nabla \ln\frac{q_{kh}}{q_{t}}(x_{kh})\big|\big|^2\Big]+L^2 \mbb E_{Q_T^\leftarrow}[||x_{kh}-x_t||^2]
    \end{align}

    Utilizing Lemma 16 from \citep{SamplingScoreTheory}, we bound the second term as follows,
    \begin{equation}
        \big|\big|\nabla \ln\frac{q_{kh}}{q_{t}}(x_{kh})\big|\big|^2 \preceq L^2dh+L^2h^2||x_{kh}||^2+(L^2+1)h^2||\nabla \ln q_{t}(x_{kh})||^2
    \end{equation}
    where
    \begin{align}
        ||\nabla\ln q_t(x_{kh})||^2&\preceq ||\nabla\ln q_t(x_{t})||^2+||\nabla\ln q_t(x_{kh})-\nabla\ln q_t(x_t)||^2\\ &\preceq ||\nabla\ln q_t(x_{t})||^2 + L^2 ||x_{kh}-x_t||^2
    \end{align}
    Combining all the terms, we obtain
    \begin{align}
        &\mbb E_{Q_T^\leftarrow}[||s_\theta^{(kh)}(x_{kh}, kh, \hat z_{kh,\theta}) -\nabla \ln q_t(x_t)||^2]\\ \preceq & \epsilon_{kh, \theta}^2+L^2dh+L^2h^2 \mbb E_{q_0}[||x_{kh}||^2]+L^2h^2\mbb E_{q_0}[||\nabla\ln q_t(x_t)||^2]+L^2\mbb E_{q_0}[||x_{kh}-x_t||^2]\\ \preceq & \epsilon_{kh,\theta}^2 +L^2dh+L^2h^2(d+m^2)+L^3dh^2+L^2(m^2h^2+dh)\\ \preceq & \epsilon_{kh,\theta}^2 +L^2dh+L^2h^2m^2
    \end{align}
    Similarly to \citep{SamplingScoreTheory}, we can apply the Girsanov theorem and complete stochastic integration using the properties of Brownian motions and local martingales. Note that $q_0$ is the end of the reverse SDE, by the lower semicontinuity of the KL divergence and the data-processing inequality~\citep{beaudry2011intuitive}, we take the limit and obtain
    \begin{equation}
        {\rm KL}(q_0||p_{0}^{q_T}) 
        \preceq (\sum_{k=1}^N\epsilon_{kh,\theta}^2)\frac{T}{N} +(L^2dh+L^2h^2m^2) T
    \end{equation}
    where we recall that $N:=\frac{T}{h}$.
    We finally conclude with Pinsker's inequality ($\text{TV}^2\leq \text{KL}$) and Cauchy-Schwarz inequality.
    
\end{proof}

\Cref{theorem_tv_bound_appendix} is the first theoretical result that gives a strict TV distance bound between the distribution generated by representation-aligned diffusion models and the ground truth. This is a tighter bound compared to \citep{chen2023sampling} thanks to the benefit of external representations in improving the score estimation. In particular, a common bound of the score estimation error $\epsilon_{\rm score}^2$ assumed at all timesteps $t$ in \citep{chen2023sampling}. In our case, however, since the model latents $\hat z_{t,\theta}$ is a good estimation of the ground truth representation $z$, it provides nonzero information about the score, resulting in a smaller error. 
To interpret this, note that the score $\nabla \log q(x_t)|_{x_t=\tilde x}$ is not a function only of the individual sample $\tilde x$, but a function of the entire distribution $q(x_t)$ that is related to other samples. External representations, taking advantage of their pretraining tasks and additional training corpus, are able to capture part of the distributional information. Therefore, we can express the score function as $\nabla \log q(x_t)|_{x_t=\tilde x}= \psi\big(\tilde x, \hat z_t(\tilde x), r(x_t)|_{x_t=\tilde x}\big)$, where $\hat z_t(\tilde x)$ is the estimated representation of $\tilde x$ and $r(x_t)|_{x_t=\tilde x}$ denotes the remaining information not captured by either $\tilde x$ or $\hat z_t(\tilde x)$. When training a DDPM, the diffusion loss only helps in predicting $\tilde x$ (due to the equivalence between $x$-parameterization and $\epsilon$-parameterization), while incorporating the representation alignment loss additionally facilitates predicting $\hat{z}_t(\tilde x)$, which is crucial in improving score estimation. Therefore, by choosing suitable representations, a representation-aligned diffusion model is capable of predicting more accurate scores with the help of estimated representations $\hat z_{t,\theta}$, i.e., $\mathbb{E}_{q(x_t)}[\|s_\theta(x_t, t, \hat z_{t,\theta}) - \nabla \log q(x_t)\|^2] \leq \epsilon_{t, \theta}^2\leq \epsilon_{\rm score}^2$. Since predicting representations at smaller $t$ is easier, we can arguably expect that the improvement is even larger at small timesteps $t$.

Another advantage of pretrained representations is their rich and easy-to-obtained information for downstream predictions, which is particularly beneficial in conditional generation. In class-conditioned image generation, for example, class labels can be easily inferred from the DINO representations (e.g., through linear probing), thus aligning diffusion models with these representations provides strong guidance towards the conditions.

\section{Additional Method Details}
\label{appendixsec:method}
\subsection{Further Justifications for the Joint Training Curriculum}
As discussed in \Cref{subsec:schedule}, improved estimates of the representations $\hat{\mu}^z_t$ naturally enhance the modeling of the representation-conditioned distribution in the first term of $\mathcal{L}_{\text{VB}}^z$.
One simple and natural way to fulfill this goal is a two-stage training paradigm, where we first initialize the model by aligning with pretrained representations, and then optimize through the original diffusion loss. This approach is equivalent to optimizing the first and the last term in \Cref{eq:decompbound} independently, offering the model better initializations.
 
However, we argue that two-staged training is suboptimal for the following reasons: (i) the sudden change in loss paradigm leads to gradient direction discontinuities in the optimization dynamics, causing the learned representations to collapse (as observed in our experiments); (ii) the representation spaces of the pretrained encoder and the diffusion model (particularly, the VAE space for latent diffusion) are disjoint, while for diffusion models both representation learning and distribution generation are coherent tasks learned by a single neural network, requiring coherent training flow. Therefore, only conducting representation learning may extract the non-informative features for downstream generation while neglecting the informative part. In other words, simply optimizing the third term in \Cref{eq:decompbound} can result in a suboptimal starting point to optimize the first term, suggesting that diffusion and representation generation loss cannot be fully decoupled.

The above reasoning highlights the criticality of joint training all loss terms in \Cref{eq:decompbound} with the special emphasis on representation learning in the early training period. Hence, as described in \Cref{eq:curriculum}, we propose a single-stage training curriculum where the diffusion loss weight progressively increases from zero, with a fixed or decaying representation alignment loss.

\subsection{Additional Information for Molecule Representations}
Generative models over molecules should respect permutational and geometric symmetries, such as  $S(N)$ symmetry group for 2D graphs and (special) Euclidean group $E(3)$ or $SE(3)$ for 3D point clouds. To preserve symmetries, the denoising network should be equivariant with respect to the group transformations. Equivariance ensures that the resulting diffusion process produces invariant distributions. Analogously, molecule representations can be constructed to be either equivariant or invariant. For example, node (atom) level and edge (bond) level features can be made $S(N)\times E(3)$- or $S(N)\times SE(3)$-equivariant, while graph or molecular level features remain invariant.

To ensure these properties, we follow \citep{georcg} where graph-level invariant representations were shown to improve generation quality with minimal additional computational cost. In our work, we adopt diffusion and flow matching models with equivariant GNN backbones to maintain permutation and geometric equivariances. Additionally, we incorporate lightweight, semantic graph-level invariant representations derived from pretrained $SE(3)$ invariant molecular graph encoders. While equivariant node and edge level representations can also be incorporated as guidance for diffusion model training, we leave this as future directions.

\section{Experimental Details}\label{appendixsec:exp}

\subsection{Image Generation}\label{appendixsec:exp_image}
For continuous diffusion on Euclidean data, we experiment on class-conditional image generation using the ImageNet~\citep{imagenet} dataset at a resolution of $256 \times 256$, following the experimental setup in REPA~\cite{yu2024representation}.

\paragraph{Evaluation Metrics.} 
We generate 50,000 samples for evaluation, using the SDE Euler-Maruyama sampler with the number of diffusion timesteps fixed at 250 for all experiments. Consistent with SiT~\citep{sit} and REPA~\citep{yu2024representation}, we set the last step of the SDE sampler to 0.04. 
We adopt the evaluation protocol and reference batches of ADM~\citep{adm}, utilizing their official implementation\footnote{\url{https://github.com/openai/guided-diffusion/tree/main/evaluations}.}. For evaluation, we use NVIDIA A100 80GB or A6000 50GB GPUs, and enable tf32 precision to accelerate generation as in REPA. We use the following metrics to measure the generation performance:
\begin{itemize}[leftmargin=*]
    \item \textbf{Fr\'{e}chet Inception Distance (FID)}~\citep{fid}. FID is a standard metric for evaluating image generation, measuring the similarity between the distributions of real and generated images. It does so by calculating the distance between their feature embeddings obtained from the Inception-v3 network\citep{inceptionv3}, assuming that both sets of features follow multivariate Gaussian distributions.
    \item \textbf{Inception Score (IS)}~\citep{is}. Also leveraging the Inception-v3 network, IS evaluates the quality and diversity of generated samples by measuring the KL-divergence between the original label distribution and the distribution of logits after softmax normalization.
    \item \textbf{Precision and Recall}~\citep{precision_recall}. These metrics follow classic definitions, with precision indicating the proportion of realistic images, and recall reflecting the fraction of training data manifold covered by generated data.
\end{itemize}

\paragraph{Baselines.}
We compare REED with recent diffusion-based generation baselines that utilize different inputs and architectural designs:
\begin{itemize}[leftmargin=*]
    \item \textit{Pixel diffusion models.} ADM~\cite{adm} enhances U-Net-based diffusion models and introduces classifier-guided sampling to balance quality and diversity. VDM++~\cite{vdmpp} presents an adaptive noise schedule to improve training efficiency. Simple diffusion~\cite{simplediff} focuses on high-resolution image synthesis by simplifying both noise schedules and architectures. CDM~\cite{cdm} proposes a cascaded approach, training several diffusion models at increasing resolutions, similar to progressiveGAN~\cite{progressivegan}, to generate high-fidelity images via successive super-resolution stages.
    \item \textit{Latent diffusion with U-Net.} LDM~\cite{stable_diffusion} proposes latent diffusion models that model image distributions in a compressed latent space, significantly improving training efficiency while maintaining generation quality.
    \item \textit{Latent diffusion using transformer and U-Net hybrids.} U-ViT-H/2~\cite{uvit} introduces a ViT-based latent diffusion model with U-Net-style skip connections. DiffiT~\cite{diffit} improves transformer-based diffusion with a time-dependent multi-head self-attention mechanism. MDTv2-XL/2~\cite{mdtv2} proposes an asymmetric encoder-decoder framework, utilizing U-Net-like long shortcuts in the encoder and dense input shortcuts in the decoder for efficient diffusion transformer training.
    \item \textit{Latent diffusion with transformers.} MaskDiT~\cite{maskdit} employs an asymmetric encoder-decoder for efficient transformer-based diffusion, training the model using an auxiliary mask reconstruction task similar to MAE~\cite{mae}. SD-DiT~\cite{sddit} builds on MaskDiT by adding a self-supervised discrimination objective with a momentum encoder. DiT~\cite{dit} introduces a pure transformer architecture for diffusion models, utilizing AdaIN-zero modules. SiT~\cite{sit} systematically investigates training efficiency by shifting from diffusion to continuous flow-based modeling. SiT with REPA~\cite{yu2024representation} further accelerates training by enforcing representation alignment between intermediate layers and DINOv2~\citep{dinov2} patch representations.
\end{itemize}

\paragraph{Image Captioning.}
For each ImageNet image, we generate a synthetic caption using Qwen2-VL~\cite{qwen2vl} 2B, using the prompt ``Describe this image.” and the raw image as the input. We adopt the official conversation template of Qwen2-VL-2B to arrange the text and image information. We set the number of maximum new tokens to be 200, which leads to concise and informative captions. We provide a few examples of the generated captions and the corresponding image and class labels in \Cref{fig:caption_examples}.

\begin{figure}[htbp]
\centering
\begin{tcolorbox}[colback=blue!5, colframe=blue!70!black, arc=2mm, boxrule=0.5pt]
\begin{minipage}[t]{0.22\textwidth}
    \vspace{0pt}
    \centering
    \includegraphics[width=\textwidth]{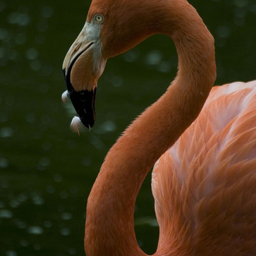}
\end{minipage}
\hfill
\begin{minipage}[t]{0.75\textwidth}
    \vspace{0pt}
    \textbf{Class:} Flamingo
    
    \vspace{0.3em}
    \textbf{Caption:} \small\textit{The image depicts a flamingo with its head turned slightly to the side. The flamingo has a long, curved neck and a pinkish-orange plumage. Its beak is black with a white tip, and it appears to be holding something in its beak, possibly a small object or food. The background is a dark green, possibly indicating a water body or a forested area. The flamingo's posture suggests it might be in the midst of a feeding or grooming behavior.}
\end{minipage}
\end{tcolorbox}

\begin{tcolorbox}[colback=blue!5, colframe=blue!70!black, arc=2mm, boxrule=0.5pt]
\begin{minipage}[t]{0.22\textwidth}
    \vspace{0pt}
    \centering
    \includegraphics[width=\textwidth]{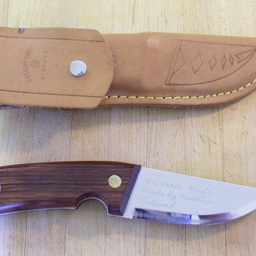}
\end{minipage}
\hfill
\begin{minipage}[t]{0.75\textwidth}
    \vspace{0pt}
    \textbf{Class:} Scabbard
    
    \vspace{0.3em}
    \textbf{Caption:} \small\textit{The image shows a knife with a brown leather sheath. The knife has a blade that appears to be made of a polished metal, possibly stainless steel, and it has a handle that is also made of a similar material, likely wood. The sheath is made of brown leather and has a decorative design on it. The knife and sheath are placed on a flat surface, possibly a table or a desk.}
\end{minipage}
\end{tcolorbox}

\begin{tcolorbox}[colback=blue!5, colframe=blue!70!black, arc=2mm, boxrule=0.5pt]
\begin{minipage}[t]{0.22\textwidth}
    \vspace{0pt}
    \centering
    \includegraphics[width=\textwidth]{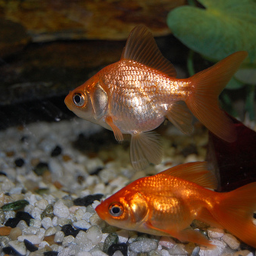}
\end{minipage}
\hfill
\begin{minipage}[t]{0.75\textwidth}
    \vspace{0pt}
    \textbf{Class:} Goldfish
    
    \vspace{0.3em}
    \textbf{Caption:} \small\textit{The image shows two goldfish in an aquarium. The goldfish on the left is facing the camera, with its body oriented towards the viewer. The goldfish on the right is facing away from the camera, with its body oriented towards the other goldfish. Both goldfish have a shiny, reflective surface, indicating they are likely well-maintained. The aquarium has a gravel bottom with some small rocks and a few large green plants. The lighting in the aquarium creates a warm, inviting atmosphere.}
\end{minipage}
\end{tcolorbox}

\caption{Examples of images with their ground-truth class labels and the generated captions.}
\label{fig:caption_examples}
\end{figure}

\paragraph{VLM Embeddings.}
Both the generated caption and the image are then provided as input to the Qwen2-VL 7B model, which contains 28 joint vision-language transformer layers. We extract representations from the 15th layer of Qwen2-VL 7B and average across all image and text tokens to obtain a unified vision-language embedding of dimension 3584. We then align the averaged SiT image patch latents from an intermediate SiT layer, after passing through an MLP projector, with this joint VLM embedding, using a cosine similarity loss (\Cref{eq:repgen}). We choose VLM embeddings over unimodal pretrained text embeddings because VLMs offer a shared embedding space that is already partially aligned between modalities, facilitating better transfer and alignment across image and text representations.

It is noteworthy that we utilize VLMs with different sizes for captioning and representation generation. In particular, we use the smaller Qwen2-VL 2B model to generate the synthetic caption since it is already capable of generating captions with high quality and contains sufficiently useful information of the given image. Yet it is much faster than the 7B model in the generation speed. In contrast, during representation generation, the representations can be obtained in one forward pass for a given image-caption pair, which is much less time-consuming than the sequential inference in the captioning step. Therefore, we opt to use the embeddings of the 7B model, which has stronger capability to understand the input content and is thus able to generate more powerful multimodal representations that accelerate diffusion model training. This is also validated empirically in \Cref{tab:abl_repr}.

\paragraph{Model Architecture and Further Implementation Details.} 
We adopt the same SiT architectures as REPA and SiT, specifically SiT-B/2, SiT-L/2, and SiT-XL/2, which have 12, 24, and 28 layers, hidden dimensions of 768, 1024, and 1152, and 12, 16, and 16 attention heads, respectively. Latent vectors are pre-computed from raw images using the stable diffusion VAE~\citep{stable_diffusion}, resulting in $32\times32\times4$ latent vectors that serve as input to SiT. For decoding, we use the \texttt{stabilityai/sd-vae-ft-ema} decoder to reconstruct images from latent vectors.

For image SSL embeddings, we use DINOv2-B and set the alignment coefficient $\lambda_x$ in \Cref{eq:repgen} to 1.0. For VLM embeddings, we use the aforementioned VLM representation with the coefficient $\lambda_y$ set to 0.5. Alignment depth is as follows: for SiT-B, the 4th layer is aligned with the image embedding and the 8th layer with the VLM embedding; for SiT-L and SiT-XL, the 8th layer is aligned with the image embedding and the 16th layer with the VLM embedding. The projection heads for alignment are implemented as 3-layer MLPs with SiLU activations~\citep{silu}.
Regarding the training curriculum (\Cref{eq:curriculum}), we keep $\beta(n)$ fixed at 0.5 and increase $\alpha(n)$ linearly during the first 50K iterations, after which it remains at 1.0 for the remainder of training.

For optimization, we use AdamW~\citep{adamw} with a constant learning rate of $1 \times 10^{-4}$, parameters $(\beta_1, \beta_2) = (0.9, 0.999)$, and no weight decay. The batch size is set to 256. To accelerate training, we use mixed-precision (fp16) and apply gradient clipping with a threshold of 1.0.

\paragraph{Computing Resources.} Our experiments are conducted using 8 NVIDIA A100 80GB GPUs.

\paragraph{Analysis on Computational Cost.}
We pre-calculate and store all VLM representations before diffusion model training. As a result, the additional computational overhead during REED training compared to REPA is minimal, limited only to a lightweight MLP projection head applied to the cached VLM representations. This overhead is negligible in practice, thus the actual training time for both REPA and REED is essentially the same. For example, training SiT-XL/2 for 1M iterations (i.e. 200 epochs) on 8$\times$A100 GPUs takes approximately 180 hours, and 4M iterations (i.e. 800 epochs) takes about 720 hours for both methods.

In contrast, the one-time computational cost for generating captions and VLM representations is minor relative to the overall diffusion training cost: on 8$\times$A100 GPUs, captioning requires about 7.5 hours, and VLM representation extraction about 1.3 hours. Therefore, 1M iterations (200 epochs) of REED, togther with captioning and representation generation, takes 188.8 hours in total, much smaller than 4M iterations (800 epochs) of REPA (720 hours).
Furthermore, the pre-calculated representations can be reused across all experimental settings and training runs and the time cost is fixed, irrelevant to the number of training iterations. Therefore, the computational overhead introduced by REED is negligible compared to the total training cost.

On the other hand, following REPA, the DINOv2 representations are calculated during diffusion training and not precomputed, which introduces additional computational cost to both REED and REPA in comparison to the vanilla SiT training. However, the specific DINOv2 variant we use, DINOv2-B/14, has only 86M parameters, significantly smaller than the diffusion backbones used in our experiments, from SiT-B/2 (130M params) to SiT-XL/2 (675M params). Also, the representation calculation does not require gradient backpropagation. Therefore, the additional overhead of calculating DINOv2 representations per training step is minimal compared to the overall training cost. For SiT-XL/2, it takes about only 1.05$\times$ training time per step compared with the vanilla SiT training, which is far outweighed by the 23.3$\times$ acceleration in training steps enabled by our method. Overall, the FLOPS and memory consumption of REED is almost the same as training a vanilla diffusion model.

\subsection{Protein Sequence Generation}\label{appendixsec:exp_protein}
For sequence data, we investigate discrete flow models in the context of protein inverse folding, where the goal is to generate protein sequences conditioned on a given 3D backbone structure.

\paragraph{Dataset and Evaluation Metrics.} 

We train the inverse folding models on the PDB training set from~\cite{proteinmpnn} and evaluate on the corresponding PDB test set, using the same data splitting. We focus on single-chain proteins with fewer than 256 residues, resulting in 13,753 training proteins and 811 test proteins after filtering. For each test structure, each model generates one protein sequence. Both ProteinMPNN~\citep{proteinmpnn} and the discrete flow model use a temperature of 0.1, while the discrete flow model uses 500 diffusion timesteps.

We assess inverse folding performance sequence recovery rate, self-consistency RMSD (scRMSD), and pLDDT from ESMFold~\cite{esmfold}. Details for each metric are provided below:
\begin{itemize}[leftmargin=*]
    \item \textbf{Sequence Recovery Rate.} This metric measures the proportion of residues in the generated sequence that match the ground truth. The average is computed at the token level, dividing the number of correctly predicted residues by the total number of residues across all test sequences.
    \item \textbf{scRMSD.} Self-consistency RMSD evaluates how closely the generated sequence, when folded by ESMFold~\citep{esmfold}, matches the target backbone structure. We report the median scRMSD across all test proteins in \Cref{tab:protein}. Since an scRMSD below $2 \text{\AA}$ is typically considered indicative of successful inverse folding~\cite{multiflow,unlocking}, we also report the fraction of test cases with scRMSD less than $2 \text{\AA}$.
    \item \textbf{pLDDT.} To further assess sequence quality, we use pLDDT scores from ESMFold predictions on generated sequences. pLDDT reflects the confidence of structure predictions, providing insight into the plausibility of the generated protein sequences. We report the median pLDDT across all test proteins, as well as the percentage of predictions with pLDDT above 80, following standard practice~\citep{proteindpo}.
\end{itemize}

\paragraph{Baselines.} We benchmark REED against ProteinMPNN~\citep{proteinmpnn}, the de facto approach for inverse folding, as well as discrete flow models trained without REED. Following~\citep{unlocking,drakes}, the discrete flow baselines use the Multiflow~\cite{multiflow} objective and the ProteinMPNN~\cite{proteinmpnn} architecture. All models are trained on our curated dataset as described above, employing a backbone noise of 0.1 and constructing the graph using the 30 nearest neighbors. All other hyperparameters are kept consistent with ProteinMPNN (using 3 encoder and 3 decoder layers). In the original ProteinMPNN, edge representations are fixed and node embeddings are not learnable. To enable better representation alignment with REED, we modify the architecture to allow edge representations to be updated and node features to be learnable. For ablation, we also report the performance of discrete flow models with these architectural adaptations but without REED in \Cref{tab:protein_update_edge}.

\paragraph{Protein Folding and AF3 Representations.}
We utilize AlphaFold3~\cite{af3} to generate auxiliary structural information from target protein sequences. For faster inference, we omit the use of Multiple Sequence Alignment (MSA). Structural representations from the penultimate layer of AlphaFold3’s diffusion head are extracted as latents, while amino acid representations are obtained from the single and pair embeddings of the last Pairformer layer (corresponding to node and edge features, respectively). When sampling through the diffusion module, we use 10 rounds of recycles and 200 diffusion steps.
Given that protein inverse folding is conceptually the reverse of protein folding, we align the layers of our discrete diffusion model with those of the folding model in reverse order. Specifically, the output of the ProteinMPNN encoder, which captures the protein backbone structure, is aligned with the structure representation from AF3. Additionally, after passing through an MLP projector, the output of the decoder’s first layer is aligned with the single and pair token embeddings from Pairformer: decoder node embeddings are aligned to single embeddings, and edge embeddings are aligned to pair embeddings.

\paragraph{Further Implementation Details.}
For the training curriculum, we keep $\beta(n)$ fixed at 0.2, while $\alpha(n)$ is linearly increased to 1.0 over the first 50 epochs and then decayed using a cosine schedule. The alignment coefficients $\lambda$s are set to 0.5 for single representations, 2.0 for pair representations, and 1.0 for structure representations. The projection heads for alignment are implemented as 2-layer MLPs with SiLU activations. Optimization is performed using Adam~\citep{adam} with a constant learning rate of $1\times10^{-3}$.

\paragraph{Computing Resources.} AlphaFold3 representation generation is performed on 8 NVIDIA A100 80GB GPUs, with the extracted embeddings stored for subsequent use. All other experiments are conducted on a single NVIDIA A100 80GB GPU.

\subsection{Molecule Generation}\label{appendixsec:exp_molecule}

For structured and geometric graph data, we tackle the 3D molecule generation task using equivariant diffusion models, ensuring molecular data symmetries are preserved throughout the process.

\textbf{Datasets and Evaluation Metrics.}
QM9~\citep{qm9} and GEOM-DRUG~\citep{geomdrug} are commonly used benchmarks for unconditional molecule generation. Compared with QM9 where molecule contain approximately 18 atoms on average and up to 9 heavy atoms, GEOM-DRUG is a more challenging and useful dataset of larger, drug-like molecules with an average size of around 44 atoms, which is used to assess our method. Following SemlaFlow~\citep{semlaflow}, we use the same data splits and discard molecules with more than 72 atoms in the training set (corresponds to about $1\%$ of training data), while keeping validation and test sets unchanged.

We adopt standard benchmark evaluation metrics: \textit{atom stability}, \textit{molecule stability}, \textit{validity}, as well as additional metrics \textit{energy} and \textit{strain}. We also include the number of function evaluations (NFE) to sample one batch of molecules for fair comparison, taking sampling efficiency into account. A thorough description of these metrics is provided as below:
\begin{itemize}[leftmargin=*]
    \item \textbf{Atom Stability.} Fraction of atoms with correct valency.
    \item \textbf{Molecule Stability.} Fraction of molecules in which all atoms are stable.
    \item \textbf{Validity.} Fraction of molecules convertible to valid SMILES strings using RDKit.
    \item \textbf{Energy.} The energy $U(x)$ of a conformation $x$, computed with MMFF94 in RDKit, a commonly used molecular modeling framework~\citep{irwin2024efficient}, where lower values indicate greater physical plausibility.
    \item \textbf{Strain.} Energy difference $U(x)-U(\tilde{x})$ between the generated conformation $x$ and the MMFF94-optimized conformation $\tilde{x}$, with lower values indicating less distortion.
\end{itemize}

\paragraph{Baselines.} Recent 3D molecular generators consist of two main types depending on the approaches they obtain bonds: generate atoms only and infer bonds based on coordinates with external rules, or directly generate bonds jointly with atoms. As explained in \citep{semlaflow}, the latter methods can produce higher-quality samples, including MiDi~\citep{midi}, EQGAT-diff~\citep{eqgatdiff}, SemlaFlow~\citep{semlaflow}, which we consider as the current state-of-the-art. We only take these advanced 2D\&3D methods that directly learn to generate bonds as baselines. 
In our experiments, we choose SemlaFlow, an $E(3)$-equivariant ODE flow matching, as our base model. 

For further context, modeling molecular graph distributions requires careful treatment of both permutational and geometric symmetries. For 2D molecular graphs, the relevant symmetry group is $S(N)$, capturing all permutations of atom indices. For 3D molecular structures, symmetries are governed by the special Euclidean group $SE(3) = SO(3) \rtimes \mathbb{R}^3$, where $SO(3)$ represents rotations and $\mathbb{R}^3$ represents translations. In some cases, the full Euclidean group $E(3) = O(3) \rtimes \mathbb{R}^3$, which also includes reflections, may be considered, but $SE(3)$ is often more practical as it preserves molecular handedness. To maintain these symmetries, the denoising network must be equivariant to the appropriate groups, and the diffusion process must model distributions invariant to these transformations. Accordingly, node-level (atom) and edge-level (bond) features should be permutation equivariant, while graph-level (molecule) features should be permutation invariant. For 3D coordinates, analogously, their node-level embeddings should be $SE(3)$ equivariant, while graph-level features should be $SE(3)$ invariant.

\paragraph{Pretrained Representation.}
For the pretrained encoder, we select the Unimol~\citep{zhou2023unimol} with $SE(3)$ transformers, and additionally add GEOM-DRUG to its pre-training dataset as described in \citep{georcg}. We use graph-level invariant representations to align with SemlaFlow intermediate layers. For our 12-layer model, we empirically find that aligning the fourth layer yields the best performance.

\paragraph{Further Implementation Details.}
For the training curriculum, we keep $\beta(n)$ fixed at $1.0$, while $\alpha(n)$ is linearly increased to $1.0$ over the first 30 epochs and then fixed as a constant for the remaining epochs. The alignment coefficient $\lambda$ is set to $0.2$. The projection heads for alignment are implemented as 3-layer MLPs with SiLU activations. We follow all other hyperparameters regarding model, training and evaluation settings in \citep{semlaflow}.

\paragraph{Computing Resources.}
We train SemlaFlow with REED for 200 epochs on a single NVIDIA A100 80GB GPU. Notably, it requires significantly fewer GPU hours than EQGAT-diff, which is trained for 800 epochs on 4 NVIDIA A100 GPUs. 

\section{Additional Experimental Results}
\label{appendixsec:results}

\subsection{Image Generation}\label{appendixsec:results_image}
\paragraph{Additional Evaluation Results for \Cref{tab:fid_comparison}.}
We present evaluation results across multiple metrics for our method trained with SiT models of various sizes, compared to both the vanilla SiT and REPA in \Cref{tab:detailed_evaluation}. Hyperparameter settings and implementation details are as described in \Cref{appendixsec:exp_image}. REED consistently outperforms both REPA and vanilla SiT across different model sizes, training iterations, and evaluation metrics.

\begin{table*}[t]
\centering
\small
\caption{Comprehensive evaluation results for comparisons with SiT and REPA across various model sizes. No classifier-free guidance is used. Iter. denotes training iteration.}
\vspace{2mm}
\setlength{\tabcolsep}{0.3cm} 
\begin{tabular}{lcccccc}
\toprule
\textbf{Model} & \textbf{\#Params} & \textbf{Iter.} & \textbf{FID$\downarrow$}
& \textbf{IS$\uparrow$} & \textbf{Prec.$\uparrow$} & \textbf{Rec.$\uparrow$} \\
\midrule
SiT-B/2 & 130M & 400K & 33.0 
& 43.7 & 0.53 & 0.63  \\
+ REPA  & 130M & 50K  & 78.2 
& 17.1 & 0.33 & 0.48 \\
\rowcolor{gray!20}+ REED (ours) & 130M & 50K  & 66.0 
& 22.9 & 0.37 & 0.51 \\
+ REPA  & 130M & 100K & 49.5 
& 27.5 & 0.46 & 0.59 \\
\rowcolor{gray!20}+ REED (ours) & 130M & 100K & 36.8 
& 42.5 & 0.51 & 0.61 \\
+ REPA  & 130M & 200K & 33.2 
& 43.7 & 0.54 & 0.63  \\
\rowcolor{gray!20}+ REED (ours) & 130M & 200K & 24.6 
& 62.2 & 0.58 & 0.64 \\
+ REPA  & 130M & 400K & 24.4 
& 59.9 & 0.59 & 0.65  \\
\rowcolor{gray!20}+ REED (ours) & 130M & 400K & 19.5 
& 75.9 & 0.61 & 0.65 \\
\midrule
SiT-L/2 & 458M & 400K & 18.8 
& 72.0 & 0.64 & 0.64 \\
+ REPA  & 458M & 50K  & 55.4 
& 23.0 & 0.43 & 0.53  \\
\rowcolor{gray!20}+ REED (ours) & 458M & 50K  &  41.7 
& 36.7 & 0.50 & 0.59 \\
+ REPA  & 458M & 100K & 24.1 
& 55.7 & 0.62 & 0.60 \\
\rowcolor{gray!20}+ REED (ours) & 458M & 100K &  19.1 
& 75.3 & 0.63 & 0.63 \\
+ REPA  & 458M & 200K & 14.0 
& 86.5 & 0.67 & 0.64 \\
\rowcolor{gray!20}+ REED (ours) & 458M & 200K & 12.1 
& 101.8 & 0.67 & 0.65 \\
+ REPA  & 458M & 400K & 10.0 
& 109.2 & 0.69 & 0.65  \\
\rowcolor{gray!20}+ REED (ours) & 458M & 400K & 8.9 
& 122.4 & 0.69 & 0.66 \\
\midrule
SiT-XL/2 & 675M & 7M   & 8.3 
& 131.7 & 0.68 & 0.67 \\
+ REPA  & 675M & 50K  & 52.3 
& 24.3  & 0.45 & 0.53  \\
\rowcolor{gray!20}+ REED (ours) & 675M & 50K  & 38.0 
& 40.4 & 0.52 & 0.59 \\
+ REPA  & 675M & 100K & 19.4 
& 67.4  & 0.64 & 0.61  \\
\rowcolor{gray!20}+ REED (ours) & 675M & 100K & 16.0 
& 84.4 & 0.65 & 0.63 \\
+ REPA  & 675M & 200K & 11.1 
& 100.4 & 0.69 & 0.64  \\
\rowcolor{gray!20}+ REED (ours) & 675M & 200K & 9.8 
& 114.7 & 0.68 & 0.65 \\
+ REPA  & 675M & 400K & 7.9 
& 122.6 & 0.70 & 0.65  \\
\rowcolor{gray!20}+ REED (ours) & 675M & 400K & 7.3 
& 134.5 & 0.70 & 0.66 \\
+ REPA  & 675M & 4M   & 5.9 
& 157.8 & 0.70 & 0.69  \\
\rowcolor{gray!20}+ REED (ours) & 675M & 1M   & 4.7 
& 166.1 & 0.72 & 0.66 \\
\bottomrule
\end{tabular}
\label{tab:detailed_evaluation}
\end{table*}

\paragraph{Results with Different VLM Embeddings.} We evaluate the performance of REED on SiT-B/2 using VLM embeddings extracted from various layers of the Qwen2-VL 7B model, as shown in \Cref{tab:repr_alignment_vlm}. For all experiments, the image (DINOv2) embedding is aligned at the 4th layer and the VLM embedding at the 8th layer, using the same hyperparameters detailed in \Cref{appendixsec:exp_image}. Notably, the best results are achieved when aligning with the 15th layer (out of 28) of Qwen2-VL 7B. Early layers do not provide well-fused vision and text token embeddings, while later layers are more similar to the final VLM output (text) space. Therefore, aligning at a middle layer offers a balanced and robust semantic representation.

\begin{table}[t]
\centering
\caption{Detailed evaluation results for SiT-B/2 models aligned using VLM embeddings from different layers of the Qwen2-VL 7B model. Dep.-I and Dep.-VL denote the alignment depth for image and VLM representations respectively.  No classifier-free guidance is used. Iter. denotes training iteration.}
\resizebox{0.82\textwidth}{!}{
\begin{tabular}{lccc|ccccc}
\toprule
\textbf{Iter.} & \textbf{Dep.-I} & \textbf{Dep.-VL} & \textbf{VLM Embedding} & \textbf{FID$\downarrow$} 
& \textbf{IS$\uparrow$} & \textbf{Prec.$\uparrow$} & \textbf{Rec.$\uparrow$} \\
\midrule
200K & 4 & 8 & Layer 0 & 29.9 
& 50.6 & 0.56 & 0.64 \\
200K & 4 & 8 & Layer 1 & 30.2 
& 49.6 & 0.55 & 0.64 \\
200K & 4 & 8 & Layer 15 & \textbf{29.4} 
& 50.8 & \textbf{0.56} & 0.64 \\
200K & 4 & 8 & Layer 27 & 29.5 
& \textbf{51.2} & 0.56 & \textbf{0.65} \\
\bottomrule
\end{tabular}
}
\label{tab:repr_alignment_vlm}
\end{table}

\paragraph{Additional Results for Different Alignment Depths in \Cref{tab:image_layers}.}
We present further evaluation results across various alignment depths for both image and VLM representations in \Cref{tab:image_layers_full}. The observed trends remain consistent across most metrics (except recall): aligning image representations at a shallower layer (i.e., layer 4) and VLM representations at a deeper layer (i.e., layer 8) out of the total 12 layers yields the best performance. This observation is consistent with findings from the Hierarchical VAE literature, where shallower latents tend to capture fine-grained details, while deeper latents encapsulate higher-level semantic information. These results provide empirical support for our theoretical analysis in \Cref{subsec:hierarchical}.

\begin{table}[t]
\centering
\caption{Detailed evaluation results at different alignment depth on SiT-B/2. Dep.-I and Dep.-VL denote the alignment depth for image and VLM representations respectively.  No classifier-free guidance is used. Iter. denotes training iteration.}
\resizebox{0.6\textwidth}{!}{
\begin{tabular}{lcc|cccc}
\toprule
\textbf{Iter.} & \textbf{Dep.-I} & \textbf{Dep.-VL} & \textbf{FID$\downarrow$} 
& \textbf{IS$\uparrow$} & \textbf{Prec.$\uparrow$} & \textbf{Rec.$\uparrow$} \\
\midrule
200K & 2 & - & 33.2 
& 46.9 & 0.53 & 0.64 \\
200K & 4 & - & \textbf{27.7} 
& \textbf{55.5} & \textbf{0.56} & 0.64 \\
200K & 6 & - & 27.9 
& 54.7 & 0.56 & \textbf{0.65} \\
200K & 8 & - & 30.1 
& 51.8 & 0.55 & 0.64 \\
\midrule
200K & 4 & 2 & 26.5 
& 57.7 & 0.57 & 0.64 \\
200K & 4 & 4 & 25.9 
& 59.5 & 0.57 & 0.64 \\
200K & 4 & 6 & 24.8 
& 61.9 & 0.57 & 0.63 \\
200K & 4 & 8 & \textbf{24.6} 
& \textbf{62.2} & \textbf{0.58} & 0.64 \\
200K & 4 & 10 & 26.4 
& 59.1 & 0.57 & \textbf{0.65} \\
\midrule
200K & - & 4 & \textbf{45.5} 
& \textbf{33.6} & \textbf{0.47} & 0.60 \\
200K & - & 8 & 45.6 
& 32.3 & 0.47 & \textbf{0.61} \\
\bottomrule
\end{tabular}
}
\label{tab:image_layers_full}
\end{table}

\paragraph{Empirical Verification of Approximations in \Cref{subsec:unify}.}
To assess the validity of the approximations introduced in \Cref{subsec:unify}, we train the model using an alternative objective that directly follows \Cref{eq:decompbound} without additional approximations. Specifically, we use the SiT-B/2 model and apply the cosine similarity representation alignment loss (corresponding to the third term in \Cref{eq:decompbound}) at the 4th layer. Additionally, we introduce an extra input condition $z$ via cross attention at the start of the 5th layer in SiT. With this architectural adjustment, $p_{\theta}(x_{t-1}|x_t,z)$ is computed using the ground truth representation as the input condition, while $p_{\theta}(x_{t-1}|x_t)$ is obtained with a learnable dummy latent vector (representing no conditioning) as the input. The second term is derived according to \Cref{eq:normgauss}, with its relative coefficient set to 1.0.

The weighted product of two Gaussians with equal variance remains Gaussian, where the mean is the weighted average of the individual means and the variance is unchanged. As a result, under the Gaussian assumption of $p_{\theta}(x_{t-1}|x_t,z)$ and $p_{\theta}(x_{t-1}|x_t)$, the distribution $\tilde{p}_{\theta}(x_{t-1}|x_t,z;A_t)$ in \Cref{eq:tildep} can be expressed in closed form as an aggregation of the conditioned and unconditioned model outputs. This allows the first term in \Cref{eq:decompbound} to be calculated as a standard score matching loss but relative to the weighted average of the two model outputs. During inference, since the ground-truth representation $z$ is not available, we sample using the unconditional model $p_{\theta}(x_{t-1}|x_t)$.

We conduct experiments using DINOv2 representations, keeping all other settings consistent with those used for REPA. The results, presented in \Cref{tab:no_approx}, show that the model trained directly with \Cref{eq:decompbound} performs comparably to REPA, thereby empirically supporting the validity of the approximations we made in \Cref{subsec:unify}. Additionally, we observe that the second log-normalization term in the objective remains below 0.04, which is considerably smaller than the denoising loss in the first term (approximately 0.7). This further justifies our decision to exclude it from training, as described in \Cref{subsec:unify}.

\begin{table*}[t]
\centering
\small
\caption{Results of SiT-B/2 model trained with \Cref{eq:decompbound} without approximation (w/o Approx.) and with DINOv2 representations compared with REPA. No classifier-free guidance is used. Iter. denotes training iteration.}
\vspace{2mm}
\begin{tabular}{lcccccc}
\toprule
\textbf{Model} (SiT-B/2) & \textbf{\#Params} & \textbf{Iter.} & \textbf{FID$\downarrow$} 
& \textbf{IS$\uparrow$} & \textbf{Prec.$\uparrow$} & \textbf{Rec.$\uparrow$} \\
\midrule
\quad+ REPA  & 130M & 200K & 33.2 
& 43.7 & 0.54 & 0.63  \\
\quad + \Cref{eq:decompbound} w/o Approx. & 130M & 200K & 34.7 & 41.9 & 0.54 & 0.62 \\
\bottomrule
\end{tabular}
\label{tab:no_approx}
\end{table*}

\paragraph{Ablation Study on Different Curriculum Schedules.} To measure the effect of the curriculum schedule between representation learning and diffusion training, we conduct ablations on different configurations for the schedules $\alpha(n)$ and $\beta(n)$, including varying the phase-in period for $\alpha(n)$ and applying a cosine decay to $\beta(n)$. 
As shown in~\Cref{tab:abl_schedule}, all curriculum variants of $\alpha(n)$ outperform the constant schedule, with 50K iterations chosen as the default for all image generation experiments. Notably, the choice of $\beta(n)$ schedule does not substantially affect generation performance.

Adaptively or automatically learning these schedules could offer further benefits, for instance by adjusting the weighting based on the difficulty of individual samples or the training stage. For example, the training of samples with easy-to-extract representations could transition more quickly to a higher diffusion loss weight, while more challenging samples could undergo a longer phase-in period. However, designing and integrating such adaptive schedules presents additional challenges and is beyond the current scope of this work. We leave this as a promising direction for future research.

\begin{table*}[t]
\centering
\small
\caption{Ablation study on different curriculum schedules. We report results on SiT-B/2 training for 200K iterations. No classifier-free guidance is used.}
\vspace{2mm}
\begin{adjustbox}{max width=\textwidth}
\begin{tabular}{lllcccc}
\toprule
\textbf{Model} & $\alpha(n)$ & $\beta(n)$ & \textbf{FID$\downarrow$} 
& \textbf{IS$\uparrow$} & \textbf{Prec.$\uparrow$} & \textbf{Rec.$\uparrow$} \\
\midrule
+ REPA & - & - & 33.2 
& 43.7 & 0.54 & 0.63  \\
+ REED & Constant & Constant & 29.4 & 50.8 & 0.56 & 0.64  \\
+ REED & Linear increase in 50K iterations & Constant & \textbf{24.6} & \textbf{62.2} & 0.58 & \textbf{0.64} \\
+ REED & Linear increase in 25K iterations & Constant & 25.2 & 60.7 & 0.58 & 0.64  \\
+ REED & Linear increase in 75K iterations & Constant & 24.7 & 62.2 & 0.58 & 0.64 \\
+ REED & Linear increase in 50K iterations & Cosine decay in 200K iterations & 24.7 & 61.4 & \textbf{0.59} & 0.63 \\
\bottomrule
\end{tabular}
\end{adjustbox}
\label{tab:abl_schedule}
\end{table*}

\paragraph{Ablation Study on Different Multimodal Representations.}
We conduct additional ablations on the multimodal representations obtained from different pretrained models. We compare REED (using Qwen2-VL 7B representations) to versions using either a more lightweight model (i.e., Qwen2-VL 2B, OpenCLIP\footnote{The text encoder in \url{https://huggingface.co/stabilityai/stable-diffusion-xl-base-1.0}.}, and SigLIP~\citep{siglip}) or intentionally noisier features (i.e., Qwen2-VL 7B with Gaussian noise). All other settings were kept consistent.

The results are shown in \Cref{tab:abl_repr}. Using lightweight or less accurate VLM representations still consistently leads to improved performance compared to the model without aligning with multimodal representations. This demonstrates the robustness of our method to less accurate or noisy representations and suggests that REED retains its advantages even when ideal pretrained features are unavailable. Nevertheless, higher-quality representations (i.e., Qwen2-VL 7B) yield the strongest results, indicating that the quality of pretrained representations remains an important factor for maximizing performance.

However, utilizing text representations from OpenCLIP or SigLIP yields little improvement or even slightly worse performance than the setting without multimodal representations. This can be attributed to the fact that OpenCLIP and SigLIP are trained with a contrastive objective on short and concise captions and are primarily optimized for downstream classification tasks without capturing fine-grained semantic details needed for high-quality image generation. As a result, the information content in their text embeddings may be insufficient to fully support image generation. In comparison, VLMs are trained on a diverse set of vision-language tasks and have strong ability of vision-language understanding and generation. They are also better at bridging two modalities, and consequently, have better aligned visual and textual representations compared to OpenCLIP and SigLIP. These enable VLMs to generate richer and more relevant semantic representations for guiding image generation.

Moreover, from the computational perspective, OpenCLIP and SigLIP requires a higher computational cost than Qwen2-VL 2B. Since neither of them can generate text captions, the captioning step using Qwen2-VL 2B is always necessary, and we can save the Qwen2-VL 2B embeddings along the way to avoid redundant computation in the representation generation stage.

\begin{table*}[t]
\centering
\small
\caption{Ablation study on using different multimodal representations for REED. We report results on SiT-B/2 training for 200K iterations. No classifier-free guidance is used.}
\vspace{2mm}
\begin{tabular}{lcccc}
\toprule
\textbf{Multimodal Representation} & \textbf{FID$\downarrow$} 
& \textbf{IS$\uparrow$} & \textbf{Prec.$\uparrow$} & \textbf{Rec.$\uparrow$} \\
\midrule
- & 27.7 & 55.5 & 0.56 & 0.64 \\
Qwen2-VL 7B  & \textbf{24.6} & \textbf{62.2} & \textbf{0.58} & 0.64 \\
Qwen2-VL 2B & 26.3 & 59.9 & 0.57 & \textbf{0.65} \\
Qwen2-VL 7B + noise (scale=0.1) & 25.0 & 61.8 & 0.58 & 0.64 \\
OpenCLIP & 29.0 & 55.8 & 0.55 & 0.64 \\
SigLIP & 27.6 & 57.6 & 0.56 & 0.64 \\
\bottomrule
\end{tabular}
\label{tab:abl_repr}
\end{table*}

\subsection{Protein Sequence Generation}\label{appendixsec:results_protein}
\paragraph{Ablation Study on Architectural Modifications.} 
As described in \Cref{appendixsec:exp_protein}, we modify the original ProteinMPNN architecture to enhance representation alignment with REED by enabling updates to edge representations and making node features learnable. For the ablation study, we evaluate discrete flow models incorporating these architectural changes but without applying REED, as shown in \Cref{tab:protein_update_edge}. The results are comparable to those of the original DiscreteDiff model, suggesting that the observed performance improvements stem from the REED algorithm itself.

\begin{table}[htbp]
\centering
\caption{
Ablation study on the base discrete flow model with the same architectural modifications as DiscreteDiff+REED. The performance remains similar to the original DiscreteDiff model, indicating that the improvements are attributable to REED itself.}
\label{tab:protein_update_edge}
\resizebox{1\columnwidth}{!}{
\begin{tabular}{lcccccc}
\toprule
Methods    & Epochs & Seq. Recovery(\%) $\uparrow$ & scRMSD $\downarrow$ & \%(scRMSD<2) $\uparrow$ & pLDDT $\uparrow$ & \%(pLDDT>80)(\%) $\uparrow$   \\
\midrule
\multirow{3}{*}{\parbox{2.8cm}{DiscreteDiff w/ \\Architectural\\Modifications}} & 50 & 38.04 & 2.034 & 49.32 & 81.36 & 58.39 \\
 & 100 & 40.60 & 1.794 & 53.42 & 82.82 & 68.32  \\           
 & 150 & 41.35 & 1.817 & 54.78 & 83.00 & 68.20  \\           
\bottomrule
\end{tabular}
}
\end{table}

\paragraph{Ablation Study on Aligned Representations.}
We present ablation results for various combinations of aligned representations in \Cref{tab:protein_ablation}, including using only single representations, only pair representations, both single and pair representations, and only structure representations. These are compared against the results obtained when utilizing all three representation types. For each ablation (except “all”), we fix $\beta(n)$ at 0.2 and set the alignment coefficients $\lambda$ to 1.0 for all representations. As shown in \Cref{tab:protein_ablation}, each representation type independently contributes substantially to model performance, with all outperforming the baseline DiscreteDiff model at each training epoch. Notably, the pair representation yields the largest performance gain, highlighting the importance of capturing amino acid interactions for improved protein structure understanding in the inverse folding task.

\begin{table}[htbp]
\centering
\caption{
Ablation studies on different combinations of aligned representations: single, pair, and structure. ``all'' denotes using all three types of representations.}
\label{tab:protein_ablation}
\resizebox{1\columnwidth}{!}{
\begin{tabular}{lcccccc}
\toprule
Representation  & Epochs & Seq. Recovery(\%) $\uparrow$ & scRMSD $\downarrow$ & \%(scRMSD<2) $\uparrow$ & pLDDT $\uparrow$ & \%(pLDDT>80)(\%) $\uparrow$   \\
\midrule
\multirow{3}{*}{single} & 50 & 40.35 & 1.790 & 53.91 & 82.76 & 67.21 \\
 & 100 & 41.43 & 1.778 & 55.03 & 83.20 & 67.83  \\           
 & 150 & 42.06 & 1.768 & 55.28 & 83.50 & 71.68  \\     
\midrule
\multirow{3}{*}{pair} & 50 & 41.07 & 1.775 & 55.53 & 83.13 & 68.94 \\
 & 100 & 41.92 & 1.781 & 55.03 & 82.89 & 68.70  \\           
 & 150 & 42.21 & 1.778 & 55.53 & 83.02 & 69.32  \\ 
\midrule

\multirow{3}{*}{single+pair} & 50 & 40.62 & 1.806 & 55.28 & 83.12 & 68.32 \\
& 100 & 41.69 & 1.766 & 56.52 & 83.04 & 66.96  \\           
 & 150 & 41.85 & 1.753 & 54.53 & 83.13 & 69.57  \\ 
\midrule

\multirow{3}{*}{structure} & 50 & 40.40 & 1.839 & 54.04 & 82.50 & 64.97 \\
 & 100 & 41.47 & 1.774 & 53.91 & 83.08 & 67.58  \\           
 & 150 & 41.88 & 1.729 & 55.90 & 83.32 & 70.81  \\ 
\midrule
\multirow{3}{*}{all} & 50 & 41.06 & 1.788 & 54.37 & 82.74 & 67.41 \\
 & 100 & 41.91 & 1.780 & 56.23 & 83.09 & 68.41 \\
 & 150 & 42.26 & 1.799 & 54.66 & 83.17 & 69.07 \\
\bottomrule
\end{tabular}
}
\end{table}

\section{Qualitative Results}
\label{appendixsec:qualitative}

\begin{figure}[h]
    \centering
    \includegraphics[width=\linewidth]{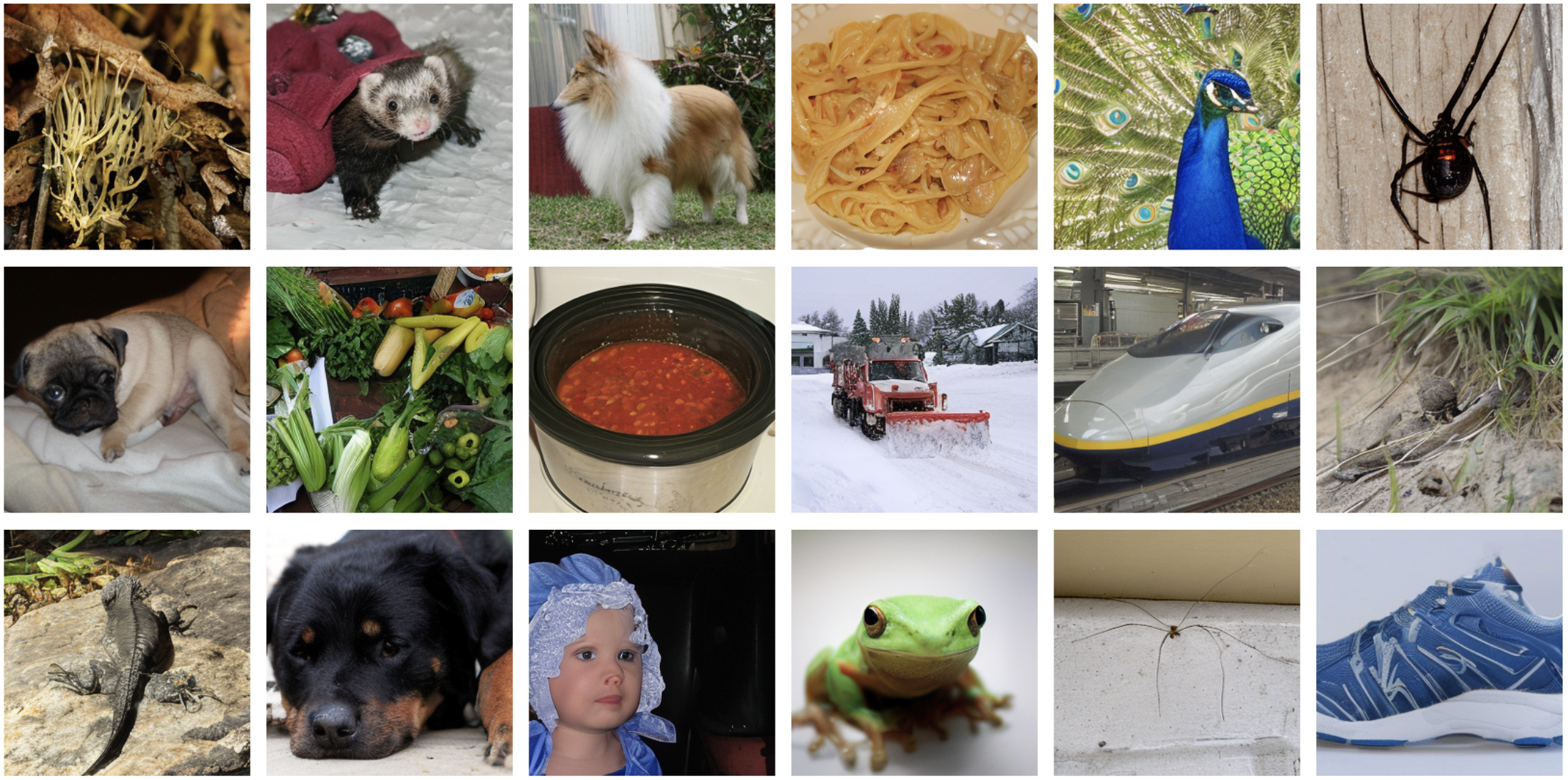}
    \caption{Selected samples on ImageNet $256\times256$ generated by the SiT-XL/2+REED model after 1M training iterations. We use classifier-free guidance with $w=1.275$.}
    \label{fig:sample_img}
\end{figure}

\begin{figure}[h]
    \centering
    \includegraphics[width=\linewidth]{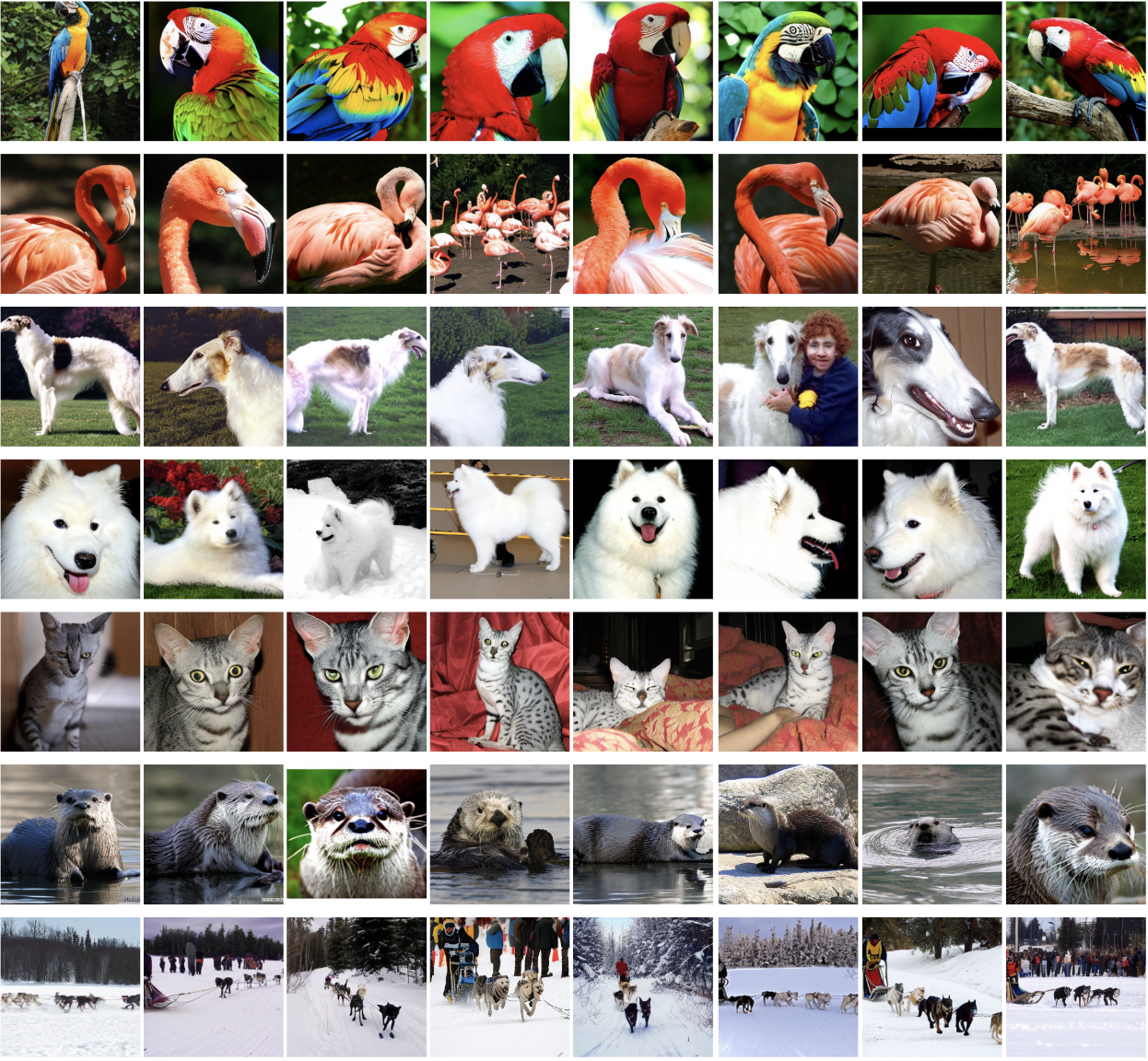}
    \caption{Selected samples on ImageNet $256\times256$ generated by the SiT-XL/2+REED model after 1M training iterations. We use classifier-free guidance with $w=4.0$. Each row corresponds to the same class label. From top to bottom, the class labels are: ``macaw'' (88), ``flamingo'' (130), ``borzoi, Russian wolfhound'' (169), ``Samoyed'' (258), ``Egyptian cat'' (285), ``otter'' (360), and ``dogsled'' (537), respectively.}
    \label{fig:class_img1}
\end{figure}

\begin{figure}[h]
    \centering
    \includegraphics[width=\linewidth]{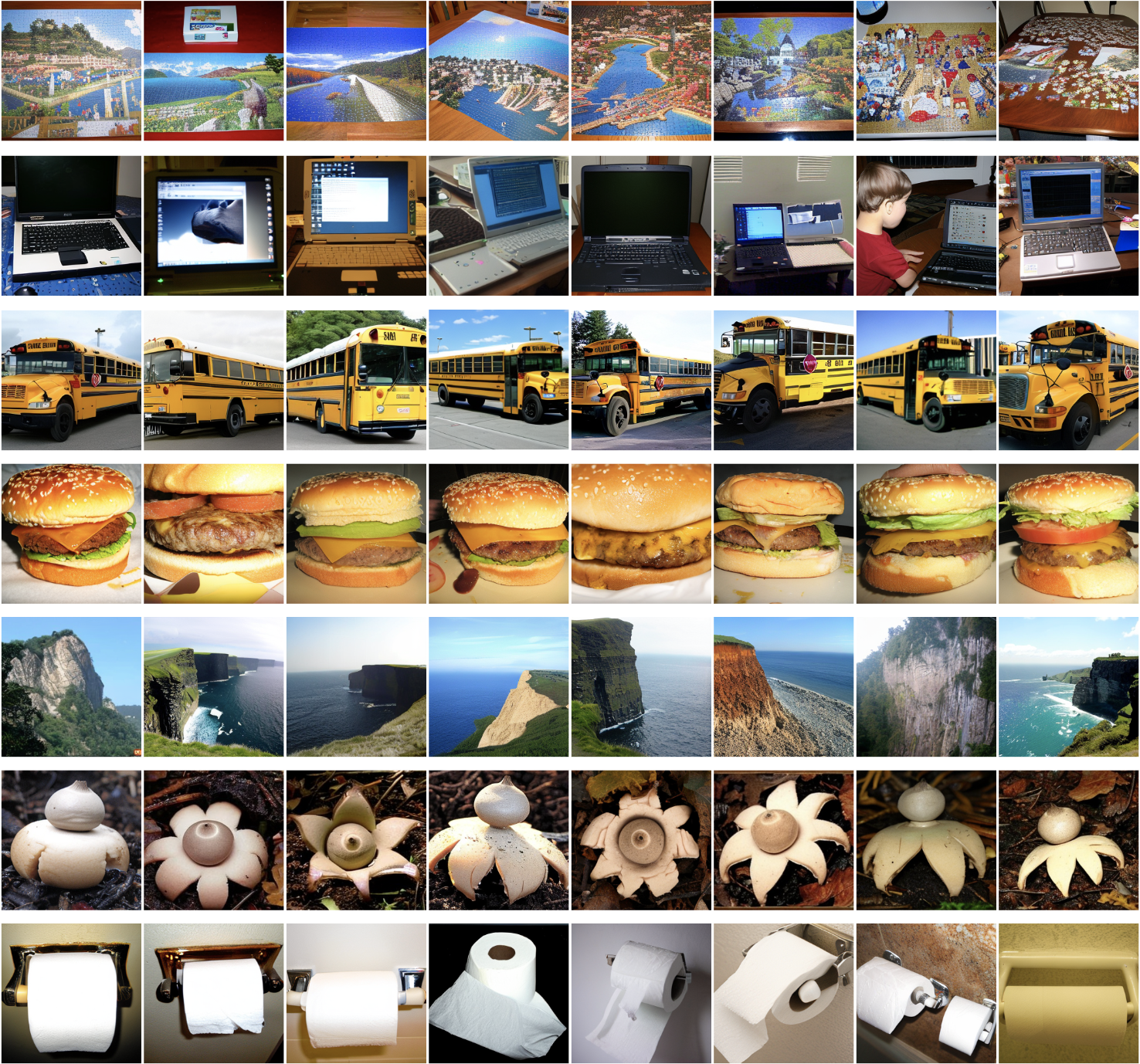}
    \caption{Selected samples on ImageNet $256\times256$ generated by the SiT-XL/2+REED model after 1M training iterations. We use classifier-free guidance with $w=4.0$. Each row corresponds to the same class label. From top to bottom, the class labels are: ``jigsaw puzzle'' (611), ``laptop'' (620), ``school bus'' (779), ``cheeseburger'' (933), ``cliff'' (972), ``earthstar'' (995), and ``toilet tissue'' (999), respectively.}
    \label{fig:class_img2}
\end{figure}

\begin{figure}[h]
    \centering
    \includegraphics[width=\linewidth]{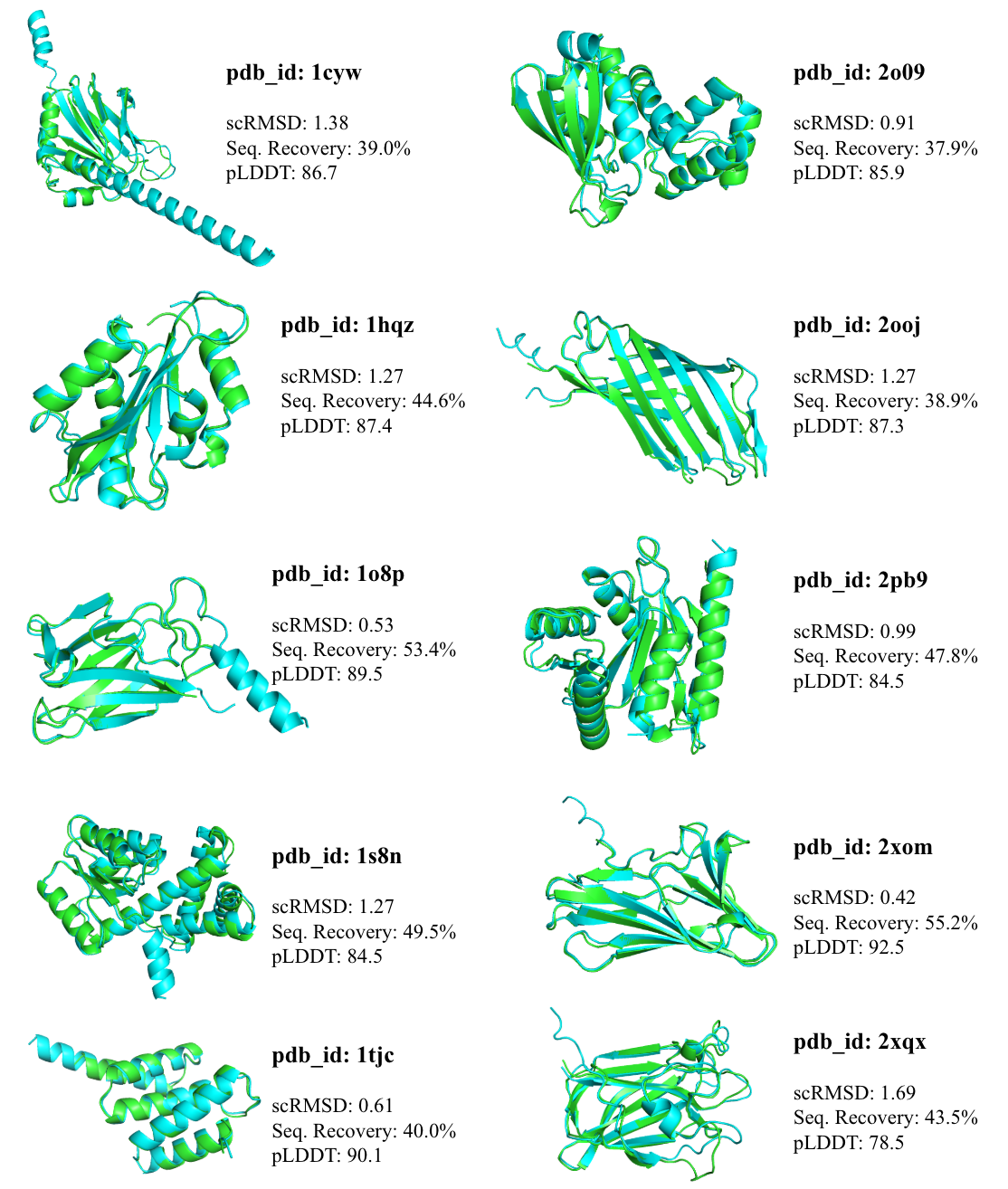}
    \caption{Selected samples on protein inverse folding. Green color denotes the ground truth structure and blue color denotes the generated sequence folded by ESMFold~\citep{esmfold}.}
    \label{fig:protein1}
\end{figure}

\begin{figure}[h]
    \centering
    \includegraphics[width=\linewidth]{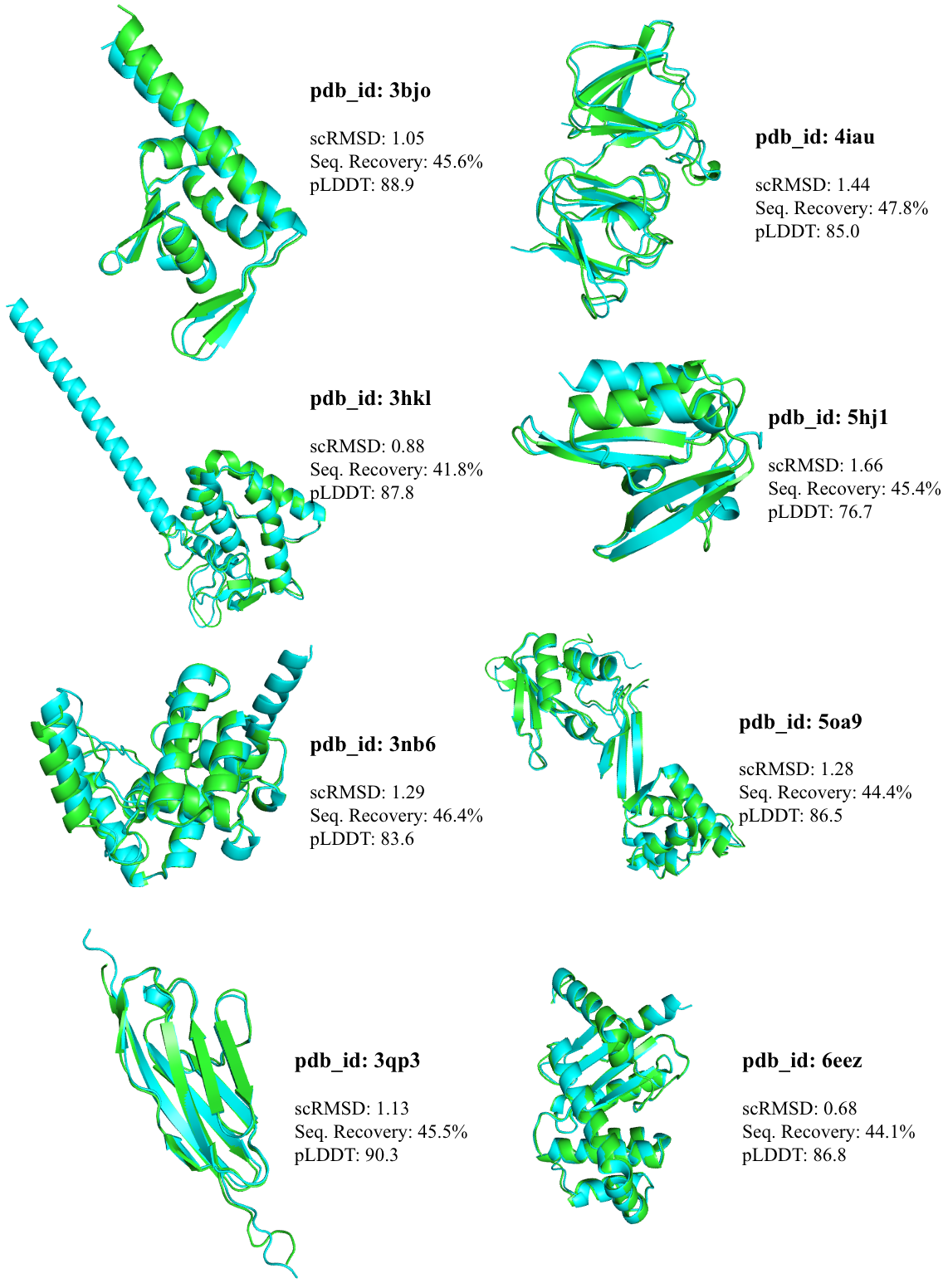}
    \caption{(Continued.) Selected samples on protein inverse folding. Green color denotes the ground truth structure and blue color denotes the generated sequence folded by ESMFold~\citep{esmfold}.}
    \label{fig:protein2}
\end{figure}

\begin{figure}[h]
    \centering
    \includegraphics[width=\linewidth]{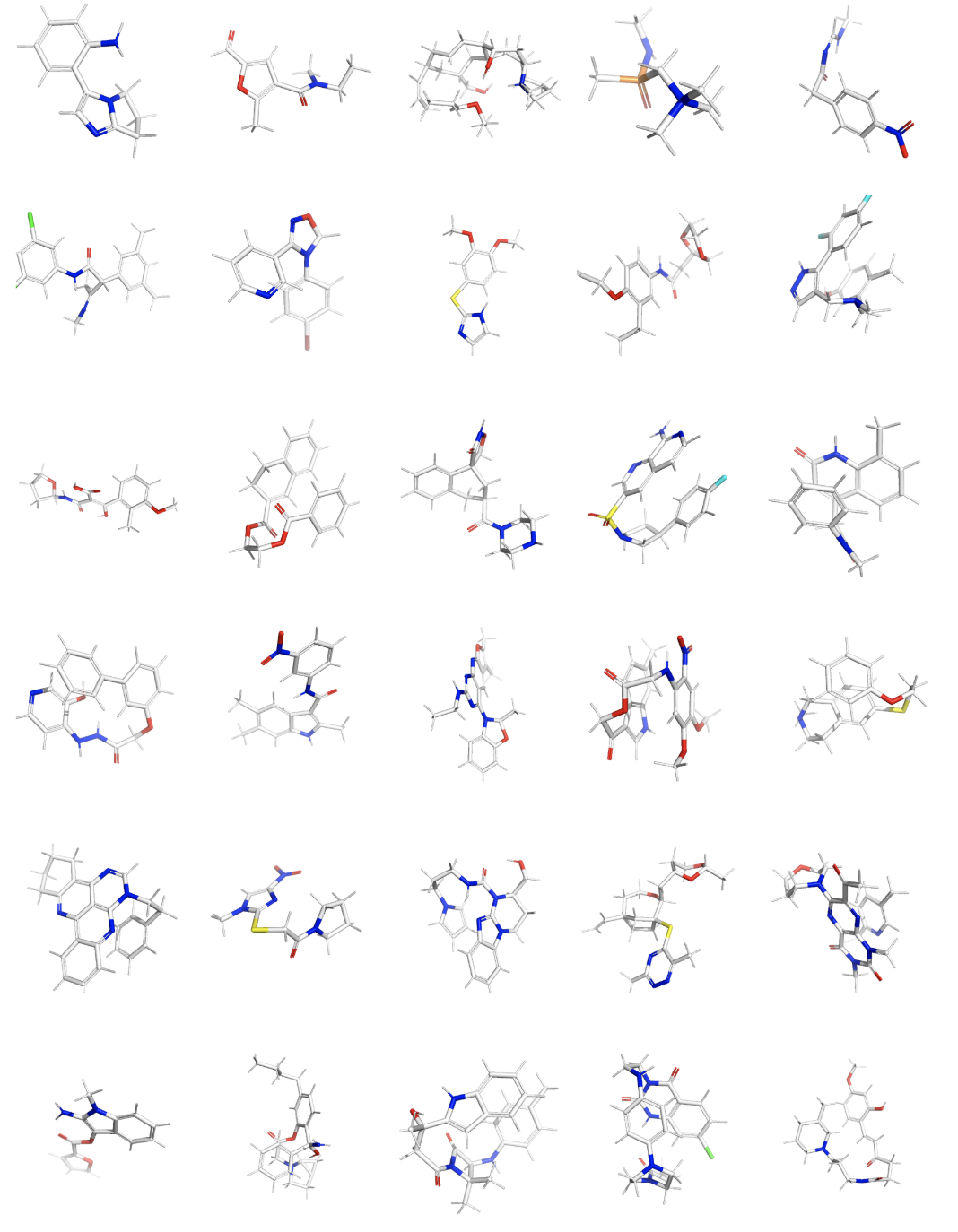}
    \caption{Selected samples on molecule generation.}
    \label{fig:molecule}
\end{figure}

\end{document}